\title{ARM: Augment-REINFORCE-Merge Gradient for Stochastic Binary Networks}
\newcommand{\printfnsymbol}[1]{%
  \textsuperscript{\@fnsymbol{#1}}%
}
\author{Mingzhang Yin %
 \\
Department of Statistics and Data Sciences \\ The University of Texas at Austin\\ Austin, TX 78712 \\
\texttt{mzyin@utexas.edu} \\
\And
Mingyuan Zhou\thanks{Corresponding author. } %
 \\
Department of IROM, McCombs School of Business\\
 The University of Texas at Austin\\
 Austin, TX 78712 \\
\texttt{mingyuan.zhou@mccombs.utexas.edu} 
}
\providecommand{\leftsquigarrow}{%
 \mathrel{\mathpalette\reflect@squig\relax}%
}
\newcommand{\reflect@squig}[2]{%
 \reflectbox{$\m@th#1\rightsquigarrow$}%
}
\DeclareMathOperator*{\argmin}{arg\,min}
\newcommand{\distas}[1]{\mathbin{\overset{#1}{\kern\z@\sim}}}%
\newtheorem{theorem}{Theorem}
\newtheorem{cor}[theorem]{Corollary}
\newtheorem{proposition}[theorem]{Proposition}
\newcommand{\bas}[1]{\begin{align*}#1\end{align*}}
\newcommand{\ba}[1]{\begin{align}#1\end{align}}
\newcommand{\baa}[1]{\begin{equation}\begin{aligned}#1\end{aligned}\end{equation}}
\newcommand{\abs}[1]{\lvert #1\rvert}
\newcommand{\bE}{\mathbb{E}}
\newcommand{\cE}{\mathcal{E}}
\newcommand{\var}{\mbox{var}}
\newcommand{\beqs}{\vspace{0mm}\begin{eqnarray}}
\newcommand{\eeqs}{\vspace{0mm}\end{eqnarray}}
\newcommand{\barr}{\begin{array}}
\newcommand{\earr}{\end{array}}
\newcommand{\bv}[0]{{\boldsymbol{b}}}
\newcommand{\uv}{\boldsymbol{u}}
\newcommand{\wv}{\boldsymbol{w}}
\newcommand{\xv}{\boldsymbol{x}}
\newcommand{\zv}{\boldsymbol{z}}
\newcommand{\cdotv}{\boldsymbol{\cdot}}
\newcommand{\epsilonv}{\boldsymbol{\epsilon}}
\newcommand{\psiv}{\boldsymbol{\psi}}
\newcommand{\thetav}{\boldsymbol{\theta}}
\newcommand{\phiv}{\boldsymbol{\phi}}
\newcommand{\E}{\mathbb{E}}
\newcommand{\given}{\,|\,}
\begin{document}

\maketitle

\begin{abstract} 
To backpropagate the gradients through stochastic binary layers, we propose the augment-REINFORCE-merge (ARM) estimator that is unbiased, exhibits low variance, and has low computational complexity. Exploiting variable augmentation, REINFORCE, and reparameterization, the ARM estimator achieves adaptive variance reduction for Monte Carlo integration by merging two expectations via common random numbers. The variance-reduction mechanism of the ARM estimator can also be attributed to either antithetic sampling in an augmented space, or the use of an optimal anti-symmetric ``self-control'' baseline function together with the REINFORCE estimator in that augmented space. Experimental results show the ARM estimator provides state-of-the-art performance in auto-encoding variational inference and maximum likelihood estimation, for discrete latent variable models with one or multiple stochastic binary layers. Python code for reproducible research is publicly available. 

\end{abstract} 

\section{Introduction}

Given a function 
$f(\zv)$
 of a random variable $\zv=(z_1,\ldots,z_V)^T$, which follows a distribution $q_{\phiv}(\zv)$ parameterized by~$\phiv$, there has been significant recent interest in estimating $\phiv$ to maximize (or minimize) the expectation of $f{}(\zv)$ with respect to $\zv\sim q_{\phiv}(\zv)$, expressed as
\ba{
\mathcal{E}(\phiv)=\textstyle{\int} f{}(\zv) q_{\phiv}(\zv) d\zv = \E_{\zv\sim q_{\phiv}(\zv)} [f{}(\zv)]. \label{eq:E}
}
In particular, this expectation objective appears in both maximizing the evidence lower bound (ELBO) for variational inference \citep{jordan1999introduction,blei2017variational} and approximately maximizing the log marginal likelihood of a hierarchal Bayesian model \citep{bishop1995neural}, two fundamental problems in statistical inference. %
To maximize $\eqref{eq:E}$, if $\nabla _{\zv}f{}(\zv)$ is tractable to compute and $\zv\sim q_{\phiv}(\zv) $ can be 
 generated via reparameterization as
$ 
\zv=\mathcal{T}_{\phiv}(\epsilonv),~\epsilonv\sim p (\epsilonv), 
$
 where $\epsilonv$ are random noises 
 and $\mathcal{T}_{\phiv}(\cdotv)$ denotes a deterministic transform parameterized by $\phiv$, then one may apply
the reparameterization trick \citep{kingma2013auto,rezende2014stochastic} to compute the gradient as
\ba{
\nabla_{\phiv}\mathcal{E}(\phiv) 
=\nabla_{\phiv}\E_{\epsilonv\sim p(\epsilonv)} [ f{}(\mathcal{T}_{\phiv}(\epsilonv)) ]=
\E_{\epsilonv\sim p(\epsilonv)} [ \nabla_{\phiv} f{}(\mathcal{T}_{\phiv}(\epsilonv)) ]
\label{eq:E_score1}.
}
This trick, however, is often inapplicable to discrete random variables, as widely used to construct discrete latent variable models such as sigmoid belief networks \citep{neal1992connectionist,saul1996mean}.

To maximize $\eqref{eq:E}$ for discrete $\zv$, 
using the score function $\nabla_{\phiv} \log q_{\phiv}(\zv) = \nabla_{\phiv} q_{\phiv}(\zv)/q_{\phiv}(\zv)$, one may compute $\nabla_{\phiv} \mathcal{E}(\phiv)$ via REINFORCE \citep{williams1992simple} as 
\ba{
\nabla_{\phiv}\mathcal{E}(\phiv)= \E_{\zv\sim q_{\phiv}(\zv)} [f{}(\zv) \nabla_{\phiv} \log q_{\phiv}(\zv) ]\approx \frac{1}{K}\sum\nolimits_{k=1}^K f{}(\zv^{(k)}) \nabla_{\phiv} \log q_{\phiv}(\zv^{(k)}),\notag 
}
where $\zv^{(k)}\stackrel{iid}\sim q_{\phiv}(\zv) $ are independent, and identically distributed ($iid$). 
This unbiased estimator is also known as (a.k.a.)
 the score-function \citep{fu2006gradient} or likelihood-ratio estimator \citep{glynn1990likelihood}. While it is unbiased and only requires
drawing $iid$ 
random samples from $q_{\phiv}(\zv)$ and computing $\nabla_{\phiv} \log q_{\phiv}(\zv^{(k)})$, its high Monte-Carlo-integration variance often limits its use in practice. 
 Note that if $f(\zv)$ depends on~$\phiv$, then we assume it is true that $\E_{\zv\sim q_{\phiv}(\zv)}[\nabla_{\phiv}f{}(\zv)] = 0$. For example, in variational inference, we need to maximize the ELBO as $\E_{\zv\sim q_{\phiv}(\zv) }[f(\zv)]$, where $f(\zv)=\log [p(\xv\given \zv) p(\zv) / q_{\phiv}(\zv)]$. In this case, although $f(\zv)$ depends on $\phiv$, as $\E_{\zv\sim q_{\phiv}(\zv)}[\nabla_{\phiv} \log q_{\phiv}(\zv)] = \int \nabla_{\phiv} q_{\phiv}(\zv) d\zv =\nabla_{\phiv} \int q_{\phiv}(\zv) d\zv = 0 $, we have $ \E_{\zv\sim q_{\phiv}(\zv)}[\nabla_{\phiv}f{}(\zv)] = 0$.

To address the high-variance issue, one may introduce an appropriate baseline (a.k.a. control variate) to reduce the variance of REINFORCE \citep{paisley2012variational,ranganath2014black, mnih2014neural, 
gu2015muprop,mnih2016variational,
ruiz2016generalized,kucukelbir2017automatic,naesseth2017reparameterization}. Alternatively, one may 
first relax the discrete random variables with continuous ones and then apply the reparameterization trick to estimate the gradients, which reduces the variance of Monte Carlo integration 
at the expense of introducing bias \citep{maddison2016concrete,jang2016categorical}. Combining both REINFORCE and the continuous relaxation of discrete random variables, REBAR of
\citet{tucker2017rebar} and RELAX of
\citet{grathwohl2017backpropagation} both aim to produce a low-variance and unbiased gradient estimator by introducing a continuous relaxation based baseline function, whose parameters, however, need to be estimated at each mini-batch by minimizing the sample variance of the estimator with stochastic gradient descent (SGD).
Estimating the baseline parameters often clearly increases the computation. Moreover, the potential conflict, between {\it minimizing} the sample variance of the gradient estimate and {\it maximizing} the expectation objective, could slow down or even prevent convergence and increase the %
risk of overfitting.
Another interesting variance-control idea applicable to discrete latent variables is using local expectation gradients, which estimates the gradients based on REINFORCE, by performing Monte Carlo integration using a single global sample together with 
exact integration of the local variable for each latent dimension
 \citep{titsias2015local}.

Distinct from the usual idea of introducing baseline functions and optimizing their parameters to reduce the estimation variance of REINFORCE, 
we propose the augment-REINFORCE-merge (ARM) estimator, a novel unbiased and low-variance gradient estimator for binary latent variables that is also simple to implement and has low computational complexity. 
We show by rewriting the expectation with respect to Bernoulli random variables as one with respect to augmented exponential random variables, and then expressing the gradient as an expectation via REINFORCE, one can derive the ARM estimator in the augmented space with the assistance of appropriate reparameterization. In particular, in the augmented space, one can derive the ARM estimator by using either the strategy of sharing common random numbers between two expectations, or the strategy of applying antithetic sampling. Both strategies, as detailedly discussed in \citet{mcbook}, can be used to explain why the ARM estimator is unbiased and could lead to significant variance reduction. Moreover, we show that the ARM estimator can be considered as improving the REINFORCE estimator in an augmented space by introducing an optimal baseline function subject to an anti-symmetric constraint; this baseline function can be considered as a ``self-control'' one, as it exploits the function $f$ itself and correlated random noises for variance reduction, and adds no extra parameters to learn. %
This ``self-control'' feature makes the ARM estimator distinct from both REBAR and RELAX, which rely on minimizing the sample variance of the gradient estimate to optimize the baseline function.

We perform experiments on a representative toy optimization problem and both auto-encoding variational inference and maximum likelihood estimation for discrete latent variable models, with one or multiple binary stochastic layers. Our extensive experiments show that the ARM estimator is unbiased, exhibits low variance, converges fast, has low computation, and provides state-of-the-art 
out-of-sample prediction 
performance for discrete latent variable models, suggesting the effectiveness of using the ARM estimator for gradient backpropagation through stochastic binary layers. Python code for reproducible research is available at \url{https://github.com/mingzhang-yin/ARM-gradient}.

\section{ARM: Augment-REINFORCE-merge estimator} 
In this section, we first present the key theorem of the paper, and then provide its derivation. With this theorem, we summarize ARM gradient ascent for 
 multivariate binary latent variables in Algorithm~\ref{alg:ARM_b}, as shown in Appendix \ref{sec:alg}. Let us denote $\sigma(\phi)=e^{\phi}/(1+e^{\phi})$ as the sigmoid function and $\mathbf{1}_{[\cdotv]}$ as an indicator function that equals to one if the argument is true and zero otherwise. 

\begin{theorem}[ARM\label{cor:ARM b}] For a vector of $V$ binary random variables $\zv=(z_1,\ldots,z_V)^T$, the gradient of 
\ba{\mathcal{E}(\phiv) = \E_{\zv\sim\prod_{v=1}^V\emph{\text{Bernoulli}}(z_v;\sigma(\phi_v))} [f{}(\zv)]
\label{eq:phiv}
}
 with respect to $\phiv=(\phi_1,\ldots,\phi_V)^T$, the logits of the Bernoulli probability parameters, can be expressed as
\baa{
&\nabla_{\phiv}\mathcal{E}(\phiv) 
 = \E_{\uv\sim\prod_{v=1}^V\emph{\text{Uniform}}(u_v;0,1)}\left[ 
 \big(f(\mathbf{1}_{[\uv>\sigma(-{\phiv})]}) -f(\mathbf{1}_{[\uv<\sigma({\phiv})]})\big)
 (\uv-1/2) \right] , 
\label{eq:ARM_vector}
}
where $\mathbf{1}_{[\uv>\sigma(-{\phiv})]} : = \big(\mathbf{1}_{[u_1>\sigma(-\phi_1)]},\ldots,\mathbf{1}_{[u_V>\sigma(-\phi_V)]}\big)^T$.
\end{theorem}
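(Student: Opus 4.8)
The plan is to carry out, in order, the three operations the estimator is named after; the multivariate statement then falls out of a coordinatewise version of the scalar argument. \textbf{Augment.} First I would re-express the product-Bernoulli expectation as an expectation over an auxiliary continuous vector whose law depends on $\phiv$, so that REINFORCE applies. For each coordinate draw independent exponentials $E_{v,1}\sim\text{Exp}(e^{\phi_v})$ and $E_{v,0}\sim\text{Exp}(1)$; since $\Pr(E_{v,1}<E_{v,0})=e^{\phi_v}/(1+e^{\phi_v})=\sigma(\phi_v)$, the binary vector $\mathbf{1}_{[\Ev_1<\Ev_0]}$ has the same law as $\zv$, hence $\mathcal{E}(\phiv)=\E[f(\mathbf{1}_{[\Ev_1<\Ev_0]})]$ with only $\Ev_1$ carrying $\phiv$.

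\textbf{REINFORCE.} Differentiating under the integral and using the score $\nabla_{\phi_v}\log q_{\phi_v}(E_{v,1})=1-e^{\phi_v}E_{v,1}$ of the $\text{Exp}(e^{\phi_v})$ density gives $\nabla_{\phi_v}\mathcal{E}(\phiv)=\E[\,f(\mathbf{1}_{[\Ev_1<\Ev_0]})(1-e^{\phi_v}E_{v,1})\,]$. I would then reparameterize $E_{v',1}=\varepsilon_{v',1}e^{-\phi_{v'}}$ with $\varepsilon_{v',1}\sim\text{Exp}(1)$, pass to $u_{v'}:=E_{v',0}/(E_{v',0}+\varepsilon_{v',1})\sim\text{Uniform}(0,1)$ --- under which $\mathbf{1}_{[E_{v',1}<E_{v',0}]}=\mathbf{1}_{[u_{v'}>\sigma(-\phi_{v'})]}$ and $e^{\phi_v}E_{v,1}=\varepsilon_{v,1}$ --- and Rao-Blackwellize the $\text{Gamma}(2,1)$ magnitudes $E_{v,0}+\varepsilon_{v,1}$, which are independent of all the $u_{v'}$, via $\E[\varepsilon_{v,1}\mid\uv]=2(1-u_v)$. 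This yields the ``augment-REINFORCE'' estimator $\nabla_{\phi_v}\mathcal{E}(\phiv)=\E_{\uv}[\,f(\mathbf{1}_{[\uv>\sigma(-\phiv)]})(2u_v-1)\,]$.

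\textbf{Merge.} Running the same construction on the complementary parameterization obtained by the relabeling $\zv\mapsto\mathbf{1}-\zv$ --- equivalently, applying the previous paragraph to $\E_{\tilde\zv\sim\prod_v\text{Bernoulli}(\sigma(-\phi_v))}[f(\mathbf{1}-\tilde\zv)]$ and then differentiating through $-\phiv$ --- produces a second, equally valid unbiased estimator $\nabla_{\phi_v}\mathcal{E}(\phiv)=\E_{\uv}[\,f(\mathbf{1}_{[\uv<\sigma(\phiv)]})(1-2u_v)\,]$. Averaging the two estimators while drawing a single common $\uv$ for both is the ``merge'' step: it is unbiased (a convex combination of unbiased estimators), and since $1-2u_v=-(2u_v-1)$ the two weights collapse into $u_v-\tfrac12$, which is exactly $\eqref{eq:ARM_vector}$; the shared randomness leaves the mean unchanged and serves only to reduce variance.

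I expect the REINFORCE step to be the main obstacle: one must choose the augmentation so the auxiliary noise genuinely depends on $\phiv$ (otherwise REINFORCE in the augmented space is vacuous), justify interchanging $\nabla_{\phiv}$ with the integral, and carry out the reparameterization plus Rao-Blackwellization carefully enough to reach the clean $(2u_v-1)$ weight rather than an estimator still carrying exponential or Gamma noise. As a self-contained cross-check (and a shorter alternative route), the scalar case $V=1$ can be verified by brute force using $\int_0^a(u-\tfrac12)\,du=-\tfrac12a(1-a)$: one finds $\nabla_\phi\mathcal{E}(\phi)=\sigma(\phi)(1-\sigma(\phi))(f(1)-f(0))$ and that the two terms on the right of $\eqref{eq:ARM_vector}$ evaluate to $\pm\tfrac12\sigma(\phi)(1-\sigma(\phi))(f(1)-f(0))$; the multivariate statement then follows by conditioning on all coordinates of $\uv$ except $u_v$ and applying the antithetic substitution $\uv\mapsto\mathbf{1}-\uv$, which interchanges $\mathbf{1}_{[\uv>\sigma(-\phiv)]}$ and $\mathbf{1}_{[\uv<\sigma(\phiv)]}$ and negates $u_v-\tfrac12$.
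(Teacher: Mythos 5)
Your proposal is correct: the identity you reach after the merge step is exactly \eqref{eq:ARM_vector}, and each intermediate claim (the exponential race probability $\sigma(\phi_v)$, the score $1-e^{\phi_v}E_{v,1}$, the facts that $u_{v}=E_{v,0}/(E_{v,0}+\varepsilon_{v,1})$ is Uniform and independent of the Gamma$(2,1)$ sum, and $\E[\varepsilon_{v,1}\mid \uv]=2(1-u_v)$) checks out. Your route is, in substance, the paper's ``original derivation'' of Appendix \ref{sec:original_drivation} (augment with exponentials, REINFORCE in the augmented space, reparameterize and Rao-Blackwellize to get the AR form, then merge two symmetric unbiased forms under shared uniforms), but it differs from the paper's official proof of the theorem in two ways. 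First, the paper's main-text univariate proof is far more elementary: it computes $\nabla_\phi\mathcal{E}(\phi)=\sigma(\phi)\sigma(-\phi)(f(1)-f(0))$ analytically, rewrites it as the uniform integral \eqref{eq:ARgrad}, and applies antithetic sampling, so no augmentation machinery is logically required. Second, for $V>1$ the paper reduces to the univariate case via the law of total expectation, conditioning on $\zv_{\backslash v}$ and then carefully exchanging expectations using the two distributionally equivalent representations $\zv_{\backslash v}=\mathbf{1}_{[\uv_{\backslash v}<\sigma(\phiv_{\backslash v})]}$ and $\zv_{\backslash v}=\mathbf{1}_{[\uv_{\backslash v}>\sigma(-\phiv_{\backslash v})]}$ (this is the content of \eqref{eq:ARM_vector1}); you instead run the augmentation directly in all $V$ coordinates, obtaining the multivariate AR identity \eqref{eq:ARM_vector1_a} in one pass, and your merge via the relabeling $\zv\mapsto\mathbf{1}-\zv$ is equivalent to the paper's swap-of-exponentials/antithetic step. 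Your version buys a uniform treatment of all coordinates and avoids the conditioning bookkeeping, at the cost of heavier probabilistic input (differentiation under the integral against exponential densities, the Beta--Gamma independence fact); the paper's main-text route is shorter and self-contained. One small caveat: your closing ``cross-check'' sketch, which conditions on $\uv_{\backslash v}$ and invokes the scalar case, glosses over the fact that the two $f$-terms in \eqref{eq:ARM_vector} involve \emph{different} realizations of $\zv_{\backslash v}$ for a common $\uv_{\backslash v}$ -- precisely the point \eqref{eq:ARM_vector1} is written to handle -- but your main three-step argument does not rely on that sketch, so the proof stands.
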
 

\subsection{Univariate ARM estimator}
Below we will first present the ARM estimator for a univariate binary
latent variable ($i.e.$, $V=1$), and then generalize it to a multivariate one ($i.e.$, $V>1$).
 In the univariate case, 
 we need to evaluate the gradient of $\mathcal{E}(\phi)=\E_{z\sim{\text{Bernoulli}}(\sigma(\phi))}[f{}(z)]$ with respect to $\phi$, which has an analytic expression as
 \ba{
\nabla_{\phi} \mathcal{E}(\phi) = \nabla_{\phi}[\sigma(\phi)f(1)+\sigma(-\phi)f(0)] = \sigma(\phi)\sigma(-\phi)[f(1)-f(0)]. \label{eq:truegrad}
 }
Since
$
\textstyle \int_{0}^{\sigma(\phi)} (1-2u) du 
=\sigma(\phi)\sigma(-\phi)$ and $
\textstyle \int_{\sigma(\phi)}^1 (1-2u) du 
=-\sigma(\phi)\sigma(-\phi)
$,
we can rewrite \eqref{eq:truegrad} as
\ba{
\nabla_{\phi} \mathcal{E}(\phi) &= \textstyle \int_{0}^{\sigma(\phi)} f(1)(1-2u) du + \textstyle \int_{\sigma(\phi)}^1 f(0)(1-2u) du = \textstyle\int_{0}^1 f(\mathbf{1}_{[u<\sigma(\phi)]})(1-2u)du\notag\\
& =\E_{u\sim \text{Uniform}(0,1)}[f(\mathbf{1}_{[u<\sigma(\phi)]})(1-2u)]. \label{eq:ARgrad}
}
We refer to \eqref{eq:ARgrad} as the univariate augment-REINFORCE (AR) estimator, as we initially derived it by combining variable augmentation \citep{tanner1987calculation,van2001art}, REINFORCE, and Rao Blackwellization \citep{casella1996rao}; we defer the details to Appendix \ref{sec:original_drivation}.

Applying antithetic sampling \citep{mcbook} to the AR estimator in \eqref{eq:ARgrad}, with $\tilde u=1-u$, we have
\ba{
\nabla_{\phi} \mathcal{E}(\phi) & = \E_{ u\sim\text{Uniform}(0,1)}[f(\mathbf{1}_{[u< \sigma(\phi) ]})(1/2-u)] + \E_{ \tilde u\sim\text{Uniform}(0,1)}[f(\mathbf{1}_{[\tilde u< \sigma(\phi) ]})(1/2-\tilde u)]\notag\\
& = \E_{ u\sim\text{Uniform}(0,1)}[f(\mathbf{1}_{[u< \sigma(\phi) ]})(1/2- u)+f(\mathbf{1}_{[\tilde u< \sigma(\phi) ]})(1/2-\tilde u)]
 \notag\\
&=
\E_{u\sim\text{Uniform}(0,1)}\left[\big(f(\mathbf{1}_{[u>\sigma(-\phi)]}) - f(\mathbf{1}_{[u<\sigma(\phi)]} )\big) (u-1/2) \right ], %
\label{eq:ARMgrad}
}
which provides the proof for 
 Theorem \ref{cor:ARM b} 
 for $V=1$. We refer to \eqref{eq:ARMgrad} as the univariate ARM estimator, as we initially derived it by improving the AR estimator with an additional merge step, which shares random numbers between two expectations to reduce the variance of Monte Carlos integration; we defer the details to Appendix \ref{sec:original_drivation}. 

Note that since 
$
\E_{u\sim \text{Uniform}(0,1)}[f(\mathbf{1}_{[u<\sigma(\phi)]})(1/2-u)] %
 = -\E_{u\sim \text{Uniform}(0,1)}[f(\mathbf{1}_{[u>\sigma(-\phi)]})(1/2-u)] ,
$
we have
$
\E_{u\sim \text{Uniform}(0,1)}\left[\big(f(\mathbf{1}_{[u<\sigma(\phi)]})+f(\mathbf{1}_{[u>\sigma(-\phi)]})\big)(1/2-u)\right] =0,
$
subtracted which from the AR estimator in \eqref{eq:ARgrad} leads to the ARM estimator in \eqref{eq:ARMgrad}. For this reason, we can also consider the ARM estimator as the AR estimator subtracted by a zero-mean baseline function as
 $$b(u) = \big(f(\mathbf{1}_{[u<\sigma(\phi)]})+f(\mathbf{1}_{[u>\sigma(-\phi)]})\big)(1/2-u).$$
 This baseline function is distinct from previously proposed ones in being parameterized by the function $f$ itself over two correlated binary latent variables and satisfying $b(u)=-b(1-u)$. 
From this point of view, the ARM estimator can be considered as a ``self-control'' gradient estimator that exploits the function $f$ itself to control the variance of Monte Carlo integration . 

\subsection{Multivariate generalization via the law of total expectation}\label{sec:ARMvector}

Although the ARM estimator for univariate binary is of little use in practice, as 
the true gradient, shown in \eqref{eq:truegrad}, has an analytic expression, it serves as the foundation for generalizing it to multivariate settings.
Let us denote $(\cdotv)_{\backslash v}$ as a vector whose $v$th element is removed.
For the expectation in \eqref{eq:phiv}, applying the univariate ARM estimator in \eqref{eq:grad_merge_2} together with the law of total expectation, we have
\ba{
\nabla_{\phi_v}\mathcal{E}(\phiv) &=\E_{\zv_{\backslash v}\sim\prod_{j\neq v}{\text{Bernoulli}}(z_j;\sigma(\phi_j))} \{ \nabla_{\phi_v} \E_{z_v\sim{\text{Bernoulli}}(\sigma(\phi_v)}[f(\zv)]\}\notag\\ 
&=\E_{\zv_{\backslash v}\sim\prod_{j\neq v}{\text{Bernoulli}}(z_j;\sigma(\phi_j))}\big \{\E_{u_v\sim {\text{Uniform}}(0,1)}\big[(u_v-1/2) \notag\\
&~~~~~~\times { \big( f(\zv_{\backslash v},z_{v}=\mathbf{1}_{[u_v>\sigma(-{\phi_v})]}) -f(\zv_{\backslash v},z_{v}=\mathbf{1}_{[u_v <\sigma(\phi_v) ]})\big)} \big]\big\} .%
\label{eq:ARM_vector1_0}
}
Since $ {\zv_{\backslash v}\sim\prod_{j\neq v}{\text{Bernoulli}}(z_j;\sigma(\phi_j))}$ can be equivalently generated as $\zv_{\backslash v} = \mathbf{1}_{[\uv_{\backslash v} <\sigma(\phiv_{\backslash v}) ]}$ or as $\zv_{\backslash v} = \mathbf{1}_{[\uv_{\backslash v} >\sigma(-\phiv_{\backslash v}) ]}$, where $\uv_{\backslash v}\sim\prod_{j\neq v}{\text{Uniform}}(u_j;0,1)$,
exchanging the order of the two expectations in \eqref{eq:ARM_vector1_0} and applying reparameterization, we conclude the proof for 
 \eqref{eq:ARM_vector} of 
 Theorem \ref{cor:ARM b} with %
\ba{
 \nabla_{\phi_v}\mathcal{E}(\phiv)&=\E_{u_v\sim {\text{Uniform}}(0,1)}\big\{ (u_v-1/2) ~\E_{\zv_{\backslash v}\sim\prod_{j\neq v}{\text{Bernoulli}}(z_j;\sigma(\phi_j))}\big [ \notag\\
&~~~~~~~~~~~~~~~~~~~{ f(\zv_{\backslash v},z_{v}=\mathbf{1}_{[u_v>\sigma(-{\phi_v})]}) -f(\zv_{\backslash v},z_{v}=\mathbf{1}_{[u_v <\sigma(\phi_v) ]})}\big]\big\} \notag\\
&=\E_{\uv\sim\prod_{v=1}^V{\text{Uniform}}(u_v;0,1)}\big[ (u_v-1/2){ f(\zv_{\backslash v} = \mathbf{1}_{[\uv_{\backslash v} >\sigma(-\phiv_{\backslash v}) ]},z_{v}=\mathbf{1}_{[u_v >\sigma(-\phi_v) ]})}\big]\notag\\
&~~~~~-\E_{\uv\sim\prod_{v=1}^V{\text{Uniform}}(u_v;0,1)}\big[ (u_v-1/2){ f(\zv_{\backslash v} = \mathbf{1}_{[\uv_{\backslash v} <\sigma(\phiv_{\backslash v}) ]},z_{v}=\mathbf{1}_{[u_v <\sigma(\phi_v) ]})}\big]
\notag\\
&= \E_{\uv\sim\prod_{v=1}^V{\text{Uniform}}(u_v;0,1)}\left[{ (u_v-1/2) \big(f(\mathbf{1}_{[\uv>\sigma(-{\phiv})]}) -f(\mathbf{1}_{[\uv <\sigma(\phiv) ]})\big)}\right ] .
\label{eq:ARM_vector1}
}

Alternatively, instead of generalizing the univariate ARM gradient as in \eqref {eq:ARM_vector1_0} and \eqref{eq:ARM_vector1}, we can first do a multivariate generalization of the univariate AR gradient in \eqref{eq:ARgrad} as
\ba{
\nabla_{\phi_v}\mathcal{E}(\phiv)&=\E_{\zv_{\backslash v}\sim\prod_{j\neq v}{\text{Bernoulli}}(z_j;\sigma(\phi_j))} \{ \nabla_{\phi_v} \E_{z_v\sim{\text{Bernoulli}}(\sigma(\phi_v)}[f(\zv)]\}\notag\\
&=\E_{\zv_{\backslash v}\sim\prod_{j\neq v}{\text{Bernoulli}}(z_j;\sigma(\phi_j))}\big \{\E_{u_v\sim {\text{Uniform}}(0,1)}\big[(1-2u_v) { f(\zv_{\backslash v},z_{v}=\mathbf{1}_{[u_v <\sigma(\phi_v) ]})} \big]\big\} \notag\\
& = \E_{\uv\sim\prod_{v=1}^V{\text{Uniform}}(u_v;0,1)}\big[(1-2u_v) { f( \mathbf{1}_{[\uv <\sigma(\phiv) ]})} \big]. %
\label{eq:ARM_vector1_a}
}
The same as the derivation of the univariate ARM estimator, here we can arrive at \eqref{eq:ARM_vector} from \eqref{eq:ARM_vector1_a} by 
either adding an antithetic sampling step, or subtracting the AR estimator by a baseline function as 
\ba{\bv(\uv) = \big(f(\mathbf{1}_{[\uv<\sigma(\phiv)]})+f(\mathbf{1}_{[\uv>\sigma(-\phiv)]})\big)(1/2-\uv),
\label{eq:baseline}
}
 which has zero mean, satisfies $\bv(\uv)=-\bv(1-\uv)$, and is distinct from previously proposed baselines in taking advantage of ``self-control'' for variance reduction and adding no extra parameters to learn. %

\subsection{Effectiveness of ARM for variance reduction}

For the univariate case, we show below that the ARM estimator has smaller worst-case variance than %
 REINFORCE %
 does. The proof is deferred to Appendix \ref{sec:D}. %

\begin{proposition}[Univariate gradient variance]
For the objective function $\E_{z\sim{\emph{\text{Bernoulli}}}(\sigma(\phi))}[f{}(z)]$, with a single Monte-Carlo sample $u\sim \emph{\mbox{Uniform}}(0,1)$ or $z\sim \emph{\mbox{Bernoulli}}(\sigma(\phi))$, the ARM gradient is expressed as $g_{\emph{\text{ARM}}}(u,\phi) =\big( f(\mathbf{1}_{[u>\sigma(-\phi)]}) - f(\mathbf{1}_{[u<\sigma(\phi)]} ) \big)(u-1/2)$,
 and the REINFORCE gradient as $g_\emph{\text{R}}(z,\phi) = f{}(z)\nabla_\phi \log \emph{\mbox{Bernoulli}}(z; \sigma(\phi)) = f(z)(z-\sigma(\phi))$. Assuming $f \geq 0$ (or $f \leq 0$), %
 then
 $\frac{\emph{sup}_\phi \emph{\var}[g_{\emph{\text{ARM}}}(u,\phi) ]}{\emph{sup}_\phi \emph{\var}[g_{\emph{\text{R}}}(u,\phi) ]} \leq \frac{16}{25}(1-2\frac{f(0)}{f(0)+f(1)})^2 \leq \frac{16}{25}$. 
\label{prop:uni-variate}
\end{proposition}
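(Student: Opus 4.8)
Throughout, write $p=\sigma(\phi)$, so that $\sigma(-\phi)=1-p$, and set $\Delta=f(1)-f(0)$. The plan is to obtain exact closed forms for both variances as functions of $\phi$ and of $f(0),f(1)$, to maximize the ARM variance exactly, to lower-bound the REINFORCE variance by its value at $\phi=0$, and then to compare. \emph{Step 1 (closed form for the ARM variance).} For $u\sim\mathrm{Uniform}(0,1)$ the thresholds $p$ and $1-p$ cut $(0,1)$ into two ``tail'' intervals, on which exactly one of $\mathbf{1}_{[u<\sigma(\phi)]}$ and $\mathbf{1}_{[u>\sigma(-\phi)]}$ equals $1$, and a ``middle'' interval of length $|1-2p|$ on which both indicators agree. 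On the middle interval the two arguments of $f$ coincide, so $g_{\text{ARM}}(u,\phi)=0$; on the two tails it equals $\pm\,\Delta\,(u-1/2)$. Integrating over $u$ gives $\E[g_{\text{ARM}}]=\Delta\,p(1-p)$ (a consistency check against \eqref{eq:truegrad}) and, since $(u-1/2)^2$ is symmetric about $u=1/2$, $\E[g_{\text{ARM}}^2]=\Delta^2(1/4-2s^3)/3$ with $s:=|p-1/2|\in[0,1/2]$. Using $p(1-p)=1/4-s^2$ this yields
\[
\var[g_{\text{ARM}}(u,\phi)]=\Delta^2\,h(s),\qquad h(s)=\tfrac{1/4-2s^3}{3}-\big(\tfrac14-s^2\big)^2=\tfrac1{48}+\tfrac{s^2}{2}-\tfrac{2s^3}{3}-s^4 .
\]

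\emph{Step 2 (maximize $h$).} Since $h'(s)=s\,(1-2s-4s^2)$, the unique interior critical point in $[0,1/2]$ is $s^\star=\tfrac{\sqrt5-1}{4}$; comparing with $h(0)=\tfrac1{48}$ and $h(1/2)=0$ shows it is the maximizer. Using $4s^{\star2}+2s^\star-1=0$ to express $s^{\star3}$ and $s^{\star4}$ linearly in $s^\star$, a short computation gives $h(s^\star)=\tfrac{5(1-2s^\star)}{48}=\tfrac{5(3-\sqrt5)}{96}$, hence $\sup_\phi\var[g_{\text{ARM}}(u,\phi)]=\tfrac{5(3-\sqrt5)}{96}\,\Delta^2$.

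\emph{Step 3 (lower-bound the REINFORCE variance).} A two-point computation gives $\E[g_{\text{R}}]=\Delta\,p(1-p)$ and, after collecting terms, $\var[g_{\text{R}}(z,\phi)]=p(1-p)\big((1-p)f(1)+p\,f(0)\big)^2$. Rather than maximize this quartic in $p$ (which would require a cubic root), I would simply evaluate it at $\phi=0$, i.e.\ $p=\tfrac12$, to get $\sup_\phi\var[g_{\text{R}}(z,\phi)]\ge\var[g_{\text{R}}]\big|_{p=1/2}=\tfrac{(f(0)+f(1))^2}{16}$.

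\emph{Step 4 (combine).} Dividing the bound of Step 2 by that of Step 3 and using $\tfrac{f(1)-f(0)}{f(0)+f(1)}=1-\tfrac{2f(0)}{f(0)+f(1)}$,
\[
\frac{\sup_\phi\var[g_{\text{ARM}}]}{\sup_\phi\var[g_{\text{R}}]}\ \le\ \frac{5(3-\sqrt5)/96}{1/16}\cdot\frac{\Delta^2}{(f(0)+f(1))^2}\ =\ \frac{5(3-\sqrt5)}{6}\Big(1-\frac{2f(0)}{f(0)+f(1)}\Big)^2 .
\]
Two elementary facts then close it: $\tfrac{5(3-\sqrt5)}{6}\le\tfrac{16}{25}$ (equivalently $279\le125\sqrt5$, i.e.\ $279^2=77841\le78125$), and $|f(1)-f(0)|\le f(0)+f(1)$ whenever $f\ge0$, so the squared factor is at most $1$. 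The case $f\le0$ reduces to $f\ge0$ by replacing $f$ with $-f$, which negates both $g_{\text{ARM}}$ and $g_{\text{R}}$ and hence changes neither variance nor the right-hand side; the degenerate case $f(0)=f(1)=0$, in which both sides vanish, is excluded. The main obstacle is Step 3: because $\tfrac{5(3-\sqrt5)}{6}\approx0.637$ sits only just below $\tfrac{16}{25}=0.64$, the target inequality has almost no slack, so one must recognize that $\var[g_{\text{R}}]$ evaluated at the single point $\phi=0$ is already a tight enough lower bound — a coarser estimate of the REINFORCE variance would fail — whereas the piecewise bookkeeping of Step 1 and the power reduction of Step 2 are routine.
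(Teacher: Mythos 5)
Your proposal is correct and follows essentially the same route as the paper's proof: an exact computation of the ARM variance as a function of the distance of $\sigma(\phi)$ from $1/2$ (your $s$ is the paper's $t/2$), maximization at the critical point $t=\sigma(|\phi|)-\sigma(-|\phi|)=\frac{\sqrt{5}-1}{2}$, the exact REINFORCE variance $p(1-p)\bigl((1-p)f(1)+p f(0)\bigr)^2$ lower-bounded by its value at $\phi=0$, and the same final combination using $f\ge 0$ (or $f\le 0$). The only difference is cosmetic: you retain the exact maximum $\frac{5(3-\sqrt{5})}{96}\,(f(1)-f(0))^2$ and check $\frac{5(3-\sqrt{5})}{6}\le\frac{16}{25}$ explicitly, whereas the paper rounds the maximum up to $\frac{1}{25}(f(1)-f(0))^2$ before forming the ratio.
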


In the general setting, %
with $\uv^{(1)},\ldots,\uv^{(K)} \stackrel{iid}\sim \prod_{v=1}^V{\text{Uniform}}(0,1)$, we define the ARM estimate of $\nabla_{\phi_v}\mathcal{E}(\phiv)$ with $K$ Monte Carlo samples, 
denoted as $ g_{\text{ARM}_{K},v}$, 
and the AR estimate with $2K$ Monte Carlo samples, denoted as $g_{\text{AR}_{2K},v}$, using
\ba{\textstyle
g_{\text{ARM}_K,v} =\frac{1}{2K}\sum_{k=1}^K (g_v(\uv^{(k)})+g_v(1-{\uv}^{(k)})),~~~
g_{\text{AR}_{2K},v} =\frac{1}{2K}\sum_{k=1}^{2K} g_v(\uv^{(k)}),\label{eq:ARMK}
}
where $g_{v}(\uv^{(k)}) = f(\mathbf{1}_{[\uv^{(k)}< \sigma(\phiv) ]}) (1- 2u_v^{(k)})$. Similar to the analysis in \citet{mcbook}, the amount of variance reduction brought by the ARM estimator can be reflected by
 the ratio as %
$$
\frac{\mbox{var}[g_{\text{ARM}_K,v}]}{\mbox{var}[g_{\text{AR}_{2K},v}]}
 = \frac{\mbox{var} [g_v(\uv)] - \mbox{Cov}(-g_v(\uv),g_v(1-{\uv}))}{\mbox{var} [g_v(\uv)]}= 1 - \rho_v,~~\rho_v = \mbox{Corr}(-g_v(\uv), g_v(1-{\uv})).
$$
Note $-g_{v}(\uv) = f(\mathbf{1}_{[\uv< \sigma(\phiv) ]}) (2u_v - 1)$, $g_{v}( 1-{\uv}) = f(\mathbf{1}_{[\uv > \sigma(-\phiv) ]}) (2u_v-1)$, and $P(\mathbf{1}_{[u_v< \sigma(\phi_v)]}=\mathbf{1}_{[u_v> \sigma(-\phi_v)]})=\sigma(|\phi_v|)-\sigma(-|\phi_v|)$. Therefore a strong positive correlation ($i.e.$, $\rho_v\rightarrow 1$) and hence noticeable variance reduction are likely, especially if 
$\phi_v$ moves far away from zero during training.
Concretely, we have the following proposition.

\begin{proposition}[Variance reduction]
For the ARM estimate $g_{\emph{\text{ARM}}_K,v}$ and AR estimate $g_{\emph{\text{AR}}_{2K},v}$ shown in \eqref{eq:ARMK}, the variance of $g_{\emph{\text{ARM}}_K,v}$ is guaranteed to be 
lower than that of $g_{\emph{\text{AR}}_K,v}$; moreover, if $f\ge 0$ (or $f\le 0$), %
then the variance of $g_{\emph{\text{ARM}}_K,v}$ is guaranteed to be 
lower than that of $g_{\emph{\text{AR}}_{2K},v}$.
\label{prop:arm_ar}
\end{proposition}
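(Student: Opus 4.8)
My plan is to collapse the whole statement to two one-line inequalities about the single covariance $c_v := \mathrm{Cov}\!\big(g_v(\uv),\,g_v(1-\uv)\big)$. First I would note that because $\uv$ and $1-\uv$ have the same law, $g_v(1-\uv)$ is distributed exactly as $g_v(\uv)$, so both have the same variance $\sigma_v^2 := \var[g_v(\uv)]$. Then, using that the $\uv^{(k)}$ are $iid$ and expanding $\var\!\big[g_v(\uv)+g_v(1-\uv)\big] = 2\sigma_v^2 + 2c_v$, the definitions in \eqref{eq:ARMK} give $\var[g_{\text{ARM}_K,v}] = (\sigma_v^2+c_v)/(2K)$, $\var[g_{\text{AR}_K,v}] = \sigma_v^2/K$, and $\var[g_{\text{AR}_{2K},v}] = \sigma_v^2/(2K)$. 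After this reduction the first claim is ``$c_v\le\sigma_v^2$'' and the second is ``$c_v\le 0$.''

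For the first claim, Cauchy--Schwarz immediately yields $c_v \le |c_v|\le \sigma_v^2$, so $\var[g_{\text{ARM}_K,v}] = (\sigma_v^2+c_v)/(2K) \le \sigma_v^2/K = \var[g_{\text{AR}_K,v}]$; this needs nothing about the sign of $f$, and the inequality is strict unless the antithetic pair is perfectly correlated.

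For the second claim I would split $c_v = \E[g_v(\uv)\,g_v(1-\uv)] - \E[g_v(\uv)]\,\E[g_v(1-\uv)]$ and bound each piece. Substituting $\uv\mapsto 1-\uv$ in $g_v(\uv) = f(\mathbf{1}_{[\uv<\sigma(\phiv)]})(1-2u_v)$ gives $g_v(1-\uv) = f(\mathbf{1}_{[\uv>\sigma(-\phiv)]})(2u_v-1)$, hence $g_v(\uv)\,g_v(1-\uv) = -\,f(\mathbf{1}_{[\uv<\sigma(\phiv)]})\,f(\mathbf{1}_{[\uv>\sigma(-\phiv)]})\,(1-2u_v)^2$, which is pointwise nonpositive when $f\ge 0$ (or $f\le 0$), since it is minus a product of two same-sign values of $f$ times a square; therefore $\E[g_v(\uv)\,g_v(1-\uv)]\le 0$. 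For the other piece, $\E[g_v(\uv)] = \nabla_{\phi_v}\mathcal{E}(\phiv)$ by \eqref{eq:ARM_vector1_a}, and $\E[g_v(1-\uv)] = \E[g_v(\uv)]$ by the distributional equality above, so $\E[g_v(\uv)]\,\E[g_v(1-\uv)] = \big(\nabla_{\phi_v}\mathcal{E}(\phiv)\big)^2 \ge 0$. Combining, $c_v \le 0 - \big(\nabla_{\phi_v}\mathcal{E}(\phiv)\big)^2 \le 0$, and hence $\var[g_{\text{ARM}_K,v}] = (\sigma_v^2 + c_v)/(2K) \le \sigma_v^2/(2K) = \var[g_{\text{AR}_{2K},v}]$.

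I do not expect a serious obstacle here: the argument is essentially one variance decomposition plus one sign computation. The only things to be careful about are the sign bookkeeping in the cross-moment identity (which indicator pairs with $1-2u_v$ versus $2u_v-1$ after the antithetic substitution) and recalling that $\E[g_v(\uv)]$ is exactly the AR gradient, so that its square is the term that pushes $c_v$ down to $\le 0$ rather than merely $\le \E[g_v(\uv)\,g_v(1-\uv)]$. Degenerate cases (an a.s.\ constant $g_v$, or an $f$ making some variance vanish) should be flagged so that ``lower'' is read as ``no larger,'' with strictness holding generically.
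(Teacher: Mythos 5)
Your proposal is correct and takes essentially the same route as the paper: the paper also reduces everything to the antithetic pair $g_v(\uv),\,g_v(1-\uv)$ and, for the second claim, derives $\var(g_{\text{ARM}_1,v})-\var(g_{\text{AR}_2,v}) = -2\big(\E_{\uv}[f(\mathbf{1}_{[\uv<\sigma(\phiv)]})(u_v-1/2)]\big)^2 - 2\,\E_{\uv}[f(\mathbf{1}_{[\uv<\sigma(\phiv)]})f(\mathbf{1}_{[\uv>\sigma(-\phiv)]})(u_v-1/2)^2]$, which is exactly your split of the covariance into the nonpositive cross-moment plus the nonnegative squared mean. The only cosmetic difference is in the first claim, where you invoke Cauchy--Schwarz while the paper writes the variance gap explicitly as $-\E_{\uv}\big[\big(f(\mathbf{1}_{[\uv>\sigma(-\phiv)]})+f(\mathbf{1}_{[\uv<\sigma(\phiv)]})\big)^2(u_v-1/2)^2\big]\le 0$, which also identifies the degenerate equality case you flag.
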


We show below that under the anti-symmetric constraint $\bv(\uv)=-\bv(1-\uv)$, which implies that $\E_{\uv\sim\prod_{v=1}^V{\text{Uniform}}(u_v;0,1)}[ \bv(\uv)] $ is a vector of zeros, Equation 
\eqref{eq:baseline} is the optimal baseline function to be subtracted from the AR estimator for variance reduction. The proof is deferred to Appendix \ref{sec:D}.

\begin{proposition} [Optimal anti-symmetric baseline]
For the gradient of $\E_{\zv\sim q_{\phiv}(\zv)} [f{}(\zv)]$, 
the optimal anti-symmetric baseline function to be 
subtracted from the AR estimator $g_{\emph{\text{AR}}}(\uv) = f(\mathbf{1}_{[\uv < \sigma(\phiv) ]})(1-2 \uv)$, which minimizes the variance of Monte Carlo integration, can be expressed as
\ba{
\argmin_{b_v(\uv) \in \mathcal{B}} \emph{\var} [g_{\emph{\text{AR}},v}(\uv) - b_v(\uv)]& = \frac{1}{2}(g_{\emph{\text{AR}},v}(\uv)-g_{\emph{\text{AR}},v}(1-{\uv})),
\label{eq:optimal}
}
where 
 $\mathcal{B} = \{ \bv(\uv): b_v(\uv) = -b_v(1-\uv) \text{ for all }v \}$ is the set of all zero-mean anti-symmetric baseline functions. Note
 the optimal baseline function shown in \eqref{eq:optimal} is exactly the same as \eqref{eq:baseline}, which is subtracted from the AR estimator in \eqref{eq:ARM_vector1_a} to arrive at the ARM estimator in \eqref{eq:ARM_vector}. %
\label{prop:cv}
\end{proposition}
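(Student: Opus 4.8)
The plan is to set up the variance minimization as a constrained quadratic optimization problem over the linear space $\mathcal{B}$ of zero-mean anti-symmetric functions, and exploit the fact that anti-symmetry under $\uv \mapsto 1-\uv$ is itself a linear involution, so that the ambient space of square-integrable functions (with respect to the uniform measure on $[0,1]^V$) splits orthogonally into a symmetric part and an anti-symmetric part. First I would expand, for a fixed coordinate $v$, the objective $\var[g_{\text{AR},v}(\uv) - b_v(\uv)] = \var[g_{\text{AR},v}(\uv)] - 2\,\mathrm{Cov}(g_{\text{AR},v}(\uv), b_v(\uv)) + \var[b_v(\uv)]$, using that $b_v$ has zero mean so $\var[b_v] = \E[b_v^2]$ and $\mathrm{Cov}(g_{\text{AR},v}, b_v) = \E[g_{\text{AR},v} b_v]$ (the $\E[g_{\text{AR},v}]\E[b_v]$ term vanishes). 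Only the last two terms depend on $b_v$, so minimizing the variance is equivalent to minimizing $\E[b_v^2] - 2\E[g_{\text{AR},v} b_v] = \E[(b_v - g_{\text{AR},v})^2] - \E[g_{\text{AR},v}^2]$ over $b_v \in \mathcal{B}$; i.e.\ we are computing the $L^2$-projection of $g_{\text{AR},v}$ onto the subspace $\mathcal{B}$.

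Next I would decompose $g_{\text{AR},v}(\uv) = g^{+}_v(\uv) + g^{-}_v(\uv)$ into its symmetric and anti-symmetric components under $\uv \mapsto 1-\uv$, where $g^{-}_v(\uv) = \tfrac{1}{2}(g_{\text{AR},v}(\uv) - g_{\text{AR},v}(1-\uv))$ and $g^{+}_v(\uv) = \tfrac{1}{2}(g_{\text{AR},v}(\uv) + g_{\text{AR},v}(1-\uv))$. Because $\uv$ and $1-\uv$ have the same (uniform) distribution, $\E[g^{+}_v g^{-}_v] = 0$ — the symmetric and anti-symmetric subspaces are orthogonal in $L^2$ — and more generally any $b_v \in \mathcal{B}$ is orthogonal to every symmetric function, so $\E[g_{\text{AR},v} b_v] = \E[g^{-}_v b_v]$. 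Hence $\E[(b_v - g_{\text{AR},v})^2] = \E[(b_v - g^{-}_v)^2] + \E[(g^{+}_v)^2]$, which is minimized precisely at $b_v = g^{-}_v = \tfrac{1}{2}(g_{\text{AR},v}(\uv) - g_{\text{AR},v}(1-\uv))$, giving \eqref{eq:optimal}. I should also verify the two easy housekeeping claims: that $g^{-}_v$ indeed lies in $\mathcal{B}$ (immediate from its definition, $g^{-}_v(1-\uv) = -g^{-}_v(\uv)$, and zero mean follows since $\E[g_{\text{AR},v}(\uv)] = \E[g_{\text{AR},v}(1-\uv)]$), and that substituting $g_{\text{AR},v}(\uv) = f(\mathbf{1}_{[\uv<\sigma(\phiv)]})(1-2\uv)$ and $g_{\text{AR},v}(1-\uv) = f(\mathbf{1}_{[1-\uv<\sigma(\phiv)]})(1-2(1-\uv)) = f(\mathbf{1}_{[\uv>\sigma(-\phiv)]})(2\uv-1)$ into $\tfrac12(g_{\text{AR},v}(\uv)-g_{\text{AR},v}(1-\uv))$ reproduces $\bv(\uv)$ of \eqref{eq:baseline} componentwise, using $\mathbf{1}_{[1-u_v<\sigma(\phi_v)]} = \mathbf{1}_{[u_v>1-\sigma(\phi_v)]} = \mathbf{1}_{[u_v>\sigma(-\phi_v)]}$.

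I do not anticipate a serious obstacle here; this is essentially the standard Hilbert-space projection argument. The one point that needs a little care is making sure the minimization is over the \emph{joint} distribution of $\uv$ and is done coordinate-by-coordinate consistently — the definition of $\mathcal{B}$ quantifies anti-symmetry "for all $v$", but since the variance of the $v$th component of the estimator only involves $b_v$, and the $L^2$ inner product is with respect to the full product measure on $[0,1]^V$, the per-coordinate projection argument goes through unchanged and the orthogonality $\E[g^{+}_v b_v]=0$ still holds because the map $\uv\mapsto 1-\uv$ flips every coordinate simultaneously and preserves the product uniform measure. A secondary subtlety is that $g_{\text{AR},v}$ depends on $\phiv$, but $\phiv$ is held fixed throughout the optimization, so this only means the optimal baseline is $\phiv$-dependent, which is expected and matches the statement.
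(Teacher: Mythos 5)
Your proof is correct, and it takes a genuinely different route from the paper's. The paper sets this up as a constrained variational problem: it introduces a Lagrange multiplier function $\lambda_v(\uv)$ for the constraint $b_v(\uv)+b_v(1-\uv)=0$, takes functional derivatives of $-2\E[g_{\text{AR},v}b_v]+\E[b_v^2]+\int\lambda_v(\uv)(b_v(\uv)+b_v(1-\uv))\,d\uv$, writes the resulting stationarity condition at both $\uv$ and $1-\uv$, and solves the pair of equations using $p(\uv)=p(1-\uv)$ to obtain $b_v^*(\uv)=\tfrac12(g_{\text{AR},v}(\uv)-g_{\text{AR},v}(1-\uv))$. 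You instead observe that minimizing $\E[b_v^2]-2\E[g_{\text{AR},v}b_v]$ over $\mathcal{B}$ is the $L^2$ projection of $g_{\text{AR},v}$ onto the anti-symmetric subspace of the involution $\uv\mapsto 1-\uv$, and that this subspace is orthogonal to the symmetric one because the involution preserves the product uniform measure — so the minimizer is simply the anti-symmetric component $g^-_v$. The same symmetry fact ($\uv\stackrel{d}{=}1-\uv$) drives both arguments, but your projection route is shorter, yields uniqueness and global optimality for free (the paper's first-order condition identifies only a stationary point, though convexity rescues it), and makes structurally transparent why the residual $g_{\text{AR},v}-b_v^*=\tfrac12(g_{\text{AR},v}(\uv)+g_{\text{AR},v}(1-\uv))$ is exactly the antithetic average, i.e.\ the ARM estimator. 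Your closing verification that $g^-_v$ reproduces \eqref{eq:baseline} via $\mathbf{1}_{[1-u_v<\sigma(\phi_v)]}=\mathbf{1}_{[u_v>\sigma(-\phi_v)]}$ matches the paper's final identification and is carried out correctly.
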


\begin{cor}[Lower variance than constant baseline]
The optimal anti-symmetric baseline function for the AR estimator, as shown in \eqref{eq:optimal} and also in \eqref{eq:baseline}, leads to lower estimation variance than any constant based baseline function as $b_v(\uv)=c_v(1/2-u_v)$, where $c_v$ is a dimension-specific constant whose value can be optimized for variance reduction. %
\label{prop:cv1}
\end{cor}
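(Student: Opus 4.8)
The plan is to observe that every constant-based baseline $b_v(\uv)=c_v(1/2-u_v)$ is itself a zero-mean anti-symmetric baseline, i.e.\ a member of $\mathcal{B}$, so that the corollary follows at once from the optimality over $\mathcal{B}$ already established in Proposition~\ref{prop:cv}. In other words, the constant-based baselines form a (one-dimensional) sub-family of the class over which $b_v^{\star}$ of \eqref{eq:optimal} is variance-optimal, and one cannot do better on a sub-family than on the full family.

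First I would verify membership. For any dimension-specific constant $c_v\in\mathbb{R}$, the function $b_v(\uv)=c_v(1/2-u_v)$ has zero mean, since $\E_{u_v\sim\text{Uniform}(0,1)}[1/2-u_v]=0$, and is anti-symmetric, since $b_v(1-\uv)=c_v\big(1/2-(1-u_v)\big)=c_v(u_v-1/2)=-b_v(\uv)$; hence $\{\,c_v(1/2-u_v):c_v\in\mathbb{R}\,\}\subseteq\mathcal{B}$ for each $v$. By Proposition~\ref{prop:cv}, the baseline $b_v^{\star}(\uv)=\frac{1}{2}\big(g_{\text{AR},v}(\uv)-g_{\text{AR},v}(1-\uv)\big)$, which equals \eqref{eq:baseline}, attains $\min_{b_v\in\mathcal{B}}\var[g_{\text{AR},v}(\uv)-b_v(\uv)]$. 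Since every constant-based choice is a feasible competitor in this minimization,
\ba{
\var\!\left[g_{\text{AR},v}(\uv)-b_v^{\star}(\uv)\right]\;\le\;\var\!\left[g_{\text{AR},v}(\uv)-c_v(1/2-u_v)\right]\qquad\text{for all }c_v\in\mathbb{R},
}
and in particular for the value of $c_v$ that one would optimize for variance reduction; this is exactly the assertion of the corollary.

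To sharpen the $\le$ into a strict inequality away from degenerate cases, I would invoke the Pythagorean (nested-projection) identity for variance-minimizing control variates: writing $c_v^{\star}$ for the variance-optimal constant, $\var[g_{\text{AR},v}(\uv)-c_v^{\star}(1/2-u_v)]=\var[g_{\text{AR},v}(\uv)-b_v^{\star}(\uv)]+\norm{b_v^{\star}(\uv)-c_v^{\star}(1/2-u_v)}^2$ in the relevant $L^2$ inner product, because $b_v^{\star}$ and $c_v^{\star}(1/2-u_v)$ are the orthogonal projections of $g_{\text{AR},v}$ onto the space of zero-mean anti-symmetric baselines and onto its subspace spanned by $1/2-u_v$, respectively. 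Hence equality forces $b_v^{\star}$ to be constant-based, i.e.\ forces $f(\mathbf{1}_{[\uv<\sigma(\phiv)]})+f(\mathbf{1}_{[\uv>\sigma(-\phiv)]})$ to be almost surely independent of $u_v$; excluding this case gives strict reduction. The step requiring the most care is the membership check in the previous paragraph; everything else is a one-line consequence of Proposition~\ref{prop:cv} together with a standard projection argument, so I do not anticipate a substantive obstacle. (A self-contained alternative — expanding $\var[g_{\text{AR},v}(\uv)-c_v(1/2-u_v)]$ as a quadratic in $c_v$, minimizing over $c_v$, and comparing the result with the variance under $b_v^{\star}$ — reaches the same conclusion but merely re-does the computation behind Proposition~\ref{prop:cv}.)
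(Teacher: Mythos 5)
Your proposal is correct and follows essentially the same route as the paper: the paper's proof likewise observes that $b_v(\uv)=c_v(1/2-u_v)$ satisfies the anti-symmetry condition and so lies in $\mathcal{B}$, making the corollary an immediate consequence of Proposition~\ref{prop:cv}, and your ``self-contained alternative'' of expanding the variance in $c_v$ is exactly the paper's second, computational argument, which exhibits the variance gap as $\E_{\uv}\big[\big(f(\mathbf{1}_{[\uv>\sigma(-\phiv)]})+f(\mathbf{1}_{[\uv<\sigma(\phiv)]})-2c_v\big)^2(u_v-1/2)^2\big]\ge 0$. Your added projection/strictness discussion is a harmless refinement beyond what the paper claims.
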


\section{Backpropagation %
through discrete stochastic layers} %
\label{Sec:3}
A latent variable model with multiple stochastic hidden layers can be constructed as 
\ba{
\xv\sim p_{\thetav_0}(\xv\given \bv_1),~\bv_1\sim p_{\thetav_1}(\bv_1\given \bv_2),\ldots, \bv_{t}\sim p_{\thetav_{t}}(\bv_{t}\given \bv_{t+1}),\ldots, \bv_T\sim p_{\thetav_{T}}(\bv_T),
}
whose joint likelihood given the distribution parameters $\thetav_{0:T}=\{\thetav_0,\ldots,\thetav_T\}$ is expressed~as
\baa{
&p(\xv,\bv_{1:T}\given \thetav_{0:T})=p_{\thetav_0}(\xv\given \bv_1)
\Big[ \prod\nolimits_{t=1}^{T-1}p_{\thetav_{t}}(\bv_{t}\given\bv_{t+1}) \Big] p_{\thetav_T}(\bv_T). \label{eq:Encoder}
}
In comparison to deterministic feedforward neural networks, stochastic ones %
can represent complex distributions and show natural resistance to overfitting \citep{neal1992connectionist,saul1996mean,tang2013learning,raiko2014techniques,gu2015muprop,tang2013learning}. However, the training of the network, %
especially if there are stochastic discrete layers, %
is often much more challenging. %
Below we show for both auto-encoding variational inference and maximum likelihood estimation, how to apply the ARM estimator for gradient backpropagation %
in stochastic %
binary~networks.

\subsection{ARM variational auto-encoder}
For auto-encoding variational inference \citep{kingma2013auto,rezende2014stochastic}, we construct a variational distribution as
\ba{
q_{ \wv_{1:T}}(\bv_{1:T}\given \xv) = q_{ \wv_1}(\bv_1\given \xv)\Big[\prod\nolimits_{t=1}^{T-1}q_{\wv_{t+1}}(\bv_{t+1} \given \bv_{t})\Big], \label{eq:VI_dist}
}
with which the ELBO can be expressed as
\baa{
&\mathcal{E}(\wv_{1:T}) %
 = \E_{\bv_{1:T}\sim q_{ \wv_{1:T}}(\bv_{1:T}\given \xv)}\left[f(\bv_{1:T})\right],\text{ where}%
 \\
&f(\bv_{1:T})= \log p_{\thetav_0}(\xv\given \bv_1)+\log p_{\thetav_{1:T}}(\bv_{1:T}) - \log q_{ \wv_{1:T}}(\bv_{1:T}\given \xv). \label{eq:VI}
}

\begin{proposition}[ARM backpropagation] %
\label{prop:multi}
For a stochastic binary network with $T$ binary stochastic hidden layers, constructing a variational auto-encoder (VAE) defined with $\bv_0=\xv$ and %
\ba{
q_{\wv_t}(\bv_{t}\given \bv_{t-1}) = \emph{\mbox{Bernoulli}}(\bv_t;\sigma(\mathcal{T}_{\wv_t}(\bv_{t-1}) ))
}
for $t=1,\ldots,T$,
the gradient of the ELBO with respect to $\wv_t$ can be expressed as
\ba{
&\nabla_{\wv_t}\mathcal{E}(\wv_{1:T}) %
 =\E_{q(\bv_{1:t-1})} \left[\E_{\uv_t\sim\emph{\text{Uniform}(0,1)}}[f_{\Delta}(\uv_t,\mathcal{T}_{\wv_t}(\bv_{t-1}) , \bv_{1:t-1} )(\uv_t-1/2) ] \nabla_{\wv_t}\mathcal{T}_{\wv_t}(\bv_{t-1})\right], %
 \notag
 \\
 &\text{where }f_{\Delta}(\uv_t,\mathcal{T}_{\wv_t}(\bv_{t-1}) , \bv_{1:t-1}) = \E_{\bv_{t+1:T}\sim q(\bv_{t+1:T}\given \bv_t),~\bv_t = \mathbf{1}_{[\uv_t>\sigma(-\mathcal{T}_{\wv_t}(\bv_{t-1}))]} } [f(\bv_{1:T})]\notag
 \\
&~~~~~~~~~~~~~~~~~~~~~~~~~~~~~~~~~~~~~~~~~~~~~~~~~~~~~~~%
- \E_{\bv_{t+1:T}\sim q(\bv_{t+1:T}\given \bv_t),~\bv_t = \mathbf{1}_{[\uv_t<\sigma(\mathcal{T}_{\wv_t}(\bv_{t-1}))]} } [f(\bv_{1:T})].
\label{Eq: VAE}
}

\end{proposition}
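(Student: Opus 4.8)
The plan is to express $\nabla_{\wv_t}\mathcal{E}(\wv_{1:T})$, for each layer $t$, as a single application of the multivariate ARM identity of Theorem~\ref{cor:ARM b}, after (i) isolating the one factor of the variational joint that carries $\wv_t$ and (ii) folding everything downstream of layer $t$ into a deterministic function of the binary vector $\bv_t$. Concretely, I would first use the factorization \eqref{eq:VI_dist} to split the ELBO $\mathcal{E}(\wv_{1:T})=\E_{q_{\wv_{1:T}}(\bv_{1:T}\given\xv)}[f(\bv_{1:T})]$ of \eqref{eq:VI} along the chain as
\ba{
\mathcal{E}(\wv_{1:T}) = \E_{q(\bv_{1:t-1})}\,\E_{\bv_t\sim q_{\wv_t}(\bv_t\given\bv_{t-1})}\,\E_{\bv_{t+1:T}\sim q(\bv_{t+1:T}\given\bv_t)}\!\big[f(\bv_{1:T})\big] \notag
}
with $\bv_0=\xv$. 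For the gradient with respect to $\wv_t$, the outer law of $\bv_{1:t-1}$ (governed by $\wv_{1:t-1}$) and the conditional law of $\bv_{t+1:T}$ given $\bv_t$ (governed by $\wv_{t+1:T}$) are inert, so after interchanging $\nabla_{\wv_t}$ with the outer expectation it remains to differentiate $\E_{\bv_t\sim q_{\wv_t}(\bv_t\given\bv_{t-1})}[F(\bv_t)]$ with $F(\bv_t):=\E_{\bv_{t+1:T}\sim q(\bv_{t+1:T}\given\bv_t)}[f(\bv_{1:T})]$.

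Second, I would handle the fact that $f(\bv_{1:T})$, and hence $F$, still depends on $\wv_t$ through the single entropy term $-\log q_{\wv_t}(\bv_t\given\bv_{t-1})$ inside $-\log q_{\wv_{1:T}}(\bv_{1:T}\given\xv)$. Writing $f=f_0-\log q_{\wv_t}(\bv_t\given\bv_{t-1})$ with $f_0$ free of $\wv_t$, the ``direct'' gradient contribution equals $\E_{\bv_t\sim q_{\wv_t}(\cdot\given\bv_{t-1})}[\nabla_{\wv_t}(-\log q_{\wv_t}(\bv_t\given\bv_{t-1}))]=0$ by the score-function identity (the same observation used in the introduction to conclude $\E_{\zv\sim q_{\phiv}(\zv)}[\nabla_{\phiv}f(\zv)]=0$ in the variational setting), which holds conditionally on $\bv_{t-1}$ and does not involve $\bv_{t+1:T}$. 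Hence $F$ may be treated as a fixed, $\wv_t$-free function of the binary vector $\bv_t$ when differentiating.

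Third, writing $\phiv_t=\mathcal{T}_{\wv_t}(\bv_{t-1})$ so that $\bv_t\sim\prod_v\text{Bernoulli}(\sigma(\phi_{t,v}))$, I would apply Theorem~\ref{cor:ARM b} (in the form \eqref{eq:ARM_vector}) with $f\leftarrow F$ to get $\nabla_{\phiv_t}\E_{\bv_t}[F(\bv_t)] = \E_{\uv_t\sim\text{Uniform}(0,1)}\big[(F(\mathbf{1}_{[\uv_t>\sigma(-\phiv_t)]})-F(\mathbf{1}_{[\uv_t<\sigma(\phiv_t)]}))(\uv_t-1/2)\big]$. Substituting back $F(\bv_t)=\E_{\bv_{t+1:T}\sim q(\bv_{t+1:T}\given\bv_t)}[f(\bv_{1:T})]$ turns the scalar difference of $F$-values into exactly $f_{\Delta}(\uv_t,\mathcal{T}_{\wv_t}(\bv_{t-1}),\bv_{1:t-1})$ of \eqref{Eq: VAE}. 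The chain rule through $\mathcal{T}_{\wv_t}$ then contracts this $V_t$-vector gradient against the Jacobian $\nabla_{\wv_t}\mathcal{T}_{\wv_t}(\bv_{t-1})$ in the ordering of \eqref{Eq: VAE}, and restoring the outer expectation $\E_{q(\bv_{1:t-1})}$ (which is empty for $t=1$, leaving only $\bv_0=\xv$) yields the claimed identity.

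The main obstacle is the bookkeeping in the second step: one must verify that the only $\wv_t$-bearing term of $f$ is $-\log q_{\wv_t}(\bv_t\given\bv_{t-1})$ (all $\log p_{\thetav}$ factors and the other $\log q_{\wv_s}$, $s\neq t$, being inert), that the score-function identity may be invoked at exactly this point of the nested expectation, and that the usual regularity conditions permit exchanging $\nabla_{\wv_t}$ with both the outer and the inner Bernoulli expectations. Once $F$ is legitimately regarded as $\wv_t$-free, the ARM substitution and the chain-rule assembly are routine.
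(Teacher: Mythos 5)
Your proposal is correct and takes essentially the same route as the paper's proof: write $\mathcal{E}(\wv_{1:T})=\E_{q(\bv_{1:t-1})}\E_{q(\bv_t\given\bv_{t-1})}\E_{q(\bv_{t+1:T}\given\bv_t)}[f(\bv_{1:T})]$ by the law of total expectation, apply Theorem~\ref{cor:ARM b} to the layer-$t$ Bernoulli expectation with the downstream conditional expectation folded into the function, and contract against $\nabla_{\wv_t}\mathcal{T}_{\wv_t}(\bv_{t-1})$ by the chain rule (the paper merely writes this out separately for $t=1$, $2\le t\le T-1$, and $t=T$). Your explicit verification that the direct $\wv_t$-dependence of $f$ through $-\log q_{\wv_t}(\bv_t\given\bv_{t-1})$ contributes zero by the score-function identity is handled in the paper only by the remark in the introduction, but it is the same argument.
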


The gradient presented in \eqref{Eq: VAE} can be estimated with a single Monte Carlo sample as %
\ba{
\hat{f}_{\Delta}(\uv_t,\mathcal{T}_{\wv_t}(\bv_{t-1}) , \bv_{1:t-1})=
\begin{cases}
 0, & \text{if } \bv_t^{(1)} = \bv_t^{(2)},\\
 f(\bv_{1:t-1},\bv_{t:T}^{(1)})-f(\bv_{1:t-1},\bv_{t:T}^{(2)}), & \text{otherwise}, \\
 \end{cases}
}
where
$
\bv_t^{(1)} = \mathbf{1}_{[\uv_t>\sigma(-\mathcal{T}_{\wv_t}(\bv_{t-1}))]},~\bv_{t+1:T}^{(1)}\sim q(\bv_{t+1:T}\given \bv_t^{(1)}),~
\bv_t^{(2)} = \mathbf{1}_{[\uv_t<\sigma(\mathcal{T}_{\wv_t}(\bv_{t-1}))]}$, and $\bv_{t+1:T}^{(2)}\sim q(\bv_{t+1:T}\given \bv_t^{(2)}).
$
The proof of Proposition \ref{prop:multi} is provided in Appendix \ref{sec:D}. Suppose the computation complexity of vanilla REINFORCE for a stochastic hidden layer is $\mathcal{O}(1)$, which involves a single evaluation of the function $f$ and gradient backpropagation as $\nabla_{\wv_t}\mathcal{T}_{\wv_t}(\bv_{t-1})$, then for a $T$-stochastic-hidden-layer network, the computation complexity of vanilla REINFORCE is $\mathcal{O}(T)$. By contrast, if evaluating $f$ is much less expensive in computation than gradient backpropagation, then the ARM estimator %
also has $\mathcal{O}(T)$ complexity, whereas if evaluating $f$ dominates gradient backpropagation in computation, then its worst-case complexity %
is $\mathcal{O}(2T)$. 

\subsection{ARM maximum likelihood estimation}
For maximum likelihood estimation, the log marginal likelihood can be expressed as
\ba{
\log p_{\thetav_{0:T}}(\xv)&=\log \E_{\bv_{1:T}\sim p_{\thetav_{1:T}}(\bv_{1:T})}[p_{\thetav_0}(\xv\given \bv_1)] \notag\\
&\ge \mathcal{E}(\thetav_{1:T})=\E_{\bv_{1:T}\sim p_{\thetav_{1:T}}(\bv_{1:T})}[ \log p_{\thetav_0}(\xv\given \bv_1)]. \label{eq:ML}
}
Generalizing Proposition \ref{prop:multi} leads to the following proposition.%

\begin{proposition}\label{prop: mle}
 For a stochastic binary network defined as
\ba{
p_{\thetav_{t}}(\bv_{t}\given\bv_{t+1}) = \emph{\mbox{Bernoulli}}(\bv_{t};\sigma(\mathcal{T}_{\thetav_t}(\bv_{t+1}))),
}
the gradient of the lower bound in \eqref{eq:ML}
 with respect to $\thetav_t$ can be expressed as
\baa{
&\nabla_{\thetav_t}\mathcal{E}(\thetav_{1:T})%
 =\E_{p(\bv_{t+1:T})} \left[\E_{\uv_t\sim\emph{\text{Uniform}(0,1)}}[f_{\Delta}(\uv_t,\mathcal{T}_{\thetav_t}(\bv_{t+1}) , \bv_{t+1:T} )(\uv_t-1/2) ] \nabla_{\thetav_t}\mathcal{T}_{\thetav_t}(\bv_{t+1})\right], \notag\\
&
\text{where } f_{\Delta}(\uv_t,\mathcal{T}_{\thetav_t}(\bv_{t+1}) , \bv_{t+1:T}) = \E_{\bv_{1:t-1}\sim p(\bv_{1:t-1}\given \bv_{t}),~\bv_t = \mathbf{1}_{[\uv_t>\sigma(-\mathcal{T}_{\thetav_t}(\bv_{t+1}))]}) } [\log p_{\thetav_0}(\xv\given \bv_1)]\notag\\
&~~~~~~~~~~~~~~~~~~~~~~~~~~~~~~~~~~~~~~~~~%
- \E_{\bv_{1:t-1}\sim p(\bv_{1:t-1}\given \bv_{t}),~\bv_t = \mathbf{1}_{[\uv_t<\sigma(\mathcal{T}_{\thetav_t}(\bv_{t+1}))]}) } [\log p_{\thetav_0}(\xv\given \bv_1)].
}
\end{proposition}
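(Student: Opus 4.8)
The plan is to follow the same route as the proof of Proposition~\ref{prop:multi}, but with the ``upstream'' and ``downstream'' layers exchanged (because the generative chain flows $\bv_T\to\cdots\to\bv_1\to\xv$, opposite to the inference chain), and with the integrand taken to be $\log p_{\thetav_0}(\xv\given\bv_1)$ rather than the full $f(\bv_{1:T})$. First I would isolate the $\thetav_t$-dependence in the joint \eqref{eq:Encoder}: among the factors $p_{\thetav_T}(\bv_T)$ and $p_{\thetav_s}(\bv_s\given\bv_{s+1})$, only $p_{\thetav_t}(\bv_t\given\bv_{t+1})$ involves $\thetav_t$. Grouping the chain into the block $\bv_{t+1:T}$ (drawn before layer $t$), the layer $\bv_t$, and the block $\bv_{1:t-1}$ (drawn after layer $t$), the Markov property gives $p_{\thetav_{1:T}}(\bv_{1:T})=p(\bv_{t+1:T})\,p_{\thetav_t}(\bv_t\given\bv_{t+1})\,p(\bv_{1:t-1}\given\bv_t)$, so the law of total expectation yields
\[
\nabla_{\thetav_t}\mathcal{E}(\thetav_{1:T})
=\E_{p(\bv_{t+1:T})}\Big[\nabla_{\thetav_t}\,\E_{\bv_t\sim p_{\thetav_t}(\bv_t\given\bv_{t+1})}\big[F(\bv_t)\big]\Big],\qquad F(\bv_t):=\E_{\bv_{1:t-1}\sim p(\bv_{1:t-1}\given\bv_t)}[\log p_{\thetav_0}(\xv\given\bv_1)].
\]
The outer distribution $p(\bv_{t+1:T})$ is free of $\thetav_t$, so $\nabla_{\thetav_t}$ can be moved inside under the usual regularity conditions; and by $\bv_{1:t-1}\perp\bv_{t+1:T}\mid\bv_t$ the conditional $p(\bv_{1:t-1}\given\bv_t)$ depends only on $\thetav_{1:t-1}$, so $F$ is a deterministic function of $\bv_t$ that is held fixed when differentiating in $\thetav_t$ (no $\E[\nabla_{\phiv}f]=0$ caveat is needed here, unlike in the VAE case).

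Next I would apply the multivariate ARM identity \eqref{eq:ARM_vector} of Theorem~\ref{cor:ARM b}, with $\bv_t$ in the role of the binary vector, $F$ the target function, and Bernoulli logits $\phiv=\mathcal{T}_{\thetav_t}(\bv_{t+1})$, to obtain
\[
\nabla_{\phiv}\,\E_{\bv_t\sim\prod_v\text{Bernoulli}(\sigma(\phi_v))}[F(\bv_t)]
=\E_{\uv_t\sim\prod_v\text{Uniform}(0,1)}\big[\big(F(\mathbf{1}_{[\uv_t>\sigma(-\phiv)]})-F(\mathbf{1}_{[\uv_t<\sigma(\phiv)]})\big)(\uv_t-1/2)\big].
\]
Since the inner expectation depends on $\thetav_t$ only through the deterministic map $\thetav_t\mapsto\phiv=\mathcal{T}_{\thetav_t}(\bv_{t+1})$, the chain rule contributes the Jacobian factor $\nabla_{\thetav_t}\mathcal{T}_{\thetav_t}(\bv_{t+1})$, which is deterministic given $\bv_{t+1}$ and can be pulled out of the $\uv_t$-expectation. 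Writing out $F$ at the two indicator vectors $\mathbf{1}_{[\uv_t>\sigma(-\mathcal{T}_{\thetav_t}(\bv_{t+1}))]}$ and $\mathbf{1}_{[\uv_t<\sigma(\mathcal{T}_{\thetav_t}(\bv_{t+1}))]}$ reproduces exactly $f_{\Delta}(\uv_t,\mathcal{T}_{\thetav_t}(\bv_{t+1}),\bv_{t+1:T})$ as defined in the statement, and recombining the two nested expectations gives the claimed formula; the single-sample estimator then follows by the same argument as for Proposition~\ref{prop:multi}.

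I expect the only delicate points to be bookkeeping ones: correctly matching which generative factor carries $\thetav_t$; verifying the conditional independence so that $F$ genuinely depends on $\bv_t$ alone (the $\bv_{t+1:T}$ argument shown in $f_\Delta$ being a harmless label inherited from the logit input $\mathcal{T}_{\thetav_t}(\bv_{t+1})$); and justifying the interchange of $\nabla_{\thetav_t}$ with the outer expectation. All of the substantive content — the ARM identity itself — is already supplied by Theorem~\ref{cor:ARM b}, so there is no genuinely hard analytic step; the proof is structurally the mirror image of that of Proposition~\ref{prop:multi}, and the main thing to watch is that the upstream/downstream swap is applied consistently throughout.
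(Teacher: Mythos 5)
Your proposal is correct and follows essentially the same route as the paper: the paper proves Proposition~\ref{prop:multi} by combining the law of total expectation with the ARM identity of Theorem~\ref{cor:ARM b} layer by layer, and states Proposition~\ref{prop: mle} as the direct generalization, which is exactly the mirrored (generative-direction) argument you give, with the correct observation that here $F$ is free of $\thetav_t$ so no score-function cancellation is needed. The only cosmetic difference is that the paper's proof of Proposition~\ref{prop:multi} treats the cases $t=1$, $2\le t\le T-1$, and $t=T$ separately, whereas you handle them uniformly.
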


\section{Experimental results}

To illustrate the working mechanism of the ARM estimator, related to \citet{tucker2017rebar} and \citet{grathwohl2017backpropagation}, we consider %
learning $\phi$ %
to maximize %
 $$\mathcal{E}(\phi)=\bE_{z\sim\text{Bernoulli}(\sigma(\phi)) }[(z-p_0)^2], \text{ where }p_0\in\{0.49, 0.499, 0.501, 0.51\}.$$ %
 The optimal solution is $\sigma(\phi) =\mathbf{1}_{[p_0<0.5]}$. %
 The closer $p_0$ is to 0.5, the more challenging the optimization becomes. %
 We compare both the AR and ARM estimators
to the true gradient as 
\ba{
g_{\phi} = (1-2p_0) \sigma(\phi)(1- \sigma(\phi))\label{eq:toytrue}
}
and three previously proposed unbiased estimators, including REINFORCE, REBAR \citep{tucker2017rebar}, and RELAX \citep{grathwohl2017backpropagation}. Since RELAX is closely related to REBAR in introducing stochastically estimated control variates %
 to improve REINFORCE, and clearly outperforms REBAR in our experiments for this toy problem (as also shown in \citet{grathwohl2017backpropagation} for $p_0=0.49$), we omit the results of REBAR for brevity. 
With a single random sample $u\sim\mbox{Uniform}(0,1)$ for Monte Carlo integration,
the REINFORCE and AR gradients can be expressed~as
 \bas{
 g_{\phi, \text{REINFORCE}} = (\mathbf{1}_{[u<\sigma(\phi)]}-p_0)^2 (\mathbf{1}_{[u<\sigma(\phi)]}- \sigma(\phi)), ~~~~~~ g_{\phi, \text{AR}}= (\mathbf{1}_{[u<\sigma(\phi)]}-p_0)^2(1-2u),
 }
 while the
 ARM gradient %
 can be expressed as
\ba{
g_{\phi, \text{ARM}}= \left[(\mathbf{1}_{[u>\sigma(-\phi)]}-p_0)^2- (\mathbf{1}_{[u<\sigma(\phi)]}-p_0)^2\right](u-1/2).\label{eq:toyarm}
 }
 See %
 \citet{grathwohl2017backpropagation} for the details on %
 RELAX.

 \begin{figure}[t]
\centering
\includegraphics[width=1\textwidth, height=5.5cm]
{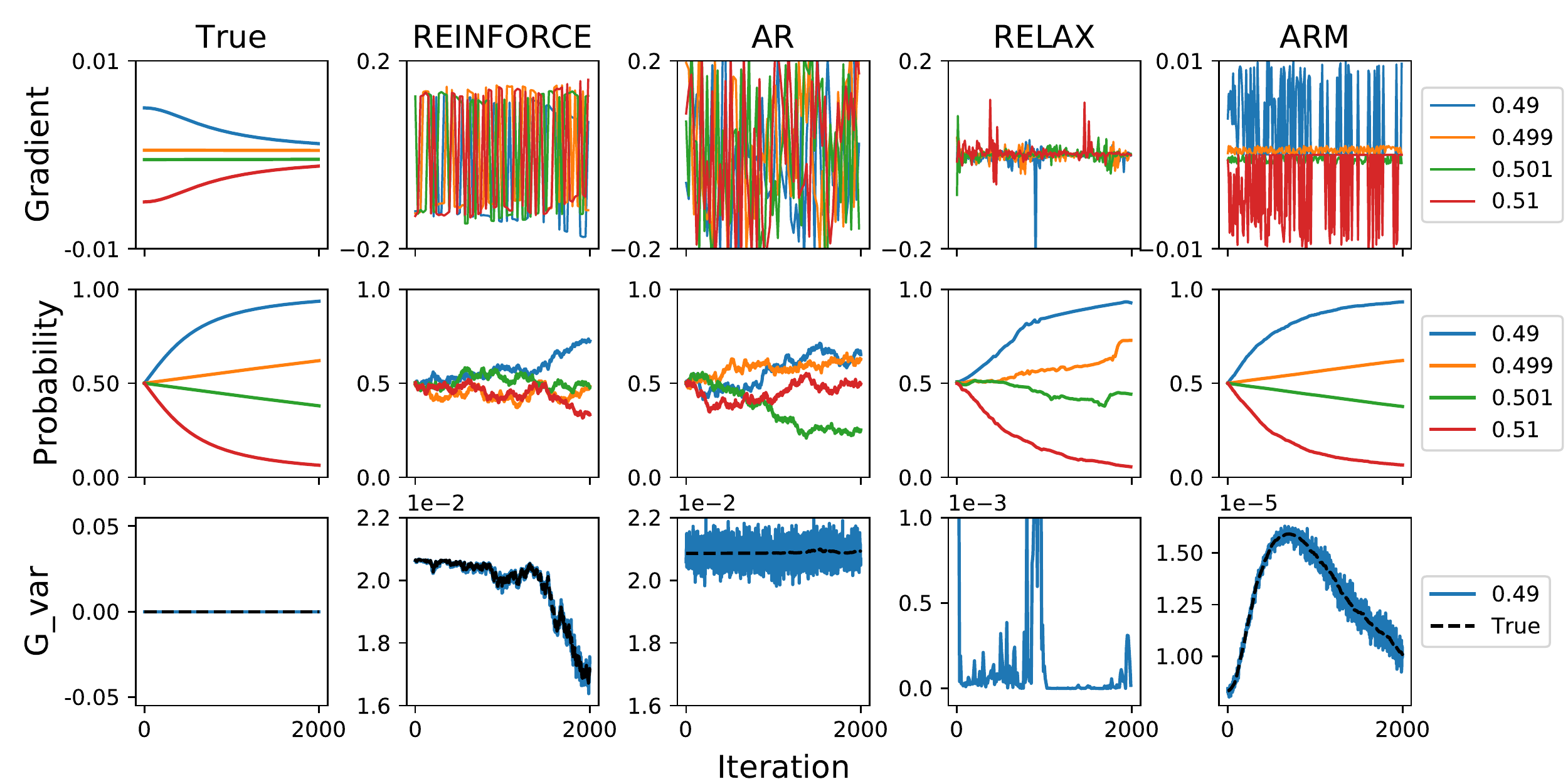} %
\caption{\small Comparison of a variety of gradient estimators in maximizing $\mathcal{E}(\phi)=\bE_{z\sim\text{Bernoulli}(\sigma(\phi)) }[(z-p_0)^2]$ via gradient ascent, where $p_0\in\{0.49, 0.499, 0.501, 0.51\}$; the optimal solution is $\sigma(\phi) = \mathbf{1}(p_0<0.5)$. Shown in Rows 1 and 2 are the trace plots of the true/estimated gradients $\nabla_{\phi}\mathcal{E}(\phi)$ and estimated Bernoulli probability parameters $\sigma(\phi)$, with $\phi$ updated via gradient ascent. %
Shown in Row~3 are the gradient variances for $p_0 = 0.49$, estimated using 
$K=5000$ Monte Carlo samples at each iteration; the theoretical gradient variances are also shown if they can be analytically calculated (see Appendices \ref{sec:D} and \ref{sec:addresults} and for related analytic expressions). 
\label{fig:toy}} \vspace{-2mm}
\end{figure}	

As shown in Figure \ref{fig:toy}, the REINFORCE gradients have large variances. Consequently, a REINFORCE based gradient ascent algorithm may diverge if the gradient ascent stepsize is not sufficiently small. For example, when $p_0=0.501$, the optimal value for the Bernoulli probability $\sigma(\phi)$ is 0, but the algorithm with 0.1 as the stepsize infers it to be close to 1 at the end of 2000 iterations of a random trial. The AR estimator behaves similarly as REINFORCE does. 
By contrast, both RELAX and ARM exhibit clearly lower estimation variance. It is interesting to note that 
the trace plots of the estimated probability $\sigma(\phi)$ with the univariate ARM estimator almost exactly match these with the true gradients, 
despite that the trace plots of the ARM gradients are distinct from these of the true gradients. More specifically, while the true gradients smoothly evolve over iterations, the univariate ARM gradients are characterized by zeros and random spikes; this distinct behavior is expected by examining \eqref{eq:thm1} in Appendix \ref{sec:D}, which suggests that at any given iteration, the univariate ARM gradient based on a single Monte Carlo sample is either exactly zero, which happens with probability $\sigma(\abs{\phi})- \sigma(-\abs{\phi})$, or taking $\left|[f(1)-f(0)](1/2-u)\right|$ as its absolute value. 
These observations suggest that by adjusting the frequencies and amplitudes of spike gradients, the univariate ARM estimator very well approximates the behavior of the true gradient for learning with gradient ascent. 

 In Figure \ref{fig:toy_multiple} %
 of Appendix \ref{sec:addresults}, 
we plot
 the gradient estimated with multiple Monte Carlo samples against the true gradient at each iteration, further showing the ARM estimator has 
 the lowest estimation variance 
 given the same number of Monte Carlo samples. Moreover, in Figure \ref{fig:toy_multiple2} 
 of Appendix \ref{sec:addresults}, for each estimator specific column, we plot against the value of $\phi$ %
 the sample mean $\bar{g}$, sample standard deviation $s_g$, and the gradient signal-to-noise ratio defined as $\mbox{SNR}_g=|\bar{g}|/s_g$; for each $\phi$ value, we use $K=1000$ single-Monte-Carlo-sample gradient estimates to calculate $\bar{g}$, $s_g$, and $\mbox{SNR}_g$. 
Both figures further
 show %
that the ARM estimator outperforms not only REINFORCE, which has large variance, but also %
RELAX, which improves REINFORCE 
with an adaptively estimated baseline. %

In Figure \ref{fig:toy_multiple2} of Appendix \ref{sec:addresults}, it is also interesting to notice 
 that the %
 gradient signal-to-noise ratio for the ARM estimator appears to be only a function of $\phi$ but not a function of $p_0$; this can be verified to be true using \eqref{eq:toytrue} and \eqref{eq:vararm} in Appendix \ref{sec:D}, as the ratio of the absolute value of the true gradient $|g_{\phi}|$ to $\sqrt{\mbox{var}[g_{\phi,\text{ARM}}]}$, the standard deviation of the ARM estimate in \eqref{eq:toyarm}, can be expressed as 
 \ba{ 
\textstyle %
\sigma(\phi)(1-\sigma(\phi)) %
\big/\sqrt{\frac{1}{16}(1-t)(t^3+\frac{7}{3}t^2+\frac{1}{3}t+\frac{1}{3})},~~t= \sigma(|\phi|)- \sigma(-|\phi|). \label{eq:trueSNR}
 }
We find that the values of the ratio shown above are almost exactly matched by the values of $\mbox{SNR}_g=|\bar{g}|/s_g$ under the ARM estimator, shown in the bottom right subplot of Figure \ref{fig:toy_multiple2}. Therefore, for this example optimization problem, the ARM estimator exhibits a desirable property in providing high gradient signal-to-noise ratios regardless of the value of $p_0$.

\subsection{Discrete variational auto-encoders} %

 To optimize a variational auto-encoder (VAE) for a discrete latent variable model, 
existing solutions often rely on biased but low-variance stochastic gradient estimators \citep{bengio2013estimating,jang2016categorical}, unbiased but high-variance ones \citep{mnih2014neural}, or unbiased REINFORCE combined with computationally expensive baselines, whose parameters are estimated by minimizing the sample variance of the estimator with SGD %
\citep{tucker2017rebar,grathwohl2017backpropagation}. 
By contrast, the ARM estimator exhibits low variance and is unbiased, efficient to compute, and simple to implement. %

 \begin{table}[t]
\small
 \caption{\small The constructions of three differently structured discrete variational auto-encoders. The following symbols ``$\rightarrow$'', ``$]$'', $)$'', and ``$\rightsquigarrow$'' represent deterministic linear transform, %
 leaky rectified linear units (LeakyReLU) \citep{maas2013rectifier} nonlinear activation, sigmoid nonlinear activation, and random sampling
 respectively, in the encoder (a.k.a. recognition network); their reversed versions are used in the decoder (a.k.a. generator). 
 \vspace{-1.5mm}
 }\label{tab:Network}
 \centering
{
 \begin{tabular}{cccc}
 \toprule
 &Nonlinear & Linear & Linear two layers\\
 \midrule
 Encoder &784$\rightarrow$200]$\rightarrow$200]$\rightarrow$200)$\rightsquigarrow$200 & 784$\rightarrow$200)$\rightsquigarrow$200& 784$\rightarrow$200)$\rightsquigarrow$200$\rightarrow$200)$\rightsquigarrow$200\\
 Decoder & 784$\leftsquigarrow$(784$\leftarrow$[200$\leftarrow$[200$\leftarrow$200 & 784$\leftsquigarrow$(784$\leftarrow$200 & 784$\leftsquigarrow$(784$\leftarrow$200$\leftsquigarrow$(200$\leftarrow$200 \\ 
 \bottomrule
 \end{tabular}}\vspace{-1.5mm}
\end{table}

 \begin{table}[t]
\small
\caption{\small Test negative log-likelihoods of discrete VAEs %
trained with a variety of stochastic gradient estimators on MNIST-static and OMNIGLOT, where $\ast$, $\star$, $\dagger$, $\ddagger$ represent the results reported in \citet{mnih2014neural}, \citet{tucker2017rebar}, \citet{gu2015muprop}, and \citet{grathwohl2017backpropagation}, respectively. The results for LeGrad \citep{titsias2015local} are obtained by running the code provided by the authors. We report the results of ARM using the sample mean and standard deviation over five independent trials with random initializations. \label{tab:VAE1}
}
\vspace{-2mm}
\begin{subtable}{1\textwidth}
\centering
\caption{MNIST-static}
\begin{tabular}{cc|cc|cc}
\toprule
\multicolumn{2}{c}{Linear} & \multicolumn{2}{c}{Nonlinear} &\multicolumn{2}{c}{Two layers} \\
 \midrule
 Algorithm & $-\log p(x)$ & Algorithm & $-\log p(x)$ & Algorithm & $-\log p(x)$ \\
 \midrule
 AR&$=164.1$& AR&$=114.6$& AR& $={162.2}$ \\
REINFORCE&$=170.1$& REINFORCE&$=114.1$& REINFORCE& $={159.2}$\\
 Wake-Sleep\textsuperscript{$\ast$} & $=120.8$ & Wake-Sleep\textsuperscript{$\ast$} & - & Wake-Sleep\textsuperscript{$\ast$} & $=107.7$ \\
NVIL \textsuperscript{$\ast$} & $=113.1$ & NVIL \textsuperscript{$\ast$} & $=102.2$ & NVIL\textsuperscript{$\ast$} & $=99.8$ \\
LeGrad&$\leq 117.5$& LeGrad& - & LeGrad& - \\
 MuProp\textsuperscript{$\dagger$}&$ \leq 113.0$& MuProp\textsuperscript{$\star$}&$=99.1$& MuProp\textsuperscript{$\dagger$}&$\leq 100.4$ \\
Concrete\textsuperscript{$\star$}&$=107.2$& Concrete\textsuperscript{$\star$}&$=99.6$& Concrete\textsuperscript{$\star$}&$=95.6$ \\
 REBAR\textsuperscript{$\star$}&$=107.7$& REBAR\textsuperscript{$\star$}&$=100.7$& REBAR\textsuperscript{$\star$}& $=\textbf{95.7}$\\
 RELAX\textsuperscript{$\ddagger$}&$\leq 113.6$& RELAX\textsuperscript{$\ddagger$}&$\leq 119.2$& RELAX\textsuperscript{$\ddagger$}&$\leq 100.9$ \\
 ARM& $= \textbf{107.2 $\pm$ 0.1 }$& ARM&$=\textbf{98.4 $\pm$ 0.3}$& ARM&$=96.7 \pm 0.3$ \\
\bottomrule
\end{tabular}
\end{subtable}
\medskip

\begin{subtable}{1\textwidth}
\centering
\caption{OMNIGLOT}
\begin{tabular}{cc|cc|cc}
\toprule
\multicolumn{2}{c}{Linear} & \multicolumn{2}{c}{Nonlinear} &\multicolumn{2}{c}{Two layers} \\
 \midrule
Algorithm & $-\log p(x)$ & Algorithm & $-\log p(x)$ & Algorithm & $-\log p(x)$ \\
 \midrule
 NVIL\textsuperscript{$\ast$} & $=117.6$ & NVIL\textsuperscript{$\ast$}& $=\textbf{116.6}$ & NVIL\textsuperscript{$\ast$}& $=111.4$ \\
 MuProp\textsuperscript{$\star$}&$=117.6$& MuProp\textsuperscript{$\star$}&$=117.5$& MuProp\textsuperscript{$\star$}&$=111.2$ \\
Concrete\textsuperscript{$\star$}&$=117.7$& Concrete\textsuperscript{$\star$}&$=116.7$& Concrete\textsuperscript{$\star$}&$=111.3$ \\
 REBAR\textsuperscript{$\star$}&$=117.7$& REBAR\textsuperscript{$\star$}&$=118.0$& REBAR\textsuperscript{$\star$}& $=110.8$\\
 RELAX\textsuperscript{$\ddagger$}&$\leq 122.1$& RELAX\textsuperscript{$\ddagger$}&$\leq 128.2$& RELAX\textsuperscript{$\ddagger$}&$\leq 115.4$ \\
 ARM& $= \textbf{115.8 $\pm$ 0.2}$& ARM&$=117.6 \pm 0.4$& ARM&$=\textbf{109.8 $\pm$ 0.3}$ \\
\bottomrule
\end{tabular}
\end{subtable}
\vspace{-2mm}
\end{table}

\begin{figure}[!t]
\centering
\begin{tabular}{ccc}
\hspace{-2em} \includegraphics[width=0.3\textwidth]{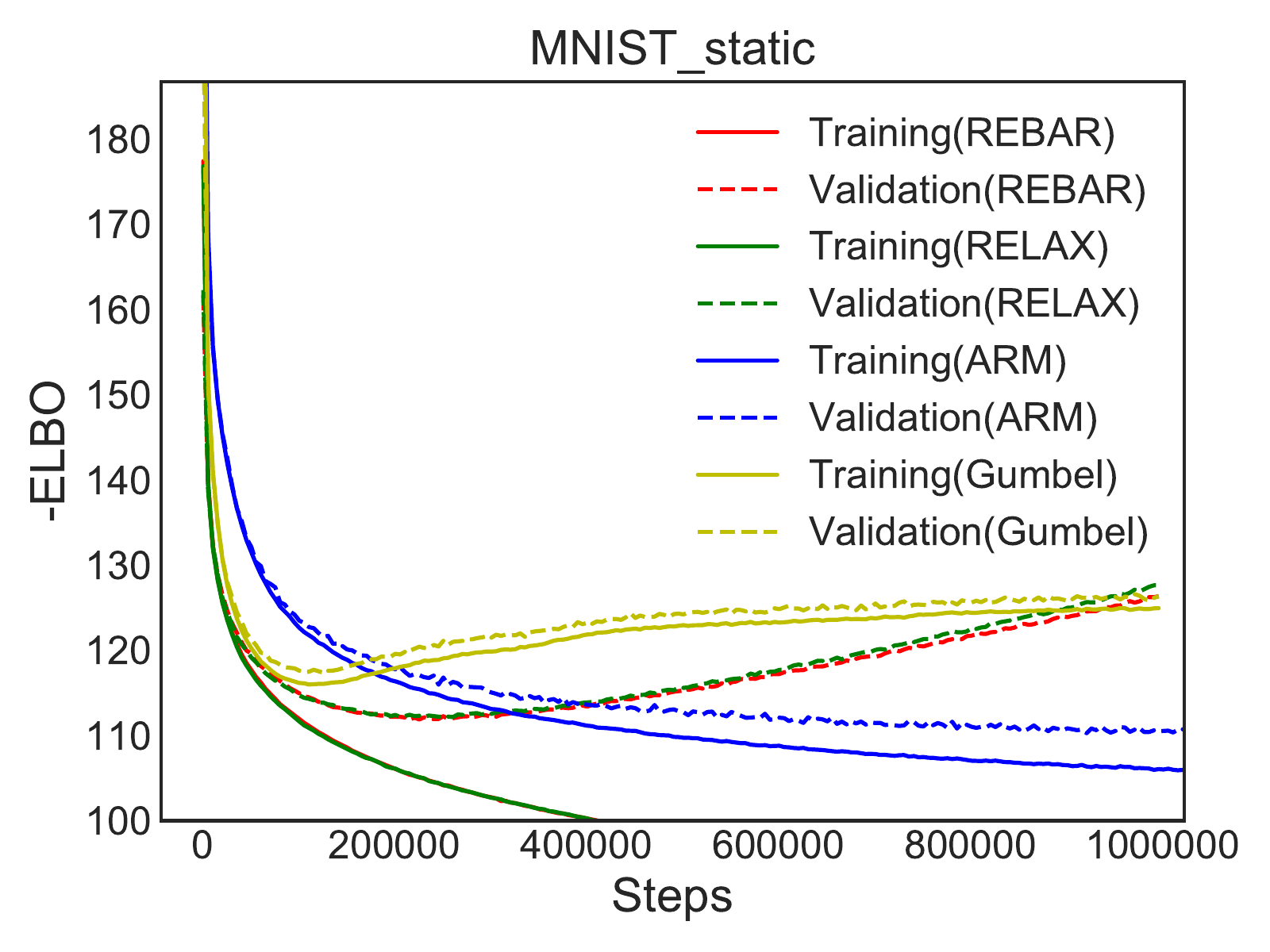}&
\hspace{-2em} \includegraphics[width=0.3\textwidth]{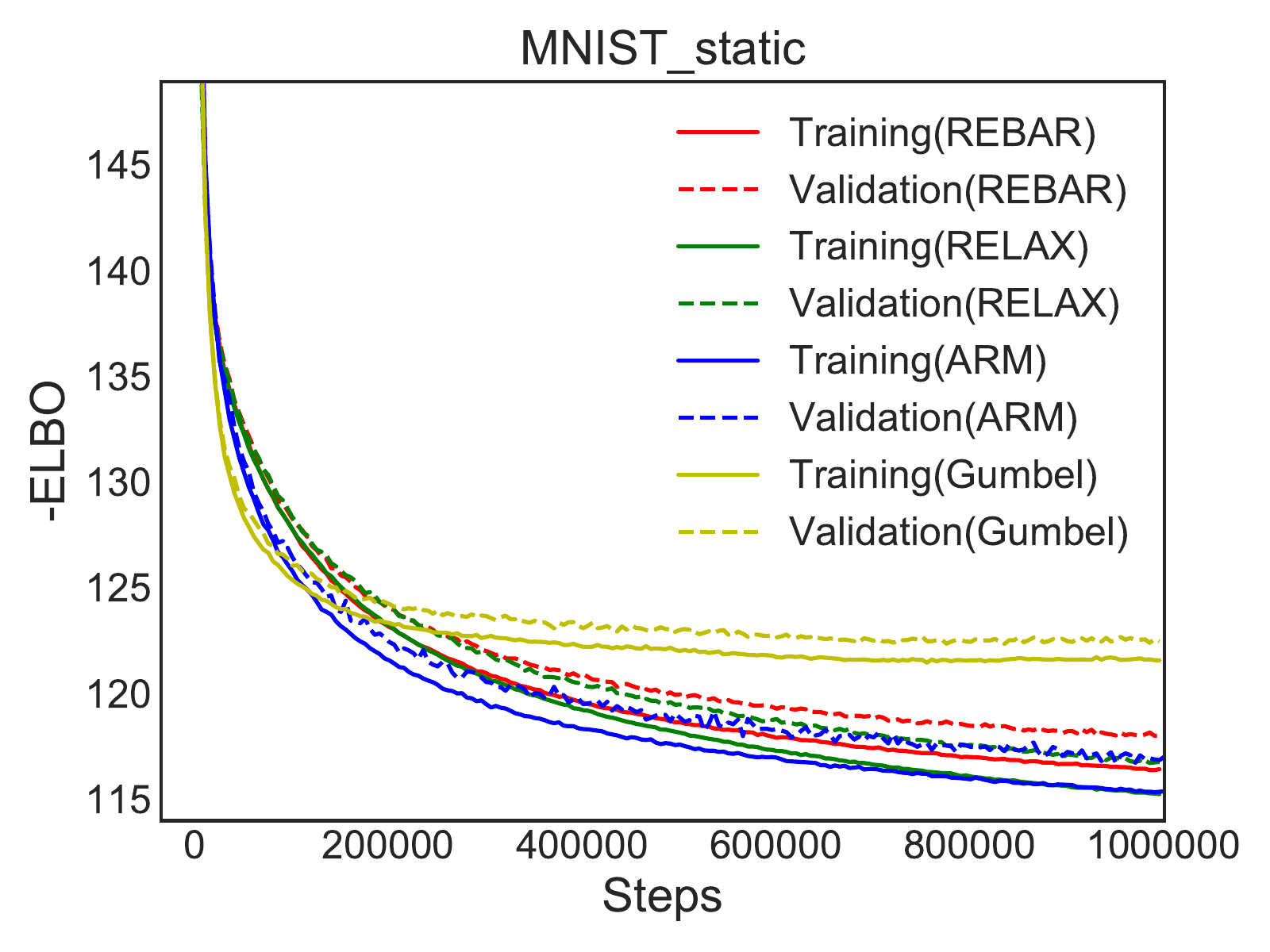}&
\hspace{-1.5em} \includegraphics[width=0.3\textwidth]{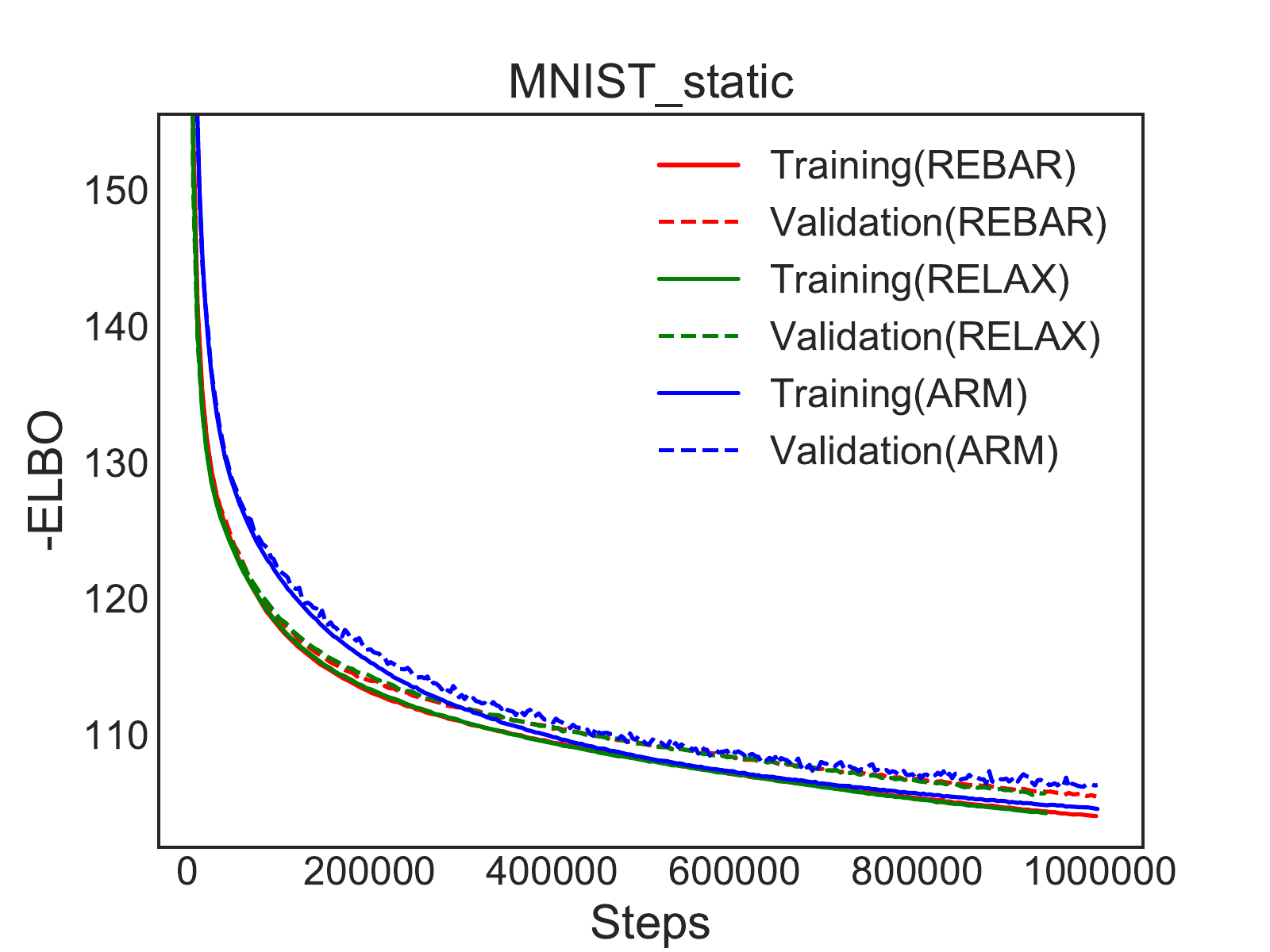}\\
\scriptsize (a) Nonlinear &\scriptsize(b) Linear &\scriptsize(c) Linear two layers \\ 
\hspace{-2em} \includegraphics[width=0.295\textwidth]{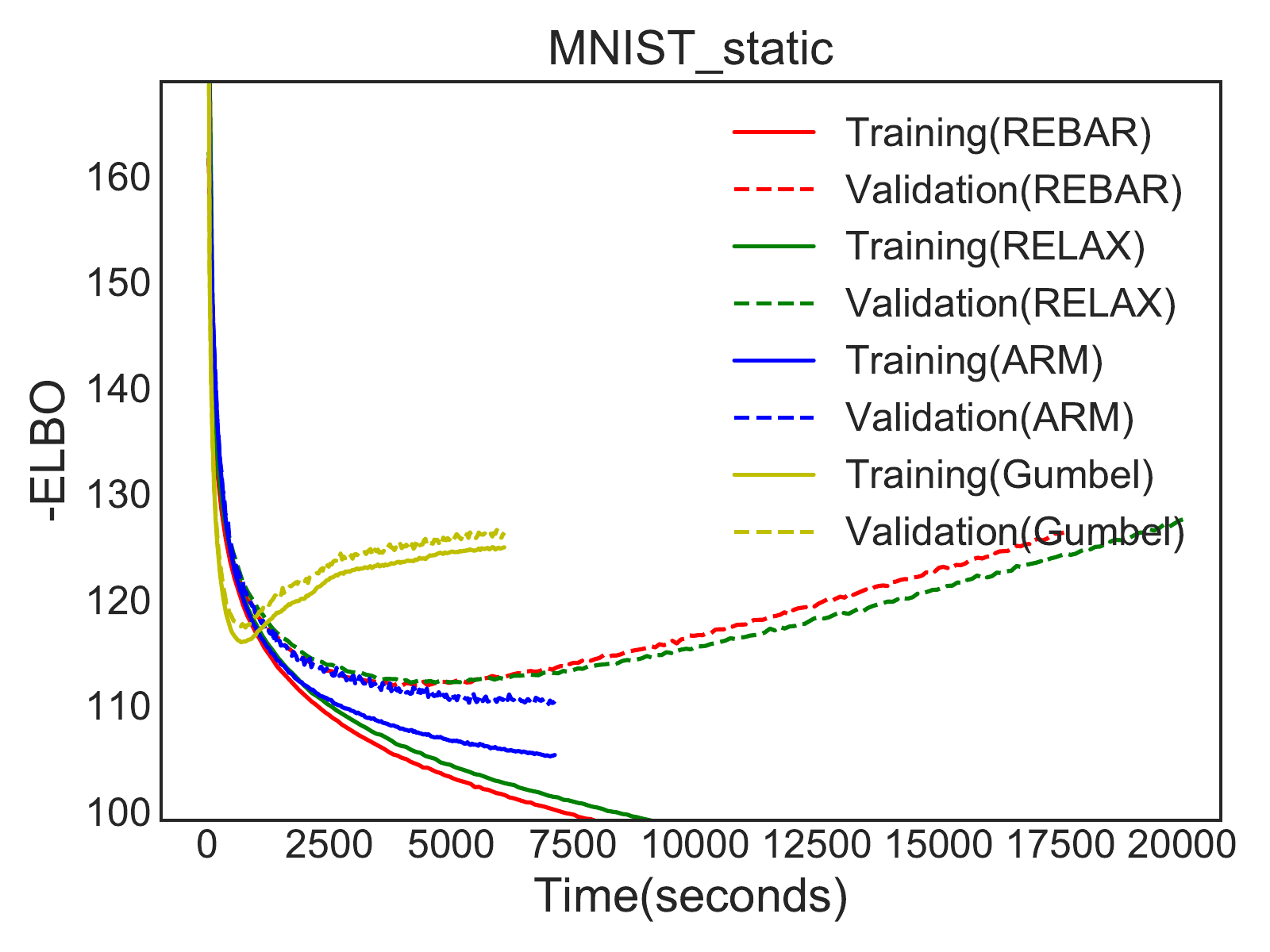}&
\hspace{-1.8em} \includegraphics[width=0.3\textwidth]{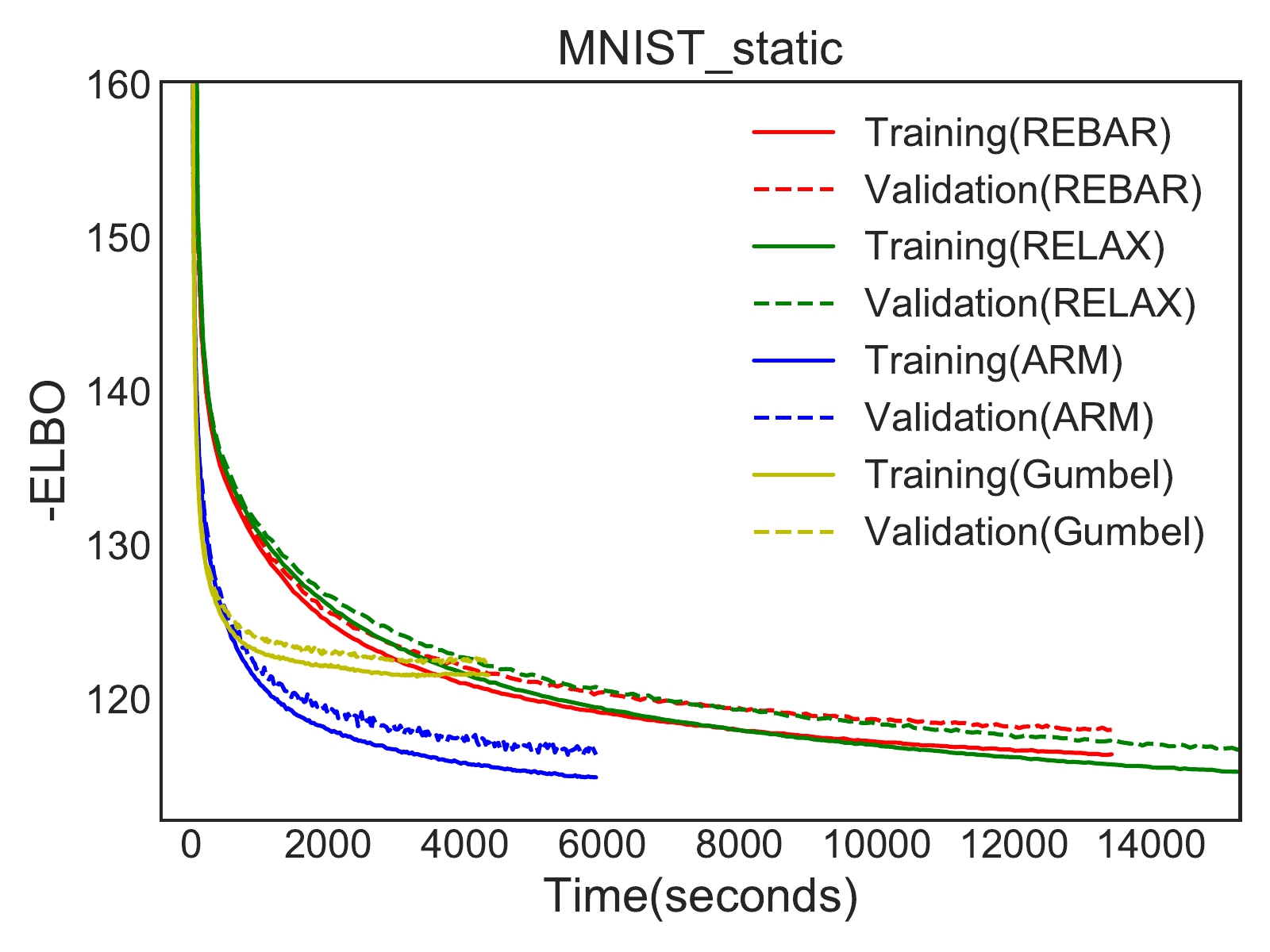}&
\hspace{-1.5em} \includegraphics[width=0.3\textwidth]{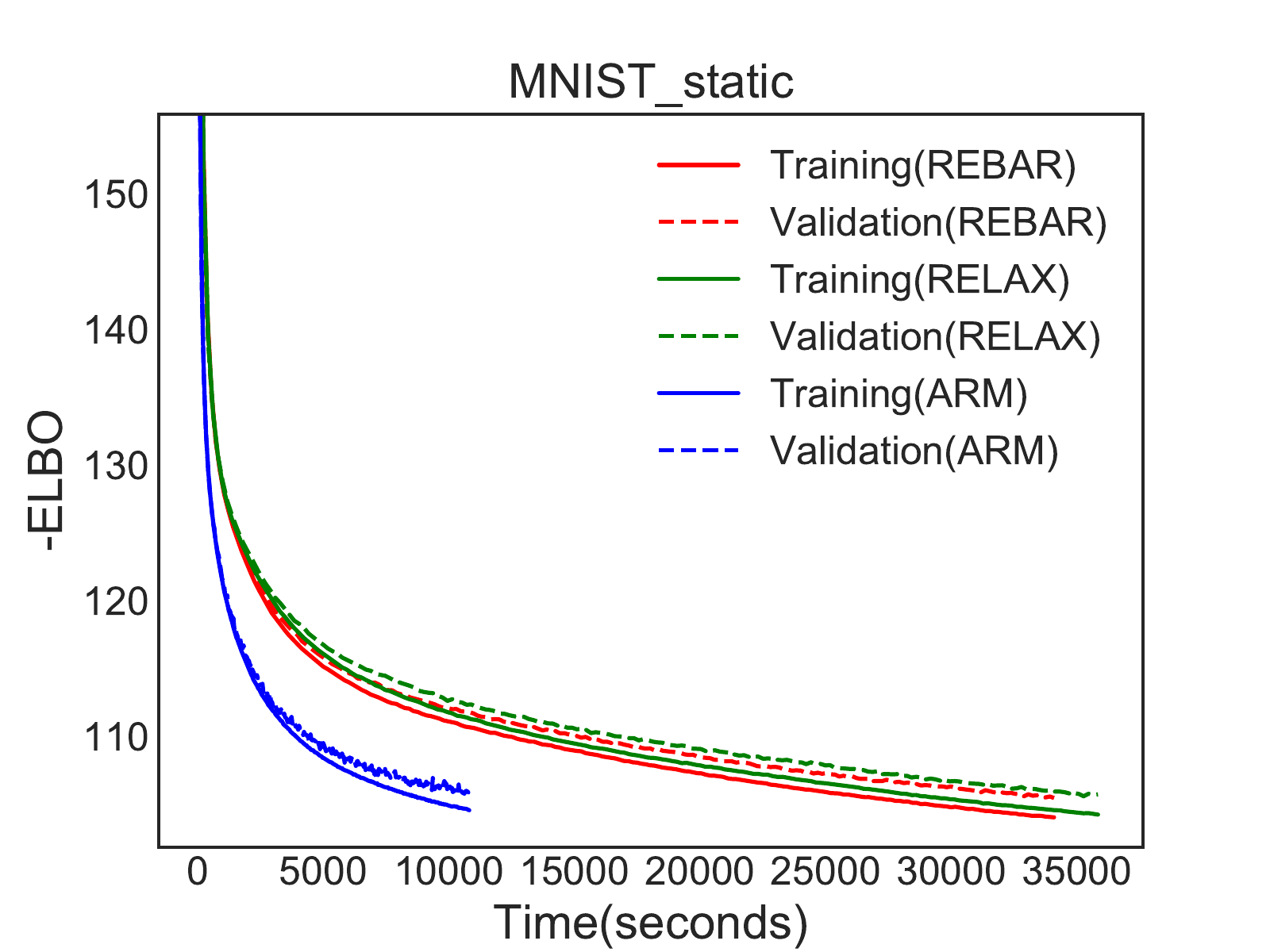}\\
\scriptsize (d) Nonlinear &\scriptsize(e) Linear &\scriptsize(f) Linear two layers \\ 
\end{tabular}
\vspace{-2mm}
\caption{\small
Training and validation negative ELBOs on MNIST-static with respect to the training iterations, shown in the top row, and with respect to the wall clock times on Tesla-K40 GPU, shown in the bottom row, for three differently structured Bernoulli VAEs. 
}%
\label{fig:mnist_hugo}
\centering
\begin{tabular}{cc}
 \includegraphics[width=0.33\textwidth]{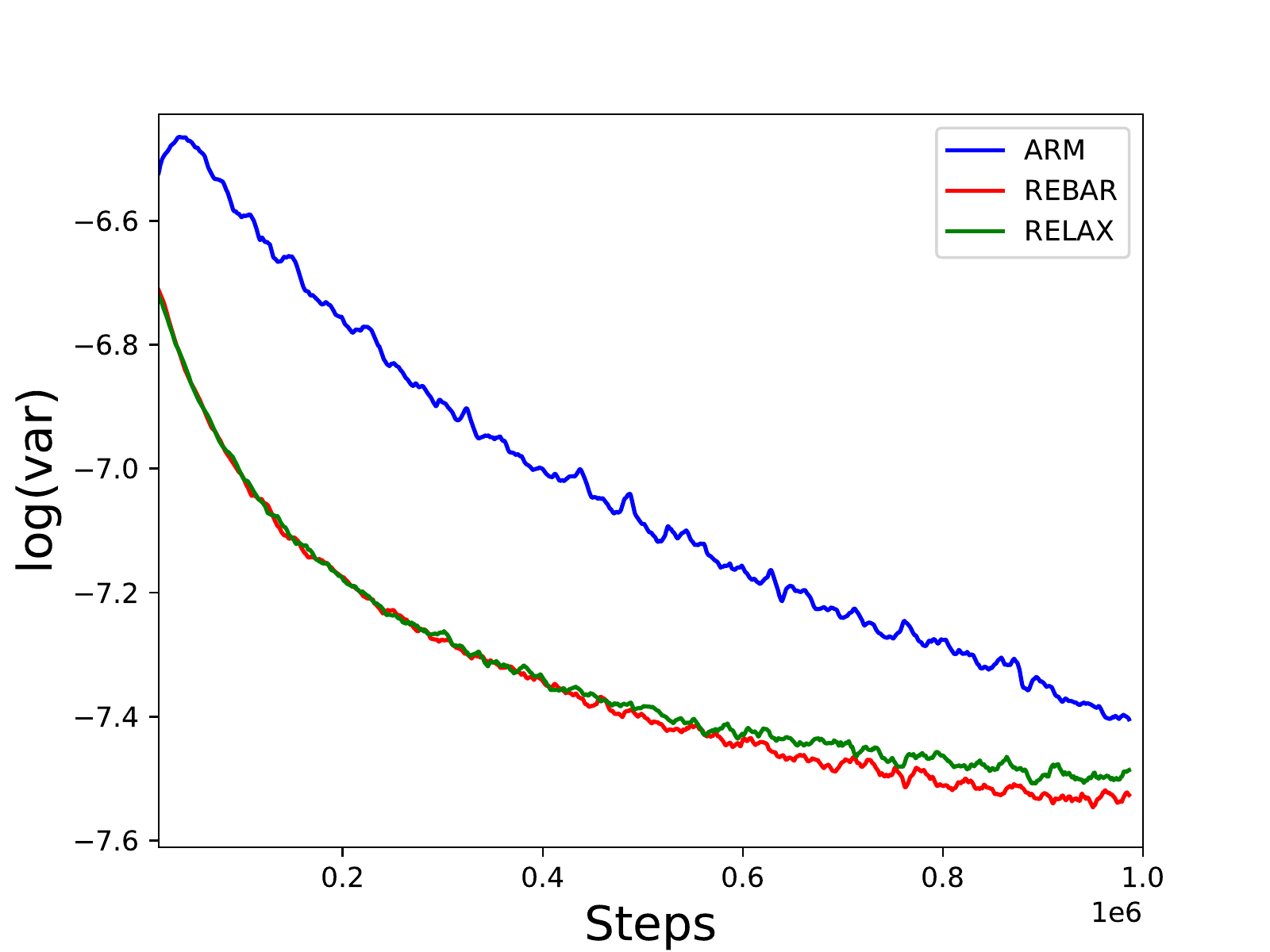}~~~~&~~~~
\includegraphics[width=0.33\textwidth]{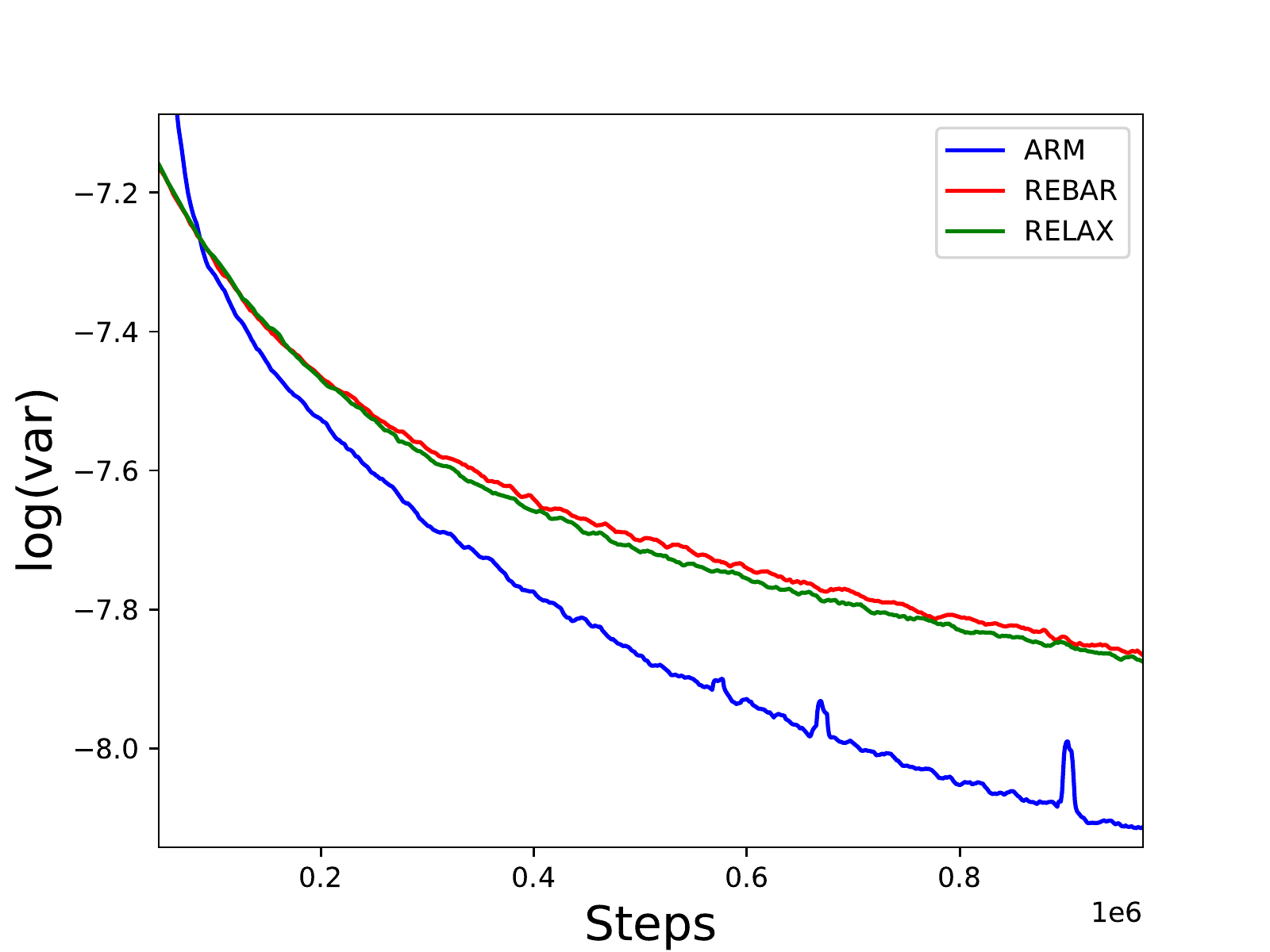}\\
\scriptsize Nonlinear %
&\scriptsize Linear %
\end{tabular}
\vspace{-2mm}
\caption{\small
 Trace plots of the log variance of the gradient estimators on the MNIST-static data for ``Nonlinear'' and ``Linear'' network architectures, whose corresponding trace plots of both the training and validation -ELBO are shown in Figures \ref{fig:mnist_hugo}(a) and \ref{fig:mnist_hugo}(b), respectively. The variance of the gradient of each element is estimated by 
performing exponential smoothing, with the smoothing factor as 0.999, on its first two moments; the logarithm of the average over all elements' gradient variances is shown for each stochastic gradient ascent step. 
 \label{fig:vae_var}
}
\vspace{-2mm}
\end{figure}

For discrete VAEs, we compare ARM with a variety of representative stochastic gradient estimators for discrete latent variables, including Wake-Sleep \citep{hinton1995wake}, NVIL \citep{mnih2014neural}, LeGrad \citep{titsias2015local}, MuProp \citep{gu2015muprop}, Concrete (Gumbel-Softmax) \citep{jang2016categorical,maddison2016concrete}, REBAR \citep{grathwohl2017backpropagation}, and RELAX \citep{tucker2017rebar}. %
Following the settings in \citet{tucker2017rebar} and \citet{grathwohl2017backpropagation}, for the encoder defined in \eqref{eq:Encoder} %
and decoder defined in \eqref{eq:VI_dist}, we consider three different network architectures, as summarized in Table \ref{tab:Network}, including ``Nonlinear'' that has one stochastic but two Leaky-ReLU \citep{maas2013rectifier} deterministic hidden layers, ``Linear'' that has one stochastic hidden layer, and ``Linear two layers'' that has two stochastic hidden layers. %
We consider a widely used binarization \citep{salakhutdinov2008quantitative, larochelle2011neural, pmlr-v80-yin18b}, referred to as MNIST-static and available at
\href{http://www.dmi.usherb.ca/~larocheh/mlpython/_modules/datasets/binarized_mnist.html}{http://www.dmi.usherb.ca/$\sim$larocheh/mlpython/\_modules/datasets/binarized\_mnist.html}, making 
our numerical results directly comparable to those reported in the literature. In addition to MNIST-static, we also consider MNIST-threshold \citep{van2017neural}, which binarizes MNIST by thresholding each pixel value at 0.5, and the binarized OMNIGLOT dataset.

We train discrete VAEs with 200 conditionally $iid$ Bernoulli random variables as the hidden units of each stochastic binary layer. We maximize a single-Monte-Carlo-sample ELBO using Adam \citep{kingma2014adam}, with the learning rate selected from $\{5, 1,0.5\}\times 10^{-4}$ by the validation set. We set the batch size as 50 for MNIST and 25 for OMNIGLOT. 
For each dataset, using its default training/validation/testing partition, we train all methods on the training set, calculate the validation log-likelihood %
for every epoch, and report the test negative log-likelihood %
 when the validation negative log-likelihood reaches its minimum within a predefined maximum number of iterations. 

We summarize the test negative log-likelihoods in Table \ref{tab:VAE1} for MNIST-static. We also summarize the test negative ELBOs in Table \ref{tab:VAE} of the Appendix, and provide related %
trace plots of the training and validation negative ELBOs on MNIST-static in Figure \ref{fig:mnist_hugo}, and these on MNIST-threshold and OMNIGLOT in Figures \ref{fig:mnist} and \ref{fig:omni} of the Appendix, respectively. 
For these trace plots, for a fair comparison of convergence speed between different algorithms, we %
use publicly available code and setting the learning rate of ARM the same as that selected by %
RELAX in \citet{grathwohl2017backpropagation}. 
Note as shown in Figures \ref{fig:mnist_hugo}(a,d) and \ref{fig:omni}(a,d), both REBAR and RELAX exhibit clear signs of overfitting on both MNIST-static and Omniglot using the ``Nonlinear'' architecture; as ARM runs much faster per iteration than both of them and do not exhibit overfitting given the same number of iterations, we allow ARM to run more stochastic gradient ascent steps under these two scenarios to check whether it will eventually overfit the training set.

These results
 show that ARM provides state-of-the-art performance in
delivering not only fast convergence, but also low negative log-likelihoods and negative ELBOs on both the validation and test sets, with low computational cost, for all three different network architectures. 
In comparison to the vanilla REINFORCE on MNIST-static, as shown in Table \ref{tab:VAE1} (a),
ARM achieves significantly lower test negative log-likelihoods, which can be explained by having much lower variance in its gradient estimation, %
while only costing $20\%$ to $30\%$ more computation time to finish the same number of iterations. 
The trace plots in Figures \ref{fig:mnist_hugo}, \ref{fig:mnist}, and \ref{fig:omni} %
show that ARM achieves its objective better or on a par with the competing methods in all three different network architectures. %
In particular, the performance of ARM on MNIST-threshold is significantly better, suggesting ARM is more robust, better resists overfitting, and has better generalization ability. 
 On both MNIST-static and OMNIGLOT, with the ``Nonlinear'' network architecture, both REBAR and RELAX exhibit severe overfitting, which could be caused by their training procedure, which updates the parameters of the baseline function
 by minimizing the sample variance of the gradient estimator using SGD. 
 For less overfitting linear and two-stochastic-layer networks, ARM overall performs better than both REBAR and RELAX and runs significantly faster (about 6-8 times faster) in terms of the computation time per iteration. %
 
{To understand why ARM has the best overall performance, %
we examine the trace plots of the logarithm of the estimated variance of gradient estimates in Figure \ref{fig:vae_var}. On the MNIST-static dataset with the ``Nonlinear'' network, the left subplot of Figure \ref{fig:vae_var} shows that both REBAR and RELAX exhibit lower variance than ARM does for their single-Monte-Carlo-sample based gradient estimates; however, the 
corresponding trace plots of the validation negative ELBOs, shown in Figure \ref{fig:mnist_hugo}(a),
suggest they both severely overfit the training data as the learning progresses; our hypothesis for this phenomenon is that REBAR and RELAX may favor suboptimal solutions that are associated with lower gradient variance; in other words, they may have difficulty in converging to local optimal solutions that are associated with high gradient variance. 
For the ``Linear'' network architecture, the right subplot of Figure \ref{fig:vae_var} shows that ARM exhibits lower variance for its gradient estimate than both REBAR and RELAX do, and Figure \ref{fig:mnist_hugo}(b) shows that none of them exhibit clear signs of overfitting; this observation could be used to explain why ARM results in the best convergence for both the training and validation negative ELBOs, as shown in Figure \ref{fig:mnist_hugo}(b).}										

\subsection{Maximum likelihood estimation for a stochastic binary network}

Denoting $\xv_l,\xv_u\in\mathbb{R}^{394}$ 
as the lower and upper halves of an MNIST digit, respectively, 
we consider a standard benchmark task of estimating the conditional distribution $p_{\thetav_{0:2}}(\xv_l\given \xv_u)$ \citep{raiko2014techniques,bengio2013estimating,gu2015muprop,jang2016categorical,tucker2017rebar}, using
a stochastic binary network with two stochastic binary hidden layers, expressed as
\ba{
\xv_l\sim \mbox{Bernoulli}( \sigma(\mathcal{T}_{\thetav_0} (\bv_1))),~\bv_1\sim \mbox{Bernoulli}( \sigma(\mathcal{T}_{\thetav_1} (\bv_2))),~\bv_2\sim \mbox{Bernoulli}( \sigma(\mathcal{T}_{\thetav_2} (\xv_u))).
}

We set the network structure as 392-200-200-392, which means both $\bv_1$ and $\bv_2$ are 200 dimensional binary vectors and the transformation $\mathcal{T}_{\thetav}$ are linear so the results are directly comparable with those in \citet {jang2016categorical}. We approximate 
 $\log p_{\thetav_{0:2}}(\xv_l\given \xv_u)$ 
 with $\log \frac{1}{K}\sum_{k=1}^K 
 \mbox{Bernoulli}(\xv_l; \sigma(\mathcal{T}_{\thetav_0} ( \bv_1^{(k)})))$, 
 where $\bv_1^{(k)}\sim \mbox{Bernoulli}( \sigma(\mathcal{T}_{\thetav_1} (\bv_2^{(k)}))),~\bv_2^{(k)}\sim \mbox{Bernoulli}( \sigma(\mathcal{T}_{\thetav_2} (\xv_u)))$. We perform training with $K=1$, which can also be considered as optimizing on a single-Monte-Carlo-sample estimate of the lower bound of the log marginal likelihood shown in \eqref{eq:ML}. 
We use Adam \citep{kingma2014adam}, with the learning rate set as $10^{-4}$, mini-batch size as 100, and number of epochs for training as 2000. 
 Given the inferred point estimate of $\thetav_{0:2}$ after training, we evaluate the accuracy of conditional density estimation by estimating the negative log-likelihood as $-\log p_{\thetav_{0:2}}(\xv_l\given \xv_u)$, averaging over the test set using $K=1000$. We show example results of predicting the activation probabilities of the pixels of $\xv_l$ given $\xv_u$ in Figure \ref{fig:sbn} of the Appendix.

 \begin{table}[t]
\small
 \caption{\small Comparison of the test negative log-likelihoods between ARM and various gradient estimators in \citet {jang2016categorical}, for the MNIST conditional distribution estimation benchmark task. %
 }\label{tab:SBN}
 \centering
{
 \begin{tabular}{c@{\hskip8pt}cc@{\hskip7pt}c@{\hskip7pt}c@{\hskip7pt}c@{\hskip7pt}cc}
 \toprule
 Gradient estimator & ARM & ST & DARN 
 & Annealed ST & ST Gumbel-S. & SF & MuProp \\
 \midrule
 $-\log p(\xv_l\given \xv_u)$ & \textbf{57.9 $\pm$ 0.1} & 58.9 & 59.7 & 58.7 & 59.3 & 72.0 & 58.9 \\
 \bottomrule
 \end{tabular}}\vspace{-3mm}
\end{table}
 
 As shown in Table \ref{tab:SBN}, optimizing a stochastic binary network with the ARM estimator, which is unbiased and computationally efficient, achieves the lowest test negative log-likelihood, outperforming previously proposed biased stochastic gradient estimators on similarly structured stochastic networks, including DARN 
 \citep{gregor2013deep}, straight through (ST) \citep{bengio2013estimating}, slope-annealed ST \citep{chung2016hierarchical}, and ST Gumbel-softmax \citep{jang2016categorical}, and unbiased ones, including score-function (SF) and MuProp \citep{gu2015muprop}. 			

\section{Conclusion}

To train a discrete latent variable model with one or multiple stochastic binary layers, we propose the augment-REINFORCE-merge (ARM) estimator to provide unbiased and low-variance gradient estimates of the parameters of Bernoulli distributions. With a single Monte Carlo sample, the estimated gradient is the product of uniform random noises and the difference of a function of two vectors of correlated binary latent variables. 
Without relying on estimating a baseline function with extra learnable parameters for variance reduction, 
it maintains efficient computation and avoids increasing the risk of overfitting. Applying the ARM gradient leads to not only fast convergence, but also low test negative log-likelihoods (and low test negative evidence lower bounds for variational inference), 
on both auto-encoding variational inference and maximum likelihood estimation for stochastic binary feedforward neural networks. 
Some natural extensions of the proposed ARM estimator include generalizing it to multivariate categorical latent variables, combining it with a baseline or local-expectation based variance reduction method, and applying it to 
reinforcement learning whose action space is discrete.

\section*{Acknowledgments}
M. Zhou acknowledges the support of Award IIS-1812699 from the U.S. National Science Foundation, and support from the McCombs Research Excellence Grant. The authors acknowledge
the support of NVIDIA Corporation with the donation of the Titan Xp GPU used for this research,
and the computational support of Texas Advanced Computing Center.

\bibliographystyle{iclr2019_conference}
\bibliography{reference.bib,References052016.bib}

\begin{thebibliography}{41}
\providecommand{\natexlab}[1]{#1}
\providecommand{\url}[1]{\texttt{#1}}
\expandafter\ifx\csname urlstyle\endcsname\relax
  \providecommand{\doi}[1]{doi: #1}\else
  \providecommand{\doi}{doi: \begingroup \urlstyle{rm}\Url}\fi

\bibitem[Bengio et~al.(2013)Bengio, L{\'e}onard, and
  Courville]{bengio2013estimating}
Yoshua Bengio, Nicholas L{\'e}onard, and Aaron Courville.
\newblock Estimating or propagating gradients through stochastic neurons for
  conditional computation.
\newblock \emph{arXiv preprint arXiv:1308.3432}, 2013.

\bibitem[Bishop(1995)]{bishop1995neural}
Christopher~M Bishop.
\newblock \emph{Neural Networks for Pattern Recognition}.
\newblock Oxford university press, 1995.

\bibitem[Blei et~al.(2017)Blei, Kucukelbir, and McAuliffe]{blei2017variational}
David~M. Blei, Alp Kucukelbir, and Jon~D. McAuliffe.
\newblock Variational inference: {A} review for statisticians.
\newblock \emph{Journal of the American Statistical Association}, 112\penalty0
  (518):\penalty0 859--877, 2017.

\bibitem[Casella \& Robert(1996)Casella and Robert]{casella1996rao}
George Casella and Christian~P Robert.
\newblock Rao-blackwellisation of sampling schemes.
\newblock \emph{Biometrika}, 83\penalty0 (1):\penalty0 81--94, 1996.

\bibitem[Chung et~al.(2016)Chung, Ahn, and Bengio]{chung2016hierarchical}
Junyoung Chung, Sungjin Ahn, and Yoshua Bengio.
\newblock Hierarchical multiscale recurrent neural networks.
\newblock \emph{arXiv preprint arXiv:1609.01704}, 2016.

\bibitem[Fu(2006)]{fu2006gradient}
Michael~C Fu.
\newblock Gradient estimation.
\newblock \emph{Handbooks in operations research and management science},
  13:\penalty0 575--616, 2006.

\bibitem[Glynn(1990)]{glynn1990likelihood}
Peter~W Glynn.
\newblock Likelihood ratio gradient estimation for stochastic systems.
\newblock \emph{Communications of the ACM}, 33\penalty0 (10):\penalty0 75--84,
  1990.

\bibitem[Grathwohl et~al.(2018)Grathwohl, Choi, Wu, Roeder, and
  Duvenaud]{grathwohl2017backpropagation}
Will Grathwohl, Dami Choi, Yuhuai Wu, Geoff Roeder, and David Duvenaud.
\newblock Backpropagation through the {V}oid: {O}ptimizing control variates for
  black-box gradient estimation.
\newblock In \emph{ICLR}, 2018.

\bibitem[Gregor et~al.(2013)Gregor, Danihelka, Mnih, Blundell, and
  Wierstra]{gregor2013deep}
Karol Gregor, Ivo Danihelka, Andriy Mnih, Charles Blundell, and Daan Wierstra.
\newblock Deep autoregressive networks.
\newblock \emph{arXiv preprint arXiv:1310.8499}, 2013.

\bibitem[Gu et~al.(2016)Gu, Levine, Sutskever, and Mnih]{gu2015muprop}
Shixiang Gu, Sergey Levine, Ilya Sutskever, and Andriy Mnih.
\newblock {MuProp}: {U}nbiased backpropagation for stochastic neural networks.
\newblock In \emph{ICLR}, 2016.

\bibitem[Hinton et~al.(1995)Hinton, Dayan, Frey, and Neal]{hinton1995wake}
Geoffrey~E Hinton, Peter Dayan, Brendan~J Frey, and Radford~M Neal.
\newblock The ``wake-sleep'' algorithm for unsupervised neural networks.
\newblock \emph{Science}, 268\penalty0 (5214):\penalty0 1158--1161, 1995.

\bibitem[Jang et~al.(2017)Jang, Gu, and Poole]{jang2016categorical}
Eric Jang, Shixiang Gu, and Ben Poole.
\newblock Categorical reparameterization with {G}umbel-softmax.
\newblock In \emph{ICLR}, 2017.

\bibitem[Jordan et~al.(1999)Jordan, Ghahramani, Jaakkola, and
  Saul]{jordan1999introduction}
Michael~I Jordan, Zoubin Ghahramani, Tommi~S Jaakkola, and Lawrence~K Saul.
\newblock An introduction to variational methods for graphical models.
\newblock \emph{Machine learning}, 37\penalty0 (2):\penalty0 183--233, 1999.

\bibitem[Kingma \& Ba(2014)Kingma and Ba]{kingma2014adam}
Diederik~P Kingma and Jimmy Ba.
\newblock Adam: A method for stochastic optimization.
\newblock \emph{arXiv preprint arXiv:1412.6980}, 2014.

\bibitem[Kingma \& Welling(2013)Kingma and Welling]{kingma2013auto}
Diederik~P Kingma and Max Welling.
\newblock Auto-encoding variational {B}ayes.
\newblock \emph{arXiv preprint arXiv:1312.6114}, 2013.

\bibitem[Kucukelbir et~al.(2017)Kucukelbir, Tran, Ranganath, Gelman, and
  Blei]{kucukelbir2017automatic}
Alp Kucukelbir, Dustin Tran, Rajesh Ranganath, Andrew Gelman, and David~M Blei.
\newblock Automatic differentiation variational inference.
\newblock \emph{Journal of Machine Learning Research}, 18\penalty0
  (14):\penalty0 1--45, 2017.

\bibitem[Larochelle \& Murray(2011)Larochelle and Murray]{larochelle2011neural}
Hugo Larochelle and Iain Murray.
\newblock The neural autoregressive distribution estimator.
\newblock In \emph{Proceedings of the Fourteenth International Conference on
  Artificial Intelligence and Statistics}, pp.\  29--37, 2011.

\bibitem[Maas et~al.(2013)Maas, Hannun, and Ng]{maas2013rectifier}
Andrew~L Maas, Awni~Y Hannun, and Andrew~Y Ng.
\newblock Rectifier nonlinearities improve neural network acoustic models.
\newblock In \emph{ICML}, 2013.

\bibitem[Maddison et~al.(2017)Maddison, Mnih, and Teh]{maddison2016concrete}
Chris~J Maddison, Andriy Mnih, and Yee~Whye Teh.
\newblock The {C}oncrete distribution: A continuous relaxation of discrete
  random variables.
\newblock In \emph{ICLR}, 2017.

\bibitem[Mnih \& Gregor(2014)Mnih and Gregor]{mnih2014neural}
Andriy Mnih and Karol Gregor.
\newblock Neural variational inference and learning in belief networks.
\newblock In \emph{ICML}, pp.\  1791--1799, 2014.

\bibitem[Mnih \& Rezende(2016)Mnih and Rezende]{mnih2016variational}
Andriy Mnih and Danilo~J Rezende.
\newblock Variational inference for {Monte Carlo} objectives.
\newblock \emph{arXiv preprint arXiv:1602.06725}, 2016.

\bibitem[Naesseth et~al.(2017)Naesseth, Ruiz, Linderman, and
  Blei]{naesseth2017reparameterization}
Christian Naesseth, Francisco Ruiz, Scott Linderman, and David Blei.
\newblock Reparameterization gradients through acceptance-rejection sampling
  algorithms.
\newblock In \emph{AISTATS}, pp.\  489--498, 2017.

\bibitem[Neal(1992)]{neal1992connectionist}
Radford~M Neal.
\newblock Connectionist learning of belief networks.
\newblock \emph{Artificial Intelligence}, 56\penalty0 (1):\penalty0 71--113,
  1992.

\bibitem[Owen(2013)]{mcbook}
Art~B. Owen.
\newblock \emph{Monte Carlo Theory, Methods and Examples}, chapter 8 Variance
  Reduction.
\newblock 2013.

\bibitem[Paisley et~al.(2012)Paisley, Blei, and Jordan]{paisley2012variational}
John Paisley, David~M Blei, and Michael~I Jordan.
\newblock Variational {B}ayesian inference with stochastic search.
\newblock In \emph{ICML}, pp.\  1363--1370, 2012.

\bibitem[Raiko et~al.(2014)Raiko, Berglund, Alain, and
  Dinh]{raiko2014techniques}
Tapani Raiko, Mathias Berglund, Guillaume Alain, and Laurent Dinh.
\newblock Techniques for learning binary stochastic feedforward neural
  networks.
\newblock \emph{arXiv preprint arXiv:1406.2989}, 2014.

\bibitem[Ranganath et~al.(2014)Ranganath, Gerrish, and
  Blei]{ranganath2014black}
Rajesh Ranganath, Sean Gerrish, and David Blei.
\newblock Black box variational inference.
\newblock In \emph{AISTATS}, pp.\  814--822, 2014.

\bibitem[Rezende et~al.(2014)Rezende, Mohamed, and
  Wierstra]{rezende2014stochastic}
Danilo~Jimenez Rezende, Shakir Mohamed, and Daan Wierstra.
\newblock Stochastic backpropagation and approximate inference in deep
  generative models.
\newblock In \emph{ICML}, pp.\  1278--1286, 2014.

\bibitem[Ross(2006)]{Ross:2006:IPM:1197141}
Sheldon~M. Ross.
\newblock \emph{Introduction to Probability Models}.
\newblock Academic Press, 10th edition, 2006.

\bibitem[Ruiz et~al.(2016)Ruiz, Titsias, and Blei]{ruiz2016generalized}
Francisco J.~R. Ruiz, Michalis~K. Titsias, and David~M. Blei.
\newblock The generalized reparameterization gradient.
\newblock In \emph{NIPS}, pp.\  460--468, 2016.

\bibitem[Salakhutdinov \& Murray(2008)Salakhutdinov and
  Murray]{salakhutdinov2008quantitative}
Ruslan Salakhutdinov and Iain Murray.
\newblock On the quantitative analysis of deep belief networks.
\newblock In \emph{ICML}, pp.\  872--879, 2008.

\bibitem[Saul et~al.(1996)Saul, Jaakkola, and Jordan]{saul1996mean}
Lawrence~K Saul, Tommi Jaakkola, and Michael~I Jordan.
\newblock Mean field theory for sigmoid belief networks.
\newblock \emph{Journal of Artificial Intelligence Research}, 4:\penalty0
  61--76, 1996.

\bibitem[Tang \& Salakhutdinov(2013)Tang and Salakhutdinov]{tang2013learning}
Yichuan Tang and Ruslan~R Salakhutdinov.
\newblock Learning stochastic feedforward neural networks.
\newblock In \emph{NIPS}, pp.\  530--538, 2013.

\bibitem[Tanner \& Wong(1987)Tanner and Wong]{tanner1987calculation}
Martin~A Tanner and Wing~Hung Wong.
\newblock The calculation of posterior distributions by data augmentation.
\newblock \emph{J. Amer. Statist. Assoc.}, 82\penalty0 (398):\penalty0
  528--540, 1987.

\bibitem[Titsias \& L{\'a}zaro-Gredilla(2015)Titsias and
  L{\'a}zaro-Gredilla]{titsias2015local}
Michalis~K Titsias and Miguel L{\'a}zaro-Gredilla.
\newblock Local expectation gradients for black box variational inference.
\newblock In \emph{NIPS}, pp.\  2638--2646. MIT Press, 2015.

\bibitem[Tucker et~al.(2017)Tucker, Mnih, Maddison, Lawson, and
  Sohl-Dickstein]{tucker2017rebar}
George Tucker, Andriy Mnih, Chris~J Maddison, John Lawson, and Jascha
  Sohl-Dickstein.
\newblock {REBAR}: {L}ow-variance, unbiased gradient estimates for discrete
  latent variable models.
\newblock In \emph{NIPS}, pp.\  2624--2633, 2017.

\bibitem[van~den Oord et~al.(2017)van~den Oord, Vinyals, and
  Kavukcuoglu]{van2017neural}
Aaron van~den Oord, Oriol Vinyals, and Koray Kavukcuoglu.
\newblock Neural discrete representation learning.
\newblock In \emph{NIPS}, pp.\  6306--6315, 2017.

\bibitem[Van~Dyk \& Meng(2001)Van~Dyk and Meng]{van2001art}
David~A Van~Dyk and Xiao-Li Meng.
\newblock The art of data augmentation.
\newblock \emph{Journal of Computational and Graphical Statistics}, 10\penalty0
  (1):\penalty0 1--50, 2001.

\bibitem[Williams(1992)]{williams1992simple}
Ronald~J Williams.
\newblock Simple statistical gradient-following algorithms for connectionist
  reinforcement learning.
\newblock In \emph{Reinforcement Learning}, pp.\  5--32. Springer, 1992.

\bibitem[Yin \& Zhou(2018)Yin and Zhou]{pmlr-v80-yin18b}
Mingzhang Yin and Mingyuan Zhou.
\newblock Semi-implicit variational inference.
\newblock In \emph{ICML}, pp.\  5660--5669, 2018.

\bibitem[Zhou \& Carin(2012)Zhou and Carin]{zhou2012negative}
Mingyuan Zhou and Lawrence Carin.
\newblock Negative binomial process count and mixture modeling.
\newblock \emph{arXiv preprint arXiv:1209.3442v1}, 2012.

\end{thebibliography}

\newpage
\vspace{1cm}
\appendix
{\LARGE \textbf{Appendix}}
\section{The ARM gradient ascent Algorithm} \label{sec:alg}

We summarize the algorithm to compute ARM gradient for binary latent variables. Here we show the gradient with respect to the logits associated with the probability of Bernoulli random variables. If the logits are further generated by deterministic transform such as neural networks, the gradient with respect to the transform parameters can be directly computed by the chain rule. For stochastic transforms, the implementation of ARM gradient is discussed in detail in Section \ref{Sec:3} and summarized in Algorithm \ref{alg:ARM_multilayer}.

\begin{algorithm}[H] \small{
\SetKwData{Left}{left}\SetKwData{This}{this}\SetKwData{Up}{up}
\SetKwFunction{Union}{Union}\SetKwFunction{FindCompress}{FindCompress}
\SetKwInOut{Input}{input}\SetKwInOut{Output}{output}
\Input{
Bernoulli distribution $\{q_{\phi_v}(z_v)\}_{v=1:V}$ with probability $\{\sigma(\phi_v)\}_{v=1:V}$, target $f{}(\zv)$; $\zv = (z_1,\cdots,z_V)$, $\phiv = (\phi_1,\cdots,\phi_V)$
}
\Output{$\phiv$ and %
$\psiv$ that maximize $\cE(\phiv,\psiv) = \E_{\zv \sim \prod_{v=1}^V q_{\phi_v}(z_v)
}[f{}(\zv; \psiv)]$}
\BlankLine
Initialize $\phiv$, $\psiv$ randomly\;

\While{not converged}{
Sample a mini-batch of $\xv$ from the data\;
 Sample $z_v \sim \text{Bernoulli}(\sigma(\phi_v) )$ for $v = 1,\cdots,V$ \; 
 sample $u_v \sim \text{Uniform}(0,1)$ for $v = 1,\cdots,V$, $\uv = (u_1,\cdots,u_V)$ \;
 $g_{\psiv} = \nabla_{\psiv}f{}(\zv;\psiv)$ \;

 $f_{\Delta}(\uv,\phiv)
={ f\big(\mathbf{1}_{[\uv>\sigma(-{\phiv})]}\big) -f\big(\mathbf{1}_{[\uv <\sigma(\phiv) ]}\big)}$
 \;
 $g_{\phiv} = f_{\Delta}(\uv, \phiv) (\uv - 0.5)$

$\phiv = \phiv + \rho_t g_{\phiv}, ~~~ \psiv = \psiv + \eta_t g_{\psiv},~~~~$ with stepsizes $\rho_t $, $\eta_t$
}
\caption{ARM gradient for a $V$-dimensional binary latent vector}\label{alg:ARM_b}}
\end{algorithm}

\begin{algorithm}[H] \small{
\caption{ARM gradient for a $T$-stochastic-hidden-layer binary network}\label{alg:ARM_multilayer}

\SetKwData{Left}{left}\SetKwData{This}{this}\SetKwData{Up}{up}
\SetKwFunction{Union}{Union}\SetKwFunction{FindCompress}{FindCompress}
\SetKwInOut{Input}{input}\SetKwInOut{Output}{output}
\Input{
Inference network $q_{ \wv_{1:T}}(\bv_{1:T}\given \xv) = q_{ \wv_1}(\bv_1\given \xv)\Big[\prod\nolimits_{t=1}^{T-1}q_{\wv_{t+1}}(\bv_{t+1} \given \bv_{t})\Big]$ where $q_{\wv_t}(\bv_{t}\given \bv_{t-1}) = {\mbox{Bernoulli}}(\bv_t;\sigma(\mathcal{T}_{\wv_t}(\bv_{t-1}) ))$, target $f(\bv_{1:T}; \psiv)$
}
\Output{ $\wv_{1:T}$ and $\psiv$ that maximize $\cE(\wv_{1:T},\psiv) = \E_{\bv_{1:T} \sim q_{ \wv_{1:T}}
}[f{}(\bv_{1:T}; \psiv)]$}
\BlankLine
Initialize $\wv_{1:T}$, $\psiv$ randomly\;

\While{not converged}{
Sample a mini-batch of $\xv$ from data\;
\For{t = 1:T}
{
If $t \geq 2$, sample $\bv_{t-1} \sim q(\bv_{t-1} | \bv_{t-2})$, $\text{if}~t=2, \bv_{1:t-1} = \bv_1, \text{else}~ \bv_{1:t-1} = [\bv_{1:t-2}, \bv_{t-1}]$ \;
 sample $\uv_t \sim \prod \text{Uniform}(0,1)$ \;
$\bv_t^1 = \mathbf{1}_{[\uv_t >\sigma(- \mathcal{T}_{\wv_t}(\bv_{t-1}))]}$ \;
$\bv_t^2 = \mathbf{1}_{[\uv_t < \sigma( \mathcal{T}_{\wv_t}(\bv_{t-1}))]}$ \;
\uIf{ $\bv_t^1 = \bv_t^2$ } {
$f_{\Delta}(\bv_{1:t-1}, \bv_{t:T}^1, \bv_{t:T}^2) = 0$ \;
}
\Else{
$\bv_{t+1:T}^1 \sim q(\bv_{t+1:T} | \bv_t^1)$ \;
$\bv_{t+1:T}^2 \sim q(\bv_{t+1:T} | \bv_t^2)$ \;
 $f_{\Delta}(\bv_{1:t-1}, \bv_{t:T}^1, \bv_{t:T}^2) = f(\bv_{1:t-1}, \bv_{t:T}^1) - f(\bv_{1:t-1}, \bv_{t:T}^2)$ \;
 $g_{\wv_t} = f_{\Delta}(\bv_{1:t-1}, \bv_{t:T}^1, \bv_{t:T}^2) (\uv_t - \frac{1}{2})^T \nabla_{\wv_t}\mathcal{T}_{\wv_t}(\bv_{t-1})$ \;
}
$\wv_t = \wv_t + \rho_t g_{\wv_t}$ with step-size $\rho_t $
}
$\psiv = \psiv + \eta_t \nabla_{\psiv} f(\bv_{1:T};\psiv)$ with step-size $\eta_t$
}
}
\end{algorithm}

\section{Original derivation of the AR and ARM estimators}\label{sec:original_drivation}

Let us denote $t\sim\mbox{Exp}(\lambda)$ as an exponential distribution, whose probability density function is defined as $p(t\given \lambda)=\lambda e^{-\lambda t}$, where $\lambda>0$ and $t>0$. The mean and variance are $\E[t] =\lambda^{-1} $ and $\mbox{var}[t]=\lambda^{-2}$, respectively. The exponential random variable $t\sim\mbox{Exp}(\lambda)$ can be reparameterized as $t=\epsilon/\lambda,~\epsilon\sim\mbox{Exp}(1) $. It is well known, $e.g.$, in \citet {Ross:2006:IPM:1197141}, that if $t_1\sim\mbox{Exp}(\lambda_1)$ and $t_2\sim\mbox{Exp}(\lambda_2)$ are two independent exponential random variables, then the probability that $t_1$ is smaller than $t_2$ can be expressed as
$
P(t_1<t_2) = {\lambda_1}/(\lambda_1+\lambda_2);
$
moreover, since $t_1\sim\mbox{Exp}(\lambda_1)$ is equal in distribution to $ %
\epsilon_1/\lambda_1 ,~ \epsilon_1\sim \mbox{Exp}(1)$ %
and $t_2\sim\mbox{Exp}(\lambda_2)$ is equal in distribution to $ %
\epsilon_2 /\lambda_2,~ \epsilon_2 \sim \mbox{Exp}(1)$, %
we have
\ba{
P(t_1<t_2)=P(\epsilon_1 /\lambda_1<\epsilon_2 /\lambda_2) =P(\epsilon_1 <\epsilon_2 \lambda_1/\lambda_2)= {\lambda_1}/(\lambda_1+\lambda_2).\label{eq:race}
}

\subsection{Augmentation of a Bernoulli random variable and Reparameterization}
From \eqref{eq:race} it becomes clear that 
the Bernoulli random variable $z\sim\text{Bernoulli}(\sigma(\phi))$ can be reparameterized by comparing two augmented exponential random variables as 
\ba{
z=\mathbf{1}_{[\epsilon_1 <\epsilon_2 e^{\phi}]},~\epsilon_1\sim \mbox{Exp}(1),~\epsilon_2\sim\mbox{Exp}(1). \label{eq:BerRepara}
}
Consequently, %
the expectation with respect to the Bernoulli random variable can be reparameterized as one with respect to two augmented exponential random variables as
\ba{
\mathcal{E}(\phi)=\E_{z\sim\text{Bernoulli}(\sigma(\phi))}[f(z)] = \E_{ \epsilon_1, \epsilon_2\stackrel {iid}\sim \text{Exp}(1)}[f(\mathbf{1}_{[\epsilon_1 e^{-\phi}<\epsilon_2 ]})]. 
\label{eq:E_z_repara}
}

\subsection{REINFORCE estimator in the augmented space}
Since the indicator function $\mathbf{1}_{[\epsilon_1 e^{-\phi} <\epsilon_2 ]}$ is not differentiable, 
the reparameterization trick in \eqref{eq:E_score1} is not directly applicable to computing the gradient of %
 \eqref{eq:E_z_repara}.
Fortunately, as $t_1 = \epsilon_1 e^{-\phi}, ~\epsilon_1\sim\mbox{Exp}(1)$ is equal in distribution to $t_1\sim\mbox{Exp}(e^{\phi})$, the expectation in \eqref{eq:E_z_repara} can be further reparameterized as
\ba{
\mathcal{E}(\phi)= \E_{ \epsilon_1, \epsilon_2\stackrel {iid}\sim \text{Exp}(1)}[f(\mathbf{1}_{[\epsilon_1 e^{-\phi} <\epsilon_2 ]})]= \E_{t_1\sim \text{Exp}(e^{\phi}), ~\epsilon_2\sim \text{Exp}(1)}[f(\mathbf{1}_{[t_1 <\epsilon_2 ]})],
}
and hence, via REINFORCE and then another reparameterization, we can express the gradient as
\ba{
\nabla_{\phi} \mathcal{E}(\phi)&= \E_{t_1\sim \text{Exp}(e^{\phi}), ~\epsilon_2\sim \text{Exp}(1)}[f(\mathbf{1}_{[t_1 <\epsilon_2 ]}) \nabla_{\phi} \log \mbox{Exp}(t_1;e^{\phi}) ]\notag\\
& = \E_{t_1\sim \text{Exp}(e^{\phi}),~ \epsilon_2\sim \text{Exp}(1)}[f(\mathbf{1}_{[t_1 <\epsilon_2 ]}) (1-t_1e^{\phi})]\notag\\
& = \E_{ \epsilon_1, \epsilon_2\stackrel {iid}\sim \text{Exp}(1)}[f(\mathbf{1}_{[\epsilon_1 e^{-\phi}<\epsilon_2 ]} ) (1-\epsilon_1)]. \label{eq:grad_1}
}
Similarly, %
we have
$
\mathcal{E}(\phi)= \E_{ \epsilon_1, \epsilon_2\stackrel {iid}\sim \text{Exp}(1)}[f(\mathbf{1}_{[\epsilon_1 <\epsilon_2e^{\phi} ]} )]= \E_{ \epsilon_1\sim \text{Exp}(1),~t_2\sim \text{Exp}(-e^{\phi})}[f(\mathbf{1}_{[\epsilon_1 <t_2 ]} )], \label{eq:EXP}
$
and hence can also express the gradient as %
\ba{
\nabla_{\phi} \mathcal{E}(\phi)&= \E_{\epsilon_1\sim \text{Exp}(1),~t_2\sim \text{Exp}(e^{-\phi}) }[f(\mathbf{1}_{[\epsilon_1<t_2 ]} ) \nabla_{\phi} \log \mbox{Exp}(t_2;e^{-\phi}) ]\notag\\
& = -\E_{\epsilon_1\sim \text{Exp}(1),~t_2\sim \text{Exp}(e^{-\phi})}[f(\mathbf{1}_{[\epsilon_1<t_2 ]} ) (1-t_2e^{-\phi})]\notag\\
& = - \E_{ \epsilon_1, \epsilon_2\stackrel {iid}\sim \text{Exp}(1)}[f(\mathbf{1}_{[\epsilon_1 e^{-\phi} <\epsilon_2 ]} ) (1-\epsilon_2)].\label{eq:grad_2}
}

Note that letting $\epsilon_1, \epsilon_2\stackrel{iid}\sim \mbox{Exp}(1)$ is the same in distribution as letting 
\ba{ 
\epsilon_1 = \epsilon u,~\epsilon_2 = \epsilon (1-u),~~~\text{where } u\sim\mbox{Uniform}(0,1),~\epsilon\sim \mbox{Gamma}(2,1),\label{eq:exp}
}
 which can be proved using $\mbox{Exp}(1) \stackrel{d} = \mbox{Gamma}(1,1)$, $ (u,1-u)^T \stackrel{d}=\mbox{Dirichlet }(\mathbf{1}_{2}) $, where $u\sim \mbox{Uniform}(0,1)\}$, %
together with Lemma IV.3 of \citet{zhou2012negative}; we use ``$\stackrel{d} =$'' to denote ``equal in distribution.''
Thus, %
\eqref{eq:EXP}
 can be reparameterized as
\bas{
\nabla_{\phi} \mathcal{E}(\phi)&= \E_{u\sim\text{Uniform}(0,1),~\epsilon\sim \text{Gamma}(2,1)}\left[
 f(\mathbf{1}_{[u<\sigma(\phi)]}) (1-\epsilon u)\right],
}
Applying Rao-Blackwellization \citep{casella1996rao}, we can further express the gradient as
\ba{
\nabla_{\phi} \mathcal{E}(\phi)&
= \E_{u\sim\text{Uniform}(0,1)}\left[ f(\mathbf{1}_{[u<\sigma(\phi)]} ) (1-2u)\right].
}
Therefore, the gradient estimator shown above, the same as \eqref{eq:ARgrad}, is referred to as the Augment-REINFORCE (AR) estimator.

\subsection{Merge of REINFORCE gradients}
A key observation of the paper is that by swapping the indices of the two $iid$ standard exponential random variables in \eqref{eq:grad_2} , the gradient $\nabla_{\phi} \mathcal{E}(\phi)$ can be equivalently expressed as
\ba{
\nabla_{\phi} \mathcal{E}(\phi) %
= - \E_{ \epsilon_1, \epsilon_2\stackrel {iid}\sim \text{Exp}(1)}[f(\mathbf{1}_{[\epsilon_2 e^{-\phi} <\epsilon_1 ]} ) (1-\epsilon_1)]. \label{eq:grad_2_1}
}
As the term inside the expectation in \eqref{eq:grad_1} and that in \eqref{eq:grad_2_1} could be highly positively correlated, we are
motivated to merge \eqref{eq:grad_1} and \eqref{eq:grad_2_1} by sharing the same set of standard exponential random variables for Monte Carlo integration, %
which provides a new opportunity to well control the estimation variance \citep{mcbook}.
More specifically, simply 
taking the average of \eqref{eq:grad_1} and \eqref{eq:grad_2_1} leads to
\ba{
\nabla_{\phi} \mathcal{E}(\phi) %
= \E_{ \epsilon_1, \epsilon_2\stackrel {iid}\sim \text{Exp}(1)}\left[ \big({f(\mathbf{1}_{[\epsilon_1 e^{-\phi} <\epsilon_2 ]} ) -f(\mathbf{1}_{[\epsilon_2 e^{-\phi} <\epsilon_1 ]} )} \big)(1/2-\epsilon_1/2)\right]. \label{eq:grad_merge}
}
Note one may also take a weighted average of \eqref{eq:grad_1} and \eqref{eq:grad_2_1}, and optimize the combination weight to potentially further reduce the variance of the estimator. We leave that for future study. 

Note that %
\eqref{eq:grad_merge} can be reparameterized as
\bas{
\nabla_{\phi} \mathcal{E}(\phi)&= \E_{u\sim\text{Uniform}(0,1),~\epsilon\sim \text{Gamma}(2,1)}\left[
\big(f(\mathbf{1}_{[u>\sigma(-\phi)]} )- f(\mathbf{1}_{[u<\sigma(\phi)]}) \big) (\epsilon u/2-1/2)\right],
}
Applying Rao-Blackwellization \citep{casella1996rao}, we can further express the gradient as
\ba{
\nabla_{\phi} \mathcal{E}(\phi)&
= \E_{u\sim\text{Uniform}(0,1)}\left[\big(f(\mathbf{1}_{[u>\sigma(-\phi)]}) - f(\mathbf{1}_{[u<\sigma(\phi)]} )\big) (u-1/2) \right ].
 \label{eq:grad_merge_2}
}
Therefore, the gradient estimator shown above, the same as \eqref{eq:ARMgrad}, is referred to as the Augment-REINFORCE-merge (ARM) estimator. 

\section{Proofs}\label{sec:D}

\begin{proof}[Proof of Proposition \ref{prop:uni-variate}]
Since the gradients $g_\text{ARM}(u,\phi)$, $g_\text{AR}(u,\phi)$, and $g_\text{R}(z,\phi)$ are all unbiased, their expectations are the same as the true gradient $g_{true}(\phi) = \sigma(\phi)(1-\sigma(\phi))[f{}(1)-f{}(0)]$.
Denote $f_{\Delta}(u,\phi)=f(\mathbf{1}_{[u>\sigma(-\phi)]}) - f(\mathbf{1}_{[u<\sigma(\phi)]}) $. 
Since \ba{
f_{\Delta}(u,\phi)
&=\begin{cases}
0, & \text{ if }\sigma(-\abs{\phi})<u<\sigma(\abs{\phi}),\\
f(1)-f{}(0), & \text{ if } u>\sigma(\abs{\phi}),\\
f(0)-f{}(1), & \text{ if } u<\sigma(-\abs{\phi}),\\
\end{cases}
\label{eq:thm1}
}
The second moment of $g_\text{ARM}(u,\phi)$ can be expressed as
\bas{
&\bE_{u\sim\text{Uniform}(0,1)} [g_\text{ARM}^2(u,\phi)] = \bE_{u\sim\text{Uniform}(0,1)}[f^2_{\Delta}(u,\phi) (u-1/2)^2]\\
=& \int_{\sigma(\abs{\phi})}^1 [f{}(1)-f{}(0)]^2(u-1/2)^2du+\int_0^{\sigma(-\abs{\phi})} [f{}(0)-f{}(1)]^2(u-1/2)^2du\\
=& \frac{1}{12}[1-(\sigma(|\phi|)-\sigma(-|\phi|))^3 ][f{}(1)-f{}(0)]^2
}
Denoting $t = %
\sigma(\abs{\phi})-\sigma(-\abs{\phi}) %
$,
we can re-express $g_{true}(\phi) = \frac{1}{4}(1-t^2)[f{}(1)-f{}(0)]$. 
Thus, the variance of $g_\text{ARM}(u,\phi)$ can be expressed as 
\ba{
\var[g_\text{ARM}(u,\phi)]
 =& \frac{1}{4}\left[\frac{1}{3}(1-t^3)-\frac{1}{4}(1-t^2)^2\right][f{}(1)-f{}(0)]^2\notag\\
=& \frac{1}{16}(1-t)(t^3+\frac{7}{3}t^2+\frac{1}{3}t+\frac{1}{3})[f{}(1)-f{}(0)]^2\label{eq:vararm} \\
\leq & \frac{1}{25}[f{}(1)-f{}(0)]^2,\notag
}
which reaches its maximum 
 at $0.039788 [f{}(1)-f{}(0)]^2$ 
when $t = \frac{\sqrt{5}-1}{2}$.

For the REINFORCE gradient, we have 
\bas{
\bE_{z\sim \text{Bernoulli}(\sigma(\phi))}[g_\text{R}^2(z,\phi)] =
& \bE_{z\sim \text{Bernoulli}(\sigma(\phi))} \left[ f^2(z)(z(1-\sigma(\phi))-\sigma(\phi)(1-z))^2\right] \\
=& \sigma(\phi)(1-\sigma(\phi)) [(1-\sigma(\phi))f^2(1)+ \sigma(\phi) f^2(0)].
}
Therefore the variance can be expressed as
\bas{
&\var[g_\text{R}(u,\phi)] \notag\\
 =& \sigma(\phi)(1-\sigma(\phi))\left[(1-\sigma(\phi))f^2(1)+ \sigma(\phi) f^2(0)-\sigma(\phi)(1-\sigma(\phi))[f{}(1)-f{}(0)]^2\right] \\
 =& \sigma(\phi)(1-\sigma(\phi))[(1-\sigma(\phi))f(1)+\sigma(\phi)f(0)]^2.
}
The largest variance satisfies $$\sup_\phi{\var[g_\text{R}(z,\phi)]}\geq \var[g_\text{R}(z,0)] = \frac{1}{16}[f(1)+f(0)]^2,$$ 
and hence when $f$ is always positive or negative, we have 
$$
\frac{\sup_\phi{\var[g_\text{ARM}(z,\phi)]}}{\sup_\phi{\var[g_\text{R}(z,\phi)]}} \leq \frac{16}{25} \big(1 - 2 \frac{f(0)}{f(0)+f(1)} \big)^2 \leq \frac{16}{25}.
$$
In summary, the ARM gradient has a variance that is bounded by $ \frac{1}{25}(
f(1)-f{}(0))^2$, and its worst-case variance is smaller than that of REINFORCE.
\end{proof}

\begin{proof}[Proof of Proposition \ref{prop:arm_ar}]

We only need to prove for $K=1$ and the proof for $K>1$ automatically follows. 
Since $\bE_{\uv}[f(\mathbf{1}_{[\uv<\sigma(\phiv)]})^2(u_v - 1/2)^2 ] =\bE_{\uv}[f(\mathbf{1}_{[\uv>\sigma(-\phiv)]})^2(u_v - 1/2)^2 ]$, we have
\bas{
\text{var}(g_{\text{ARM}_1,v}) - \text{var}(g_{\text{AR}_1,v}) &= -3\bE_{\uv}[f(\mathbf{1}_{[\uv<\sigma(\phiv)]})^2(u_v - 1/2)^2 ]+ \bE_{\uv}[f(\mathbf{1}_{[\uv>\sigma(-\phiv)]})^2(u_v - 1/2)^2 ]
\notag\\
&~~~~~~ -2
\bE_{\uv}[f(\mathbf{1}_{[\uv>\sigma(-\phiv)]}) f(\mathbf{1}_{[\uv<\sigma(\phiv)]} )(u_v - 1/2)^2 ] \notag\\
&= -\bE_{\uv}[f(\mathbf{1}_{[\uv<\sigma(\phiv)]})^2(u_v - 1/2)^2 ]- \bE_{\uv}[f(\mathbf{1}_{[\uv>\sigma(-\phiv)]})^2(u_v - 1/2)^2 ]
\notag\\
&~~~~~~ -2
\bE_{\uv}[f(\mathbf{1}_{[\uv>\sigma(-\phiv)]}) f(\mathbf{1}_{[\uv<\sigma(\phiv)]} )(u_v - 1/2)^2 ] \notag\\
&=- \bE_{\uv}\left[ \big(f(\mathbf{1}_{[\uv>\sigma(-\phiv)]}) + f(\mathbf{1}_{[\uv<\sigma(\phiv)]} )\big)^2(u_v - 1/2)^2\right ]\notag\\& \le 0,
}
which shows that the estimation variance of $g_{\text{ARM}_K,v}$ is guaranteed to be lower than that of the $g_{\text{AR}_{K},v}$, unless $f(\mathbf{1}_{[\uv>\sigma(-\phiv)]}) + f(\mathbf{1}_{[\uv<\sigma(\phiv)]}) =0$ almost surely. Furthermore, since
\bas{
\text{var}(g_{\text{ARM}_1,v}) - \text{var}(g_{\text{AR}_2,v}) =& \bE_{\uv}[(f(\mathbf{1}_{[\uv<\sigma(\phiv)]})-f(\mathbf{1}_{[\uv>\sigma(-\phiv)]}))^2(u_v - 1/2)^2 ] \\ 
&- \bE_{\uv ^{(1)},  \uv^{(2)}}[(f(\mathbf{1}_{[\uv ^{(1)}<\sigma(\phiv)]})(u_{v} ^{(1)} - 1/2) + f(\mathbf{1}_{[ \uv^{(2)}<\sigma(\phiv)]})(u_{v}^{(2)} - 1/2) )^2 ] \\
=& -2 \bE_{\uv ^{(1)}}[f(\mathbf{1}_{[\uv ^{(1)}<\sigma(\phiv)]})(u_{v} ^{(1)} - 1/2)] \bE_{ \uv^{(2)}}[f(\mathbf{1}_{[ \uv^{(2)}<\sigma(\phiv)]})(u_{v}^{(2)}- 1/2) )] \\
& - 2\bE_{\uv}[f(\mathbf{1}_{[\uv<\sigma(\phiv)]})f(\mathbf{1}_{[\uv>\sigma(-\phiv)]})) (u_v - 1/2)^2] \\
= & -2\big(\bE_{\uv}[f(\mathbf{1}_{[\uv<\sigma(\phiv)]})(u_{v} - 1/2)] \big)^2 \\
&- 2\bE_{\uv}[f(\mathbf{1}_{[\uv<\sigma(\phiv)]})f(\mathbf{1}_{[\uv>\sigma(-\phiv)]})) (u_v - 1/2)^2], 
}
when $f$ is always positive or negative, the variance of $g_{\text{ARM}_K,v}$ is lower than that of $g_{\text{AR}_{2K},v}$.
\end{proof}

\begin{proof}[Proof of Proposition \ref{prop:cv}]

Denoting $g(\uv) = g_{\text{AR}}(\uv) - \bv(\uv)$, we have $$\text{var}[g_v(\uv)] - \text{var}[g_{\text{AR},v}(\uv)] = -2\bE_{\uv}[g_{\text{AR},v}(\uv) b_v(\uv)] + \bE_{\uv}[b^2_v(\uv)] .$$ To maximize the variance reduction, %
it is equivalant to consider the constrained optimization problem
\bas{
\min_{b_v(\uv)} &~~-2\bE_{\uv}[g_{\text{AR},v}(\uv) b_v(\uv)] + \bE_{\uv}[b_v^2(\uv)] \\
&\text{subject to:}~~b_v(\uv) = -b_v(1-\uv),
}
which is the same as a Lagrangian problem as
\bas{
\small \min_{b_v(\uv),\lambda_v(\uv)} \mathcal{L}(b_v(\uv),\lambda_v(\uv))=-2\bE_{\uv}[g_{\text{AR},v}(\uv) b_v(\uv)] + \bE_{\uv}[b_v^2(\uv)] + \int \lambda_v(\uv) (b_v(\uv)+b_v(1-{\uv})) d\uv.
}
Setting $\frac{\delta \mathcal{L}}{\delta \lambda_v} = 0$ gives $b_v(\uv)+b_v(1-{\uv}) = 0$. 
By writing $
\int \lambda_v(\uv) (b_v(\uv)+b_v(1-{\uv})) d\uv = \int (\lambda_v(\uv)+\lambda_v(1-{\uv})) b_v(\uv) d\uv
$
and setting $\frac{\delta \mathcal{L}}{\delta b_v} = 0$, we have 
\ba{
[2 g_{\text{AR,v}}(\uv) - 2b_v(\uv)]p(\uv) = \lambda_v(\uv)+\lambda_v(1-{\uv}).
\label{lag:1}
}
 Interchange $\uv$ and $1-{\uv}$ gives 
\ba{
[2 g_{\text{AR,v}}(1-\uv) - 2b_v(1-\uv)]p(1-\uv) = \lambda_v(1-\uv)+\lambda_v({\uv}).
\label{lag:2}
}
Solving \eqref{lag:1} and \eqref{lag:2} with $b_v(\uv)+b_v(1-{\uv}) = 0$ and $p(\uv) = p(1-{\uv})$, we have the optimal baseline function as $b_v^*(\uv) = \frac{1}{2}(g_{\text{AR},v}(\uv)-g_{\text{AR},v}(1-{\uv}) )$. The proof is completed by noticing that $ g_{\text{AR}}(\uv)- b^*(\uv) $ is the same as the single sample gradient estimate under the ARM estimator. 
\end{proof}

\begin{proof}[Proof of Corollary \ref{prop:cv1}]

Since $b_v(\uv) =c_v(1-2u)$ satisfies the anti-symmetric property, we can directly arrive at Corollary \ref{prop:cv1} using Proposition \ref{prop:cv}. Alternatively, 
since $\bE_{\uv}[f(\mathbf{1}_{[\uv<\sigma(\phiv)]})^2(u_v - 1/2)^2 ] =\bE_{\uv}[f(\mathbf{1}_{[\uv>\sigma(-\phiv)]})^2(u_v - 1/2)^2 ]$ and $\bE_{\uv}[f(\mathbf{1}_{[\uv<\sigma(\phiv)]})(u_v - 1/2)^2 ] =\bE_{\uv}[f(\mathbf{1}_{[\uv>\sigma(-\phiv)]})(u_v - 1/2)^2 ]$, for $g_{C,v}=(f(\mathbf{1}_{[\uv<\sigma(\phiv)]} )- c_v)(1-2u_v)$, we have
\bas{
&~~\text{var}(g_{C,v}) - \text{var}(g_{\text{ARM},v}) \notag\\
&= 
\bE_{\uv}[(f(\mathbf{1}_{[\uv<\sigma(\phiv)]} )- c_v)^2 
(1- 2u_v)^2 ]-\bE_{\uv}[(f(\mathbf{1}_{[\uv<\sigma(\phiv)]} )-f(\mathbf{1}_{[\uv>\sigma(-\phiv)]} ))^2(u_v - 1/2)^2 ]\notag\\
& =\bE_{\uv} \big[\big(4c_v^2 - 8c_v f(\mathbf{1}_{[\uv<\sigma(\phiv)]})
+2f^2(\mathbf{1}_{[\uv<\sigma(\phiv)]})+2f(\mathbf{1}_{[\uv<\sigma(\phiv)]})f(\mathbf{1}_{[\uv>\sigma(-\phiv)]})\big)(u_v - 1/2)^2 \big]
\notag\\
& = 
\bE_{\uv}\big[ \big(f(\mathbf{1}_{[\uv>\sigma(-\phiv)]}) + f(\mathbf{1}_{[\uv<\sigma(\phiv)]}) -2c_v\big)^2(u_v - 1/2)^2 \big]\notag\\
&\ge 0.
}
\end{proof}

\begin{proof}[Proof of Proposition \ref{prop:multi}]
First, to compute the gradient with respect to $\wv_1$, since 
\ba{
\mathcal{E}(\wv_{1:T}) =\E_{q(\bv_{1})}\E_{q(\bv_{2:T}\given \bv_1) %
}[f(\bv_{1:T}) ] ,%
}
we have
\ba{
&\nabla_{\wv_1}\mathcal{E}(\wv_{1:T}) %
 = \E_{\uv_1\sim\text{Uniform}(0,1)} [f_{\Delta}(\uv_1,\mathcal{T}_{\wv_1}(\xv))(\uv_1-1/2)] \nabla_{\wv_1} \mathcal{T}_{\wv_1}(\xv),
 }
 where
 \ba{
f_{\Delta}(\uv_1,\mathcal{T}_{\wv_1}(\xv)) &= \E_{\bv_{2:T}\sim q(\bv_{2:T}\given \bv_1),~ \bv_1= \mathbf{1}_{[\uv_1>\sigma(-\mathcal{T}_{\wv_1}(\xv))]} )}[f(\bv_{1:T})] \notag\\
&~~~- \E_{\bv_{2:T}\sim q(\bv_{2:T}\given \bv_1), ~ \bv_1= \mathbf{1}_{[\uv_1<\sigma(\mathcal{T}_{\wv_1}(\xv))]} )}[f(\bv_{1:T})].
}

Second, to compute the gradient with respect to $\wv_t$, where $2\le t\le T-1$, since 
\ba{
\mathcal{E}(\wv_{1:T}) =\E_{q(\bv_{1:t-1})}\E_{q(\bv_{t}\given \bv_{t-1})} \E_{q(\bv_{t+1:T}\given \bv_t) } [f(\bv_{1:T})],
}
we have
\ba{
&\nabla_{\wv_t}\mathcal{E}(\wv_{1:T}) %
\! =\!\E_{q(\bv_{1:t-1})} \!\!\left[\E_{\uv_t\sim\text{Uniform}(0,1)}[f_{\Delta}(\uv_t,\mathcal{T}_{\wv_t}(\bv_{t-1}) , \bv_{1:t-1} )(\uv_t-1/2) ] \nabla_{\wv_t}\mathcal{T}_{\wv_t}(\bv_{t-1})\right], %
}
where
\ba{
f_{\Delta}(\uv_t,\mathcal{T}_{\wv_t}(\bv_{t-1}) , \bv_{1:t-1}) &= \E_{\bv_{t+1:T}\sim q(\bv_{t+1:T}\given \bv_t),~\bv_t = \mathbf{1}_{[\uv_t>\sigma(-\mathcal{T}_{\wv_t}(\bv_{t-1}))]}) } [f(\bv_{1:T})]\notag\\
&~~~- \E_{\bv_{t+1:T}\sim q(\bv_{t+1:T}\given \bv_t),~\bv_t = \mathbf{1}_{[\uv_t<\sigma(\mathcal{T}_{\wv_t}(\bv_{t-1}))]}) } [f(\bv_{1:T})].
}

Finally, to compute the gradient with respect to $\wv_T$, %
we have
\ba{
\nabla_{\wv_T}\mathcal{E}(\wv_{1:T}) &%
 =\E_{q(\bv_{1:T-1})} \left[\E_{\uv_T\sim\text{Uniform}(0,1)}[f_{\Delta}(\uv_T,\mathcal{T}_{\wv_T}(\bv_{T-1}) , \bv_{1:T-1} )(\uv_T-1/2) ] \right.\notag \\&~~~~~~~~~~~~~~~~~~~~~~~~~~~~\times\left.\nabla_{\wv_T}\mathcal{T}_{\wv_T}(\bv_{T-1})\right], %
}
where
\ba{
f_{\Delta}(\uv_T,\mathcal{T}_{\wv_T}(\bv_{T-1}) , \bv_{1:T-1}) &= f(\bv_{1:T-1},\bv_T = \mathbf{1}_{[\uv_T>\sigma(-\mathcal{T}_{\wv_T}(\bv_{T-1}))]})\notag\\
&~~~-f(\bv_{1:T-1},\bv_T = \mathbf{1}_{[\uv_T<\sigma(\mathcal{T}_{\wv_T}(\bv_{T-1}))]}).}

\end{proof}
\newpage
\section{Additional experimental results} \label{sec:addresults}%

For the univariate AR gradient, we have 
\bas{
\bE_{u\sim\text{Uniform}(0,1)} [g_\text{AR}^2(u, \phi)] = & \int_0^{\sigma(\phi)} f^2(1)(1-2u)^2 du + \int_{\sigma(\phi)}^1f^2(0)(1-2u)^2 du \\
= & \frac{1}{6} (f^2(0) + f^2(1)) + \frac{1}{6}(1-2\sigma(\phi))^3(f^2(0) - f^2(1)) .
}
Thus its variance can be expressed as
$$
\var[g_\text{AR}(u, \phi)] = \frac{1}{6} (f^2(0) + f^2(1)) + \frac{1}{6}(1-2\sigma(\phi))^3(f^2(0) - f^2(1)) - \sigma^2(\phi)(1-\sigma(\phi))^2[f{}(1)-f{}(0)]^2,
$$
which is used for related plots in Figures \ref{fig:toy} and \ref{fig:toy_multiple2}.

\begin{figure}[h]
\centering
\includegraphics[width=0.85\textwidth, height=6cm]{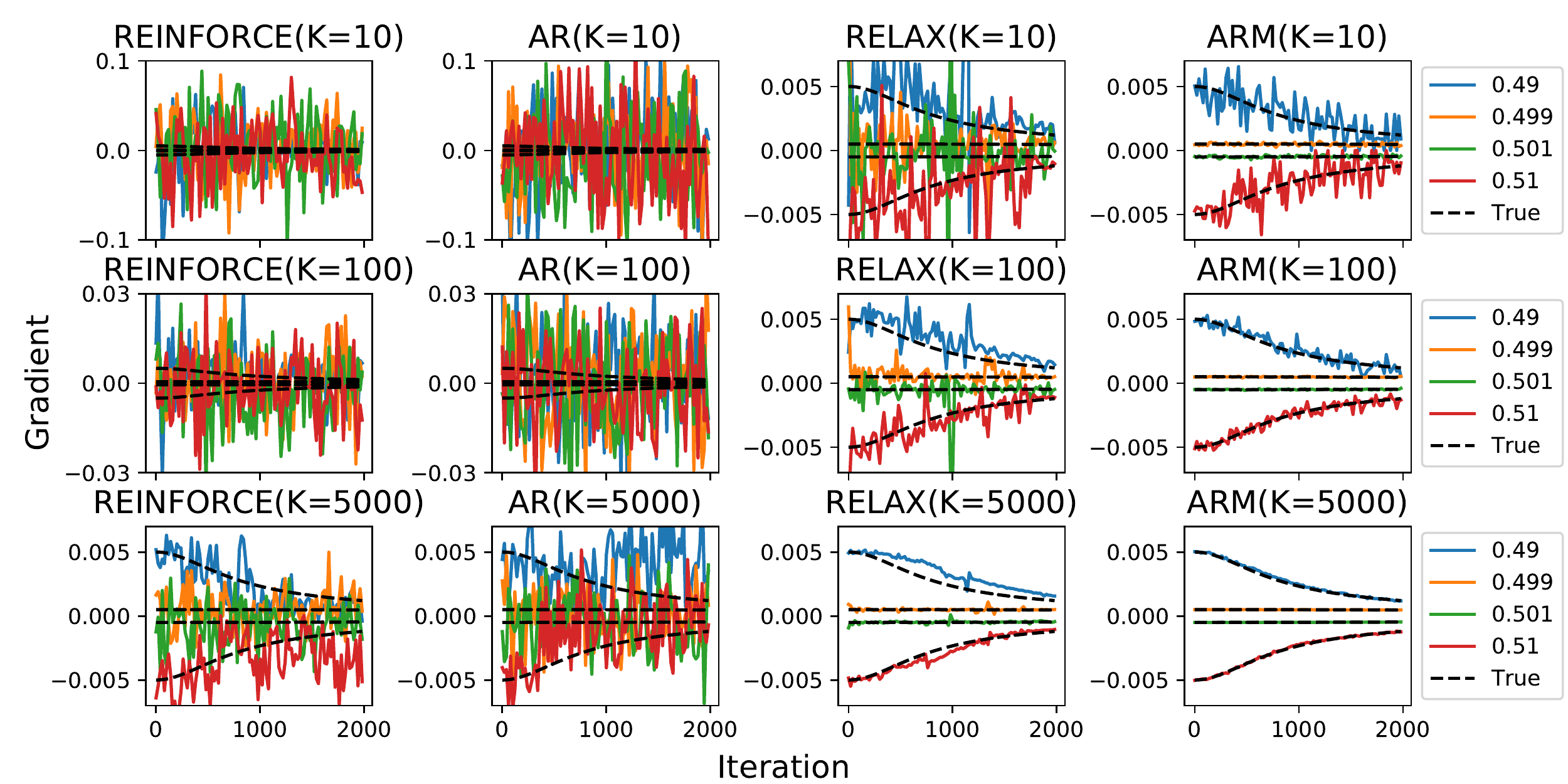} \,
\caption{\small 
 Comparison of a variety of gradient estimators for estimating the gradient of $\mathcal{E}(\phi)=\bE_{z\sim\text{Bernoulli}(\sigma(\phi)) }[(z-p_0)^2]$, where $p_0\in\{0.49, 0.499, 0.501, 0.51\}$ and the values of $\phi$ are updated via gradient ascent with the true gradients. Shown in Rows 1, 2, and 3 are the trace plots of the estimated gradients using $K=10$, $100$, and $5000$ Monte Carlo samples, respectively. 
}
\label{fig:toy_multiple}
\end{figure}

\begin{figure}[h]
\centering
\includegraphics[width=0.85\textwidth, height=4.5cm]{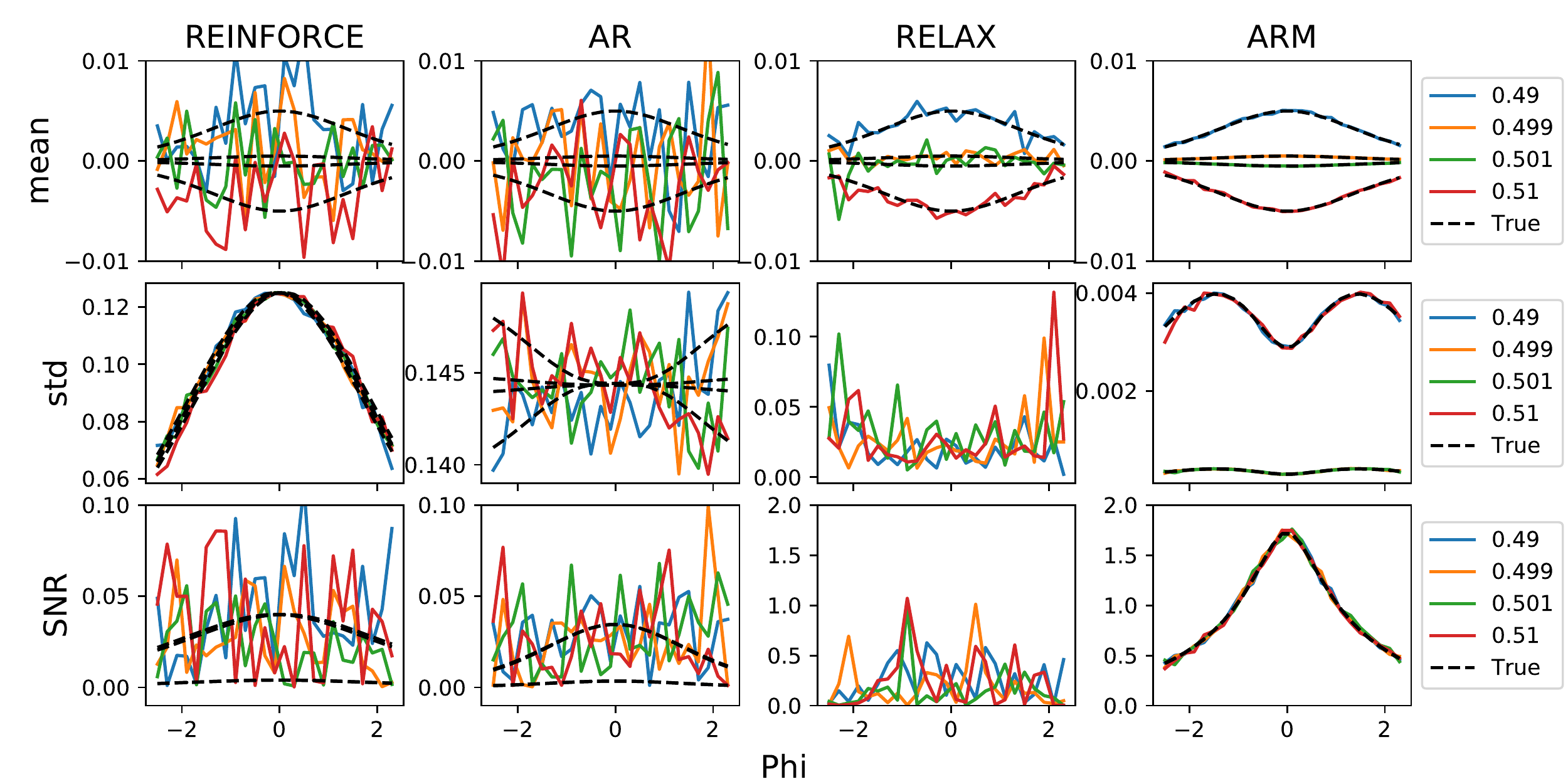} 
\caption{\small 
Comparison of a variety of gradient estimators for estimating the gradient of $\mathcal{E}(\phi)=\bE_{z\sim\text{Bernoulli}(\sigma(\phi)) }[(z-p_0)^2]$, where $p_0\in\{0.49, 0.499, 0.501, 0.51\}$ and the values of $\phi$ range from $-2.5$ to $2.5$. 
For each $\phi$ value, we compute for each estimator $K=1000$ single-Monte-Carlo-sample gradient estimates, and use them to calculate their sample mean $\bar{g}$, sample standard deviation $s_g$, and gradient signal-to-noise ratio $\mbox{SNR}_g=|\bar{g}|/s_g$. 
In each estimator specific column, we plot %
$\bar{g}$, $s_g$, and $\mbox{SNR}_g$ in Rows 1, 2, and 3, respectively. The theoretical gradient standard deviations and gradient signal-to-noise ratios  are also shown if they can be analytically calculated (see Eq. \ref{eq:trueSNR} and Appendices \ref{sec:D} and \ref{sec:addresults} and for related analytic expressions). 
}
\label{fig:toy_multiple2}
\vspace{-0.3cm}
\end{figure}

\begin{table}[h]
\small
\centering
{
\caption{\small Test negative ELBOs of discrete VAEs trained with four different stochastic gradient estimators. MNIST-threshold is the binarized MNIST thresholded at 0.5 and MNIST-static is the binarized MNIST used in \citet{salakhutdinov2008quantitative, larochelle2011neural}. \label{tab:VAE}}
\begin{tabular}{ccccccc}
\toprule
 \multicolumn{3}{c}{} & ARM & RELAX & REBAR & ST Gumbel-Softmax \\
 \midrule
\multirow{9}{*}{Bernoulli} & \multirow{3}{*}{Nonlinear} & MNIST-threshold & \textbf{101.3} & 110.9 & 111.6 & 112.5 \\ 
 && MNIST-static & \textbf{109.9} & 112.1 & 111.8 & - \\
 && OMNIGLOT & 129.5 & \textbf{128.2} & 128.3 & 140.7 \\ \cline{2-7} 
& \multirow{3}{*}{Linear}& MNIST-threshold & \textbf{110.3} & 122.1 & 123.2 & 129.2 \\ %
&& MNIST-static & \textbf{116.2} & 116.7 & 117.9 & - \\
&& OMNIGLOT & \textbf{124.2} & 124.4 & 124.9 & 129.8 \\
 \cline{2-7} 
& \multirow{3}{*}{Two layers} & MNIST-threshold & \textbf{98.2} & 114.0 & 113.7 & - \\ 
 && MNIST-static & 105.8 & 105.6 & \textbf{105.5} & - \\
& & OMNIGLOT & \textbf{118.3} & 119.1 & 118.8& -\\ %
\bottomrule
\end{tabular}}%
\end{table}

\begin{figure}[t]
\centering
\begin{tabular}{ccc}
\hspace{-2em} \includegraphics[width=0.3\textwidth]{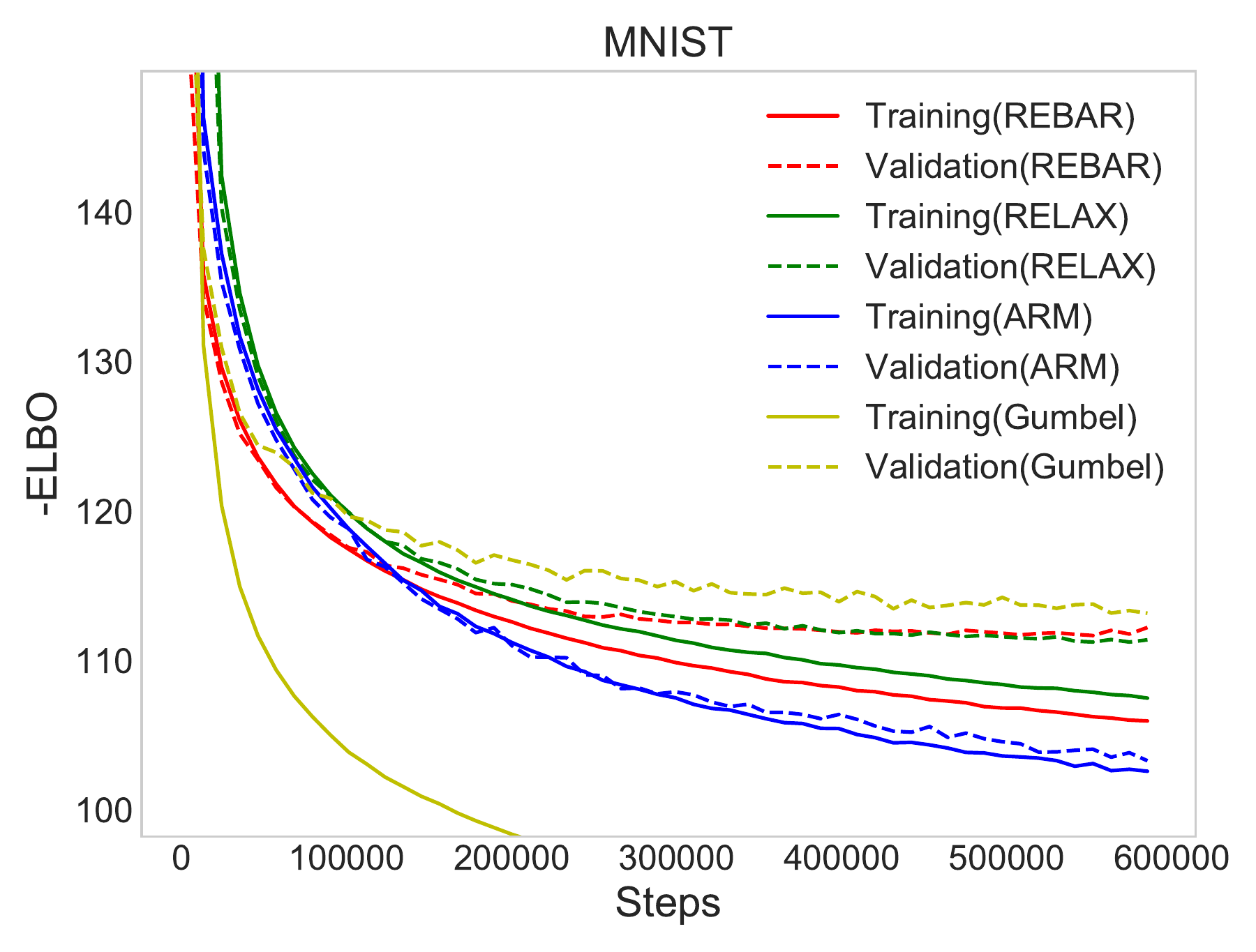}&
\hspace{-2em} \includegraphics[width=0.3\textwidth]{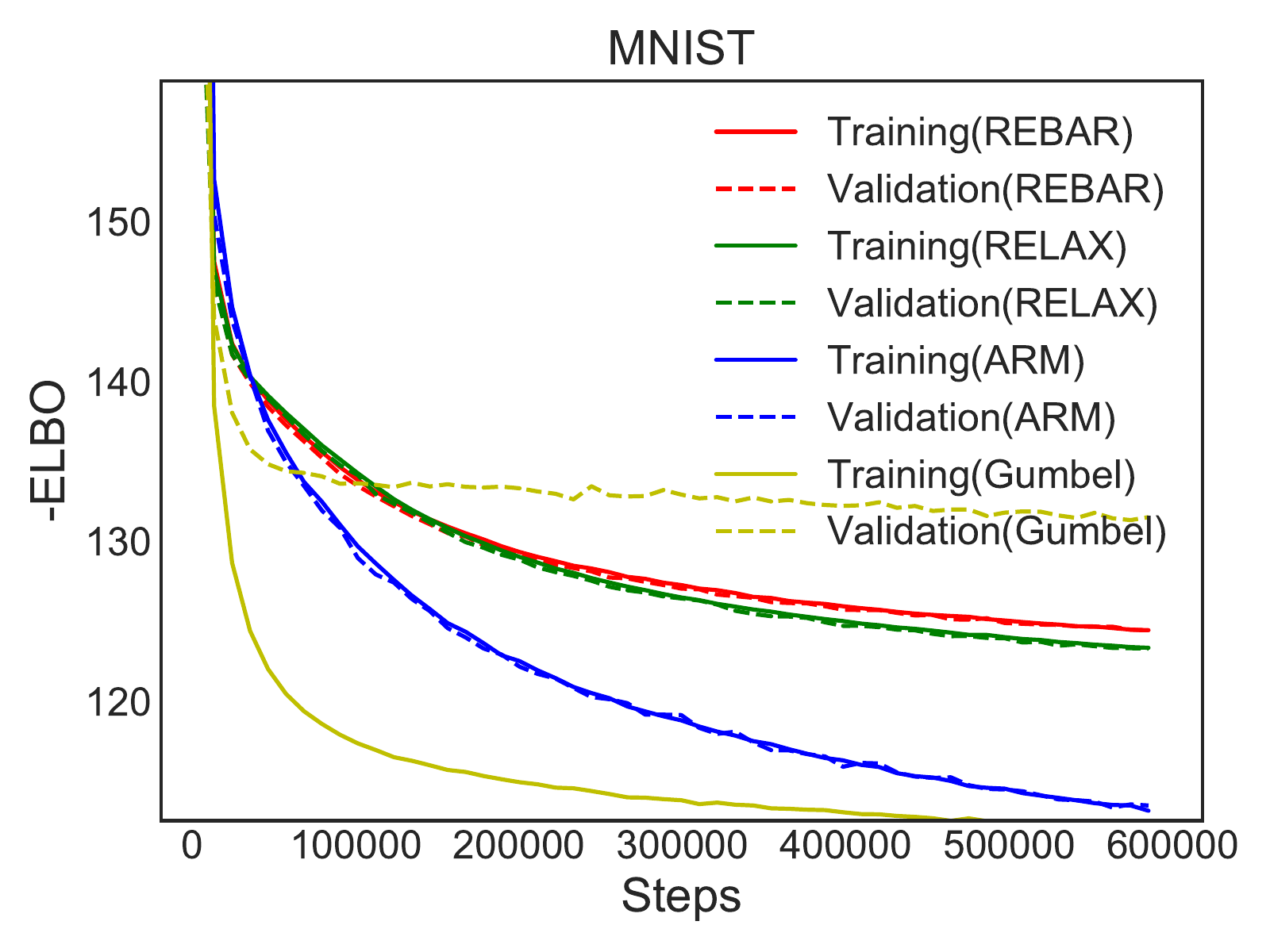}&
\hspace{-2em} \includegraphics[width=0.3\textwidth]{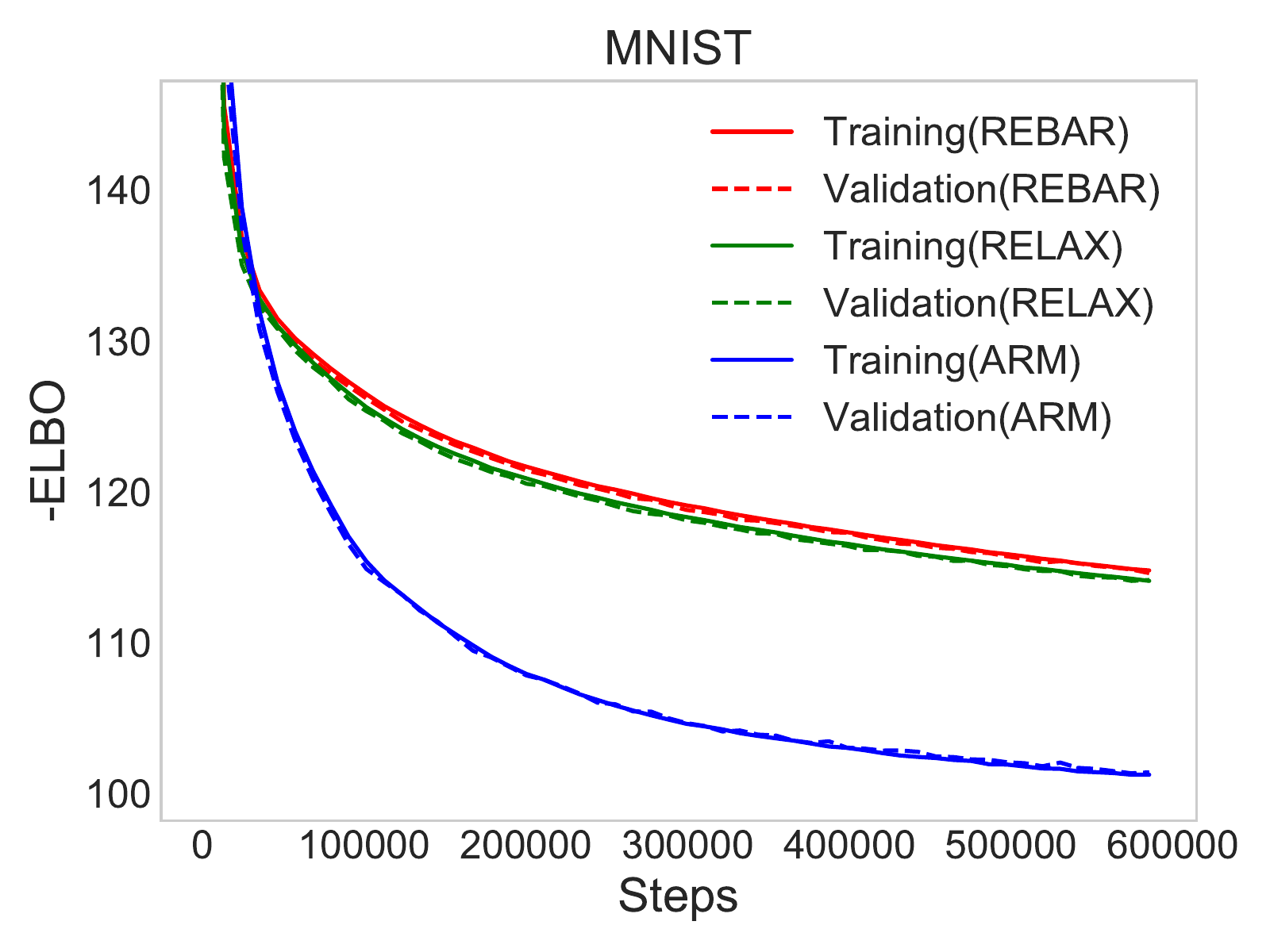}\vspace{-2mm}\\
\scriptsize (a) Nonlinear &\scriptsize(b) Linear &\scriptsize(c) Linear two layers \\ 
\hspace{-2em} \includegraphics[width=0.295\textwidth]{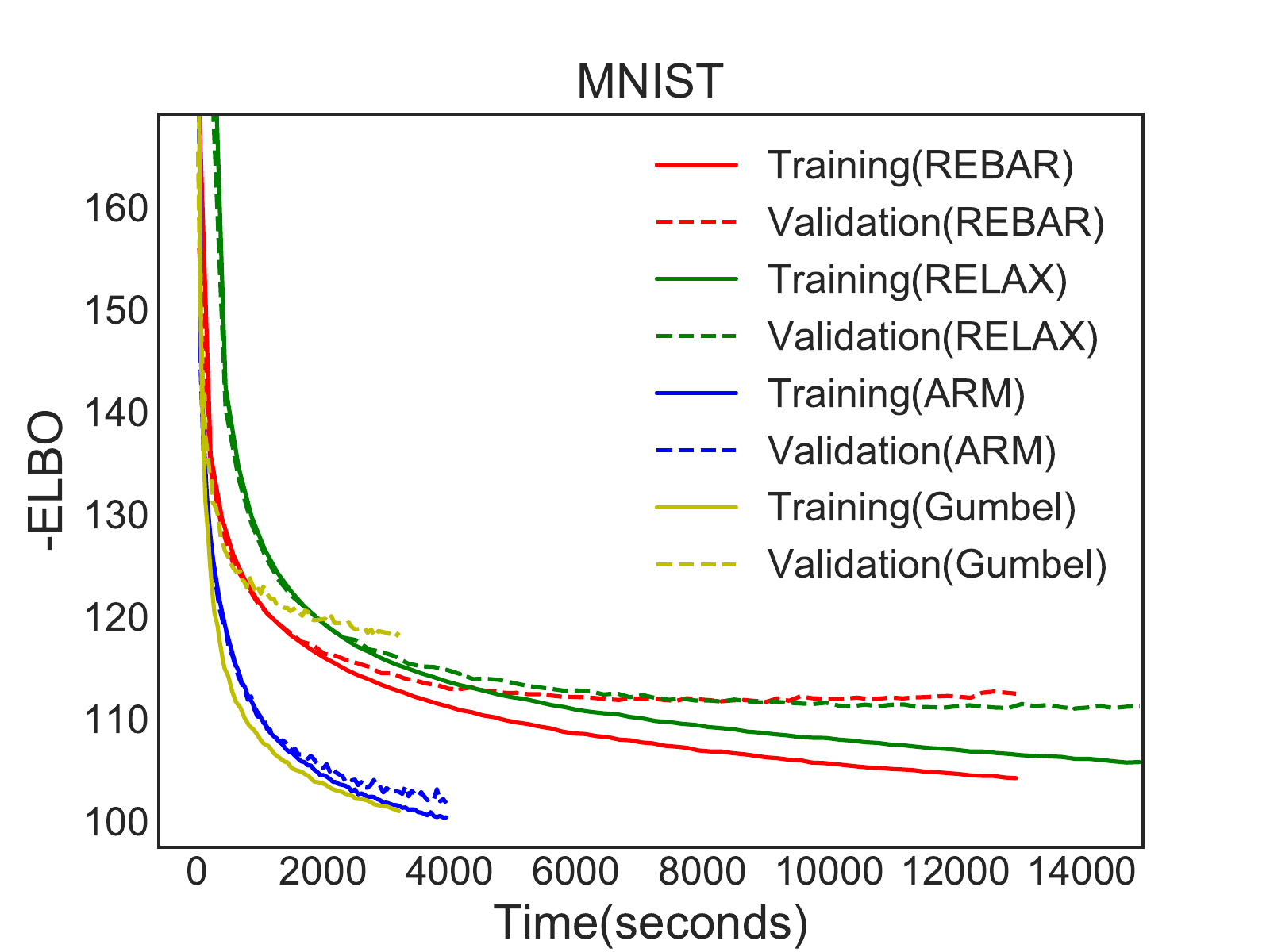}&
\hspace{-1.8em} \includegraphics[width=0.293\textwidth]{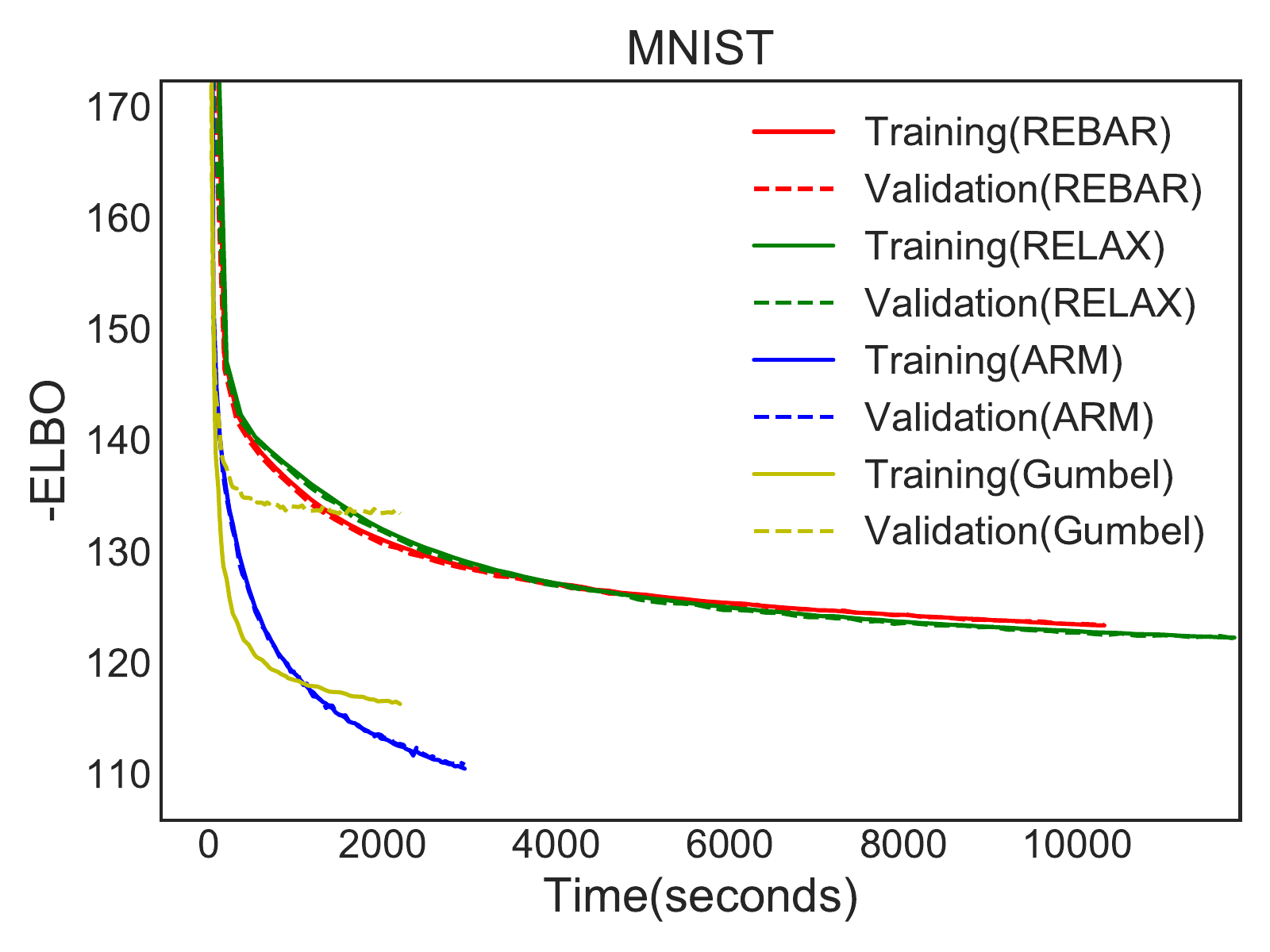}&
\hspace{-1.8em} \includegraphics[width=0.293\textwidth]{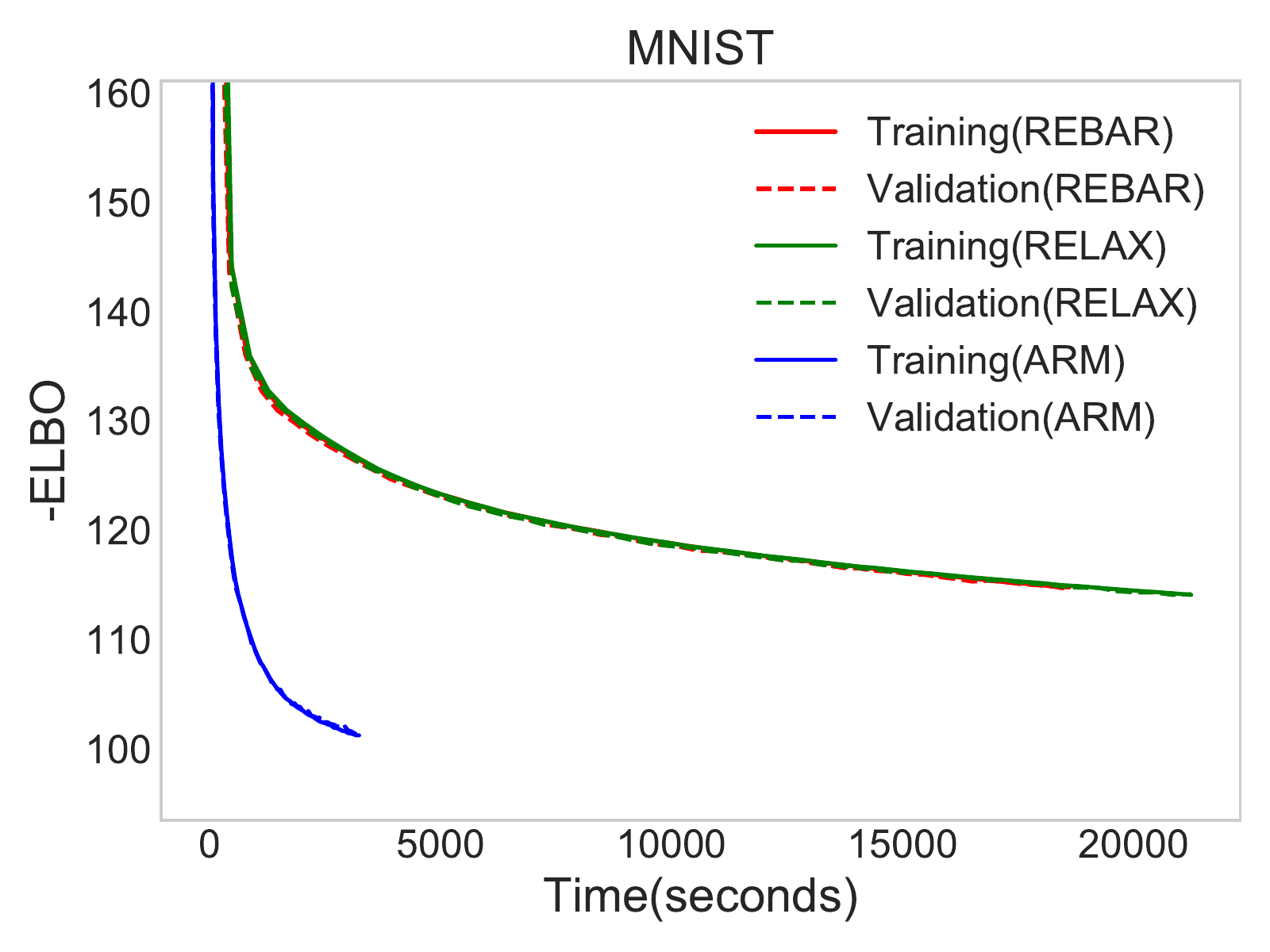}\vspace{-2mm}\\
\scriptsize (d) Nonlinear &\scriptsize(e) Linear &\scriptsize(f) Linear two layers \\ \vspace{-2mm}
\end{tabular}\vspace{-4mm}
\caption{\small Training and validation negative ELBOs on MNIST-threshold with respect to the training iterations, shown in the top row, and with respect to the wall clock times on Tesla-K40 GPU, shown in the bottom row, for three differently structured Bernoulli VAEs. 
 }\label{fig:mnist}
 \end{figure}
 
 \begin{figure}[t]
 \centering
\begin{tabular}{ccc}
\hspace{-2em} \includegraphics[width=0.3\textwidth]{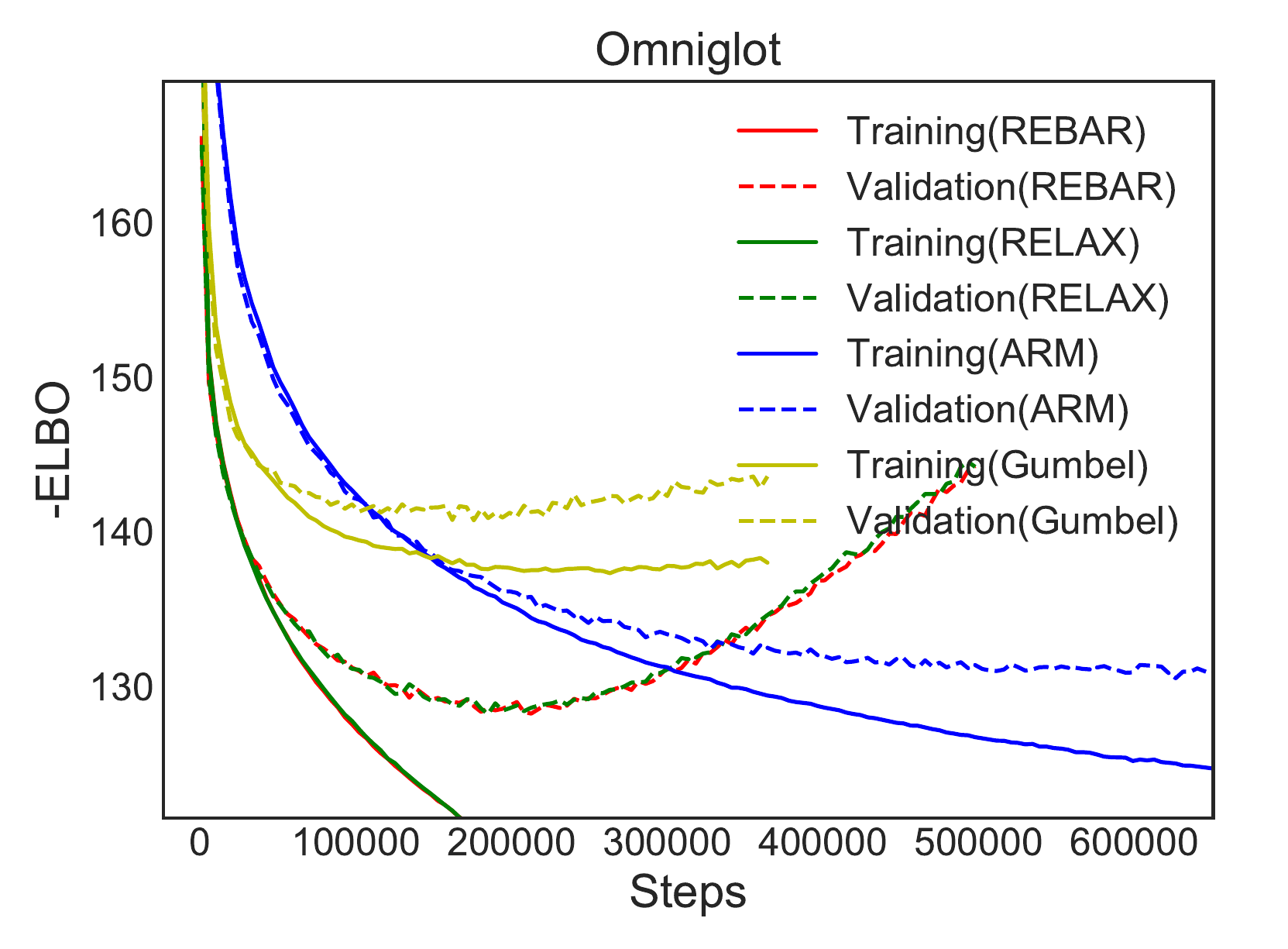}&
\hspace{-2em} \includegraphics[width=0.3\textwidth]{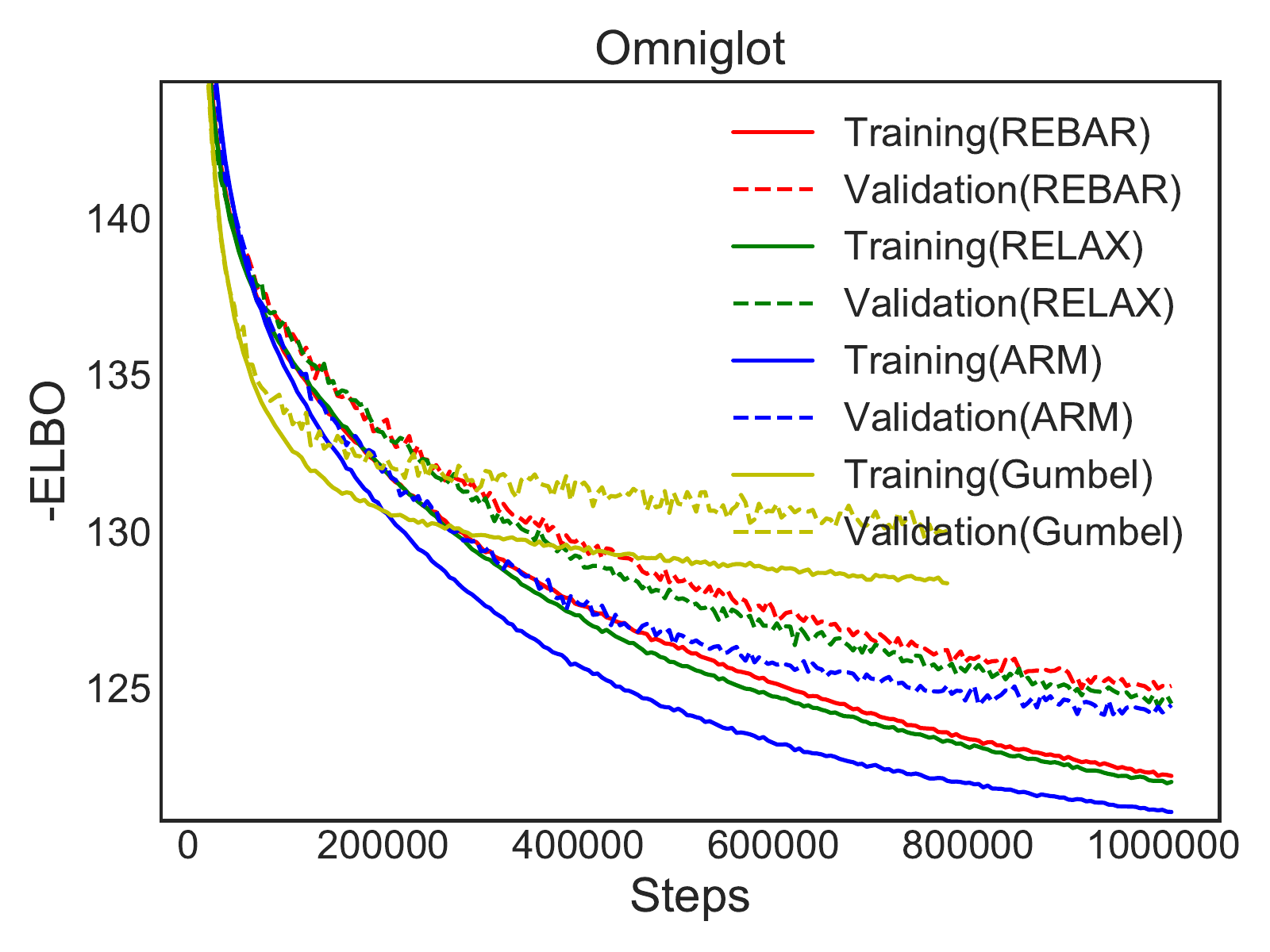}&
\hspace{-1.5em} \includegraphics[width=0.3\textwidth]{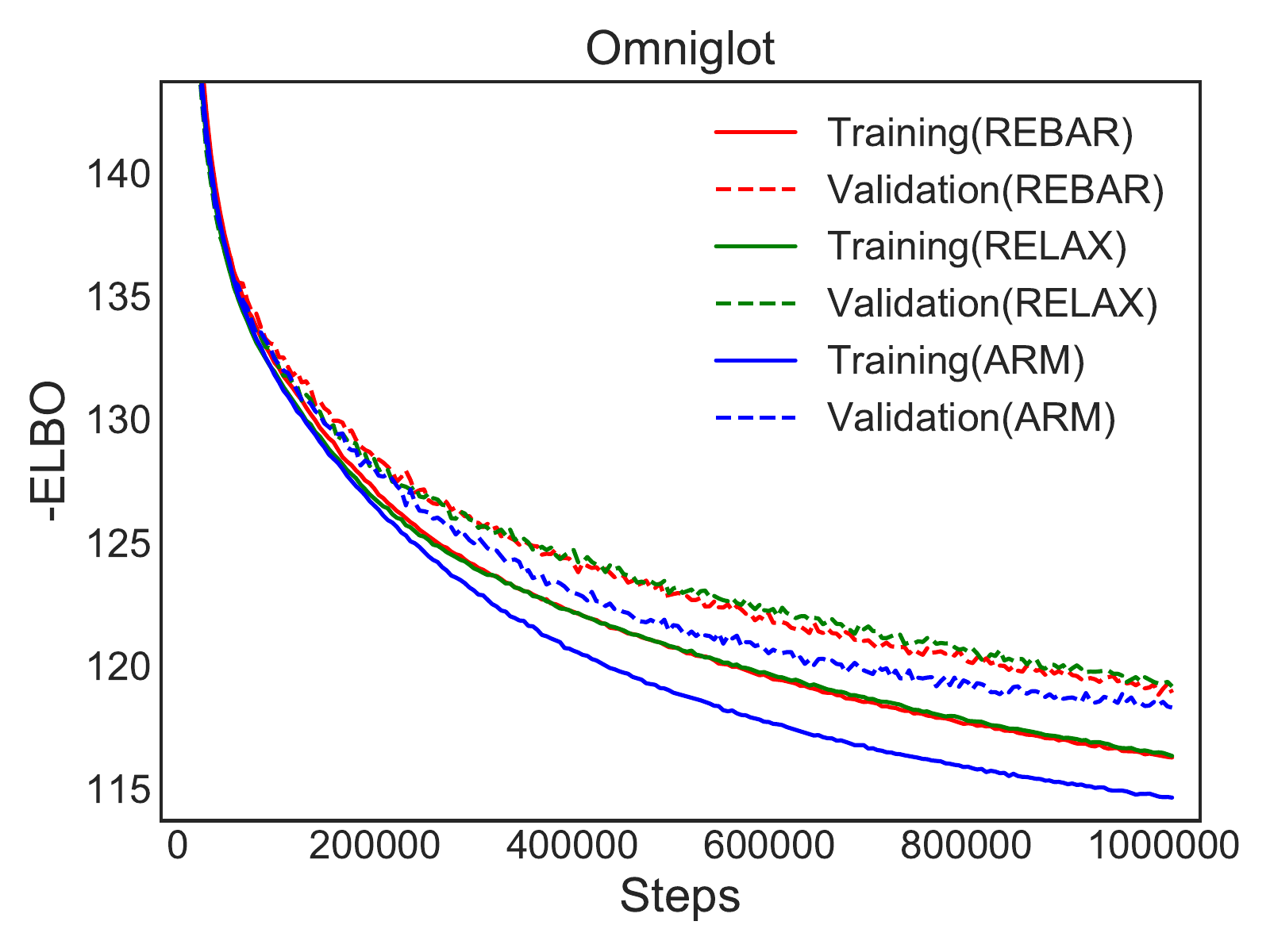}\\
\scriptsize (a) Nonlinear &\scriptsize(b) Linear &\scriptsize(c) Linear two layers \\ 
\hspace{-2em} \includegraphics[width=0.295\textwidth]{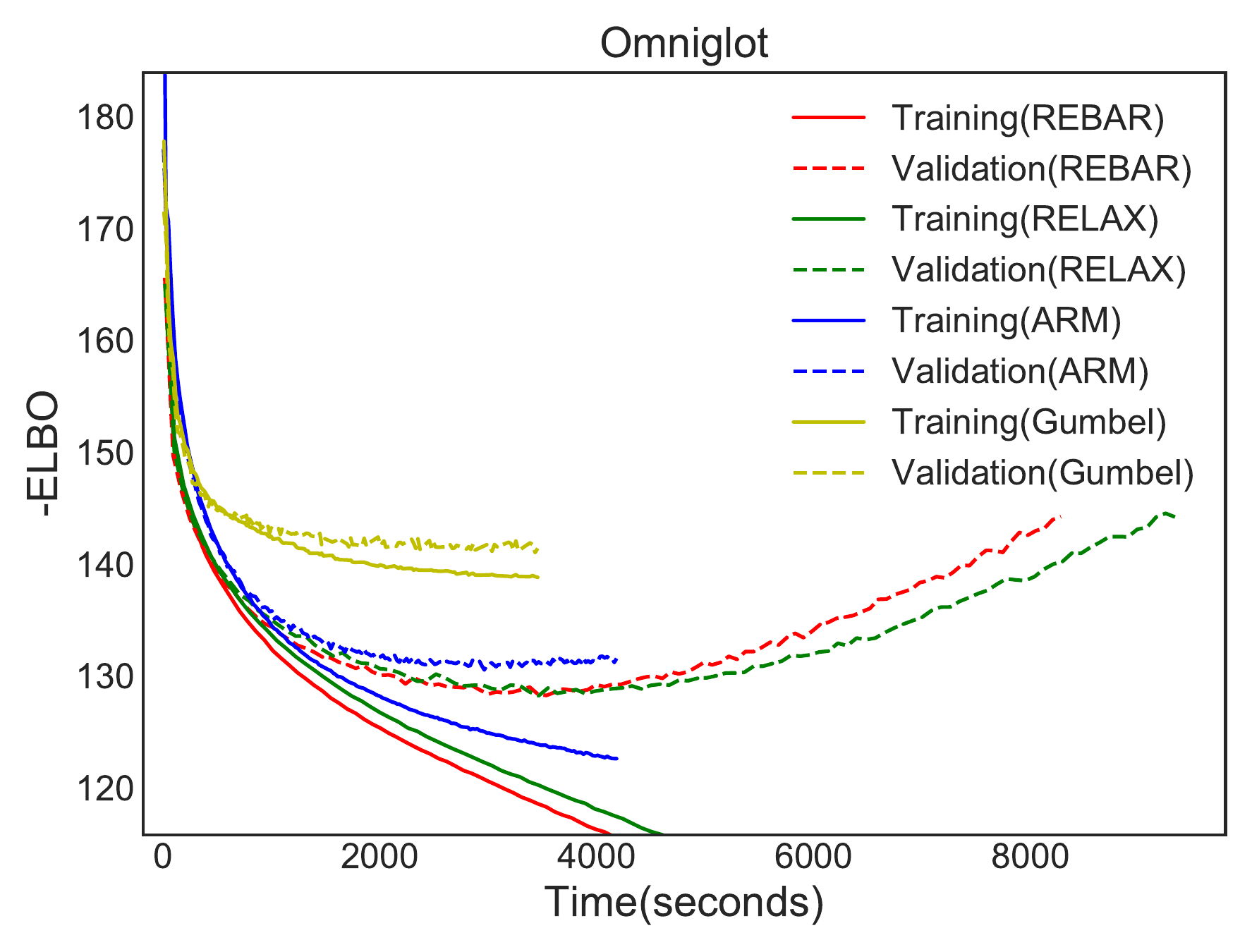}&
\hspace{-1.8em} \includegraphics[width=0.3\textwidth]{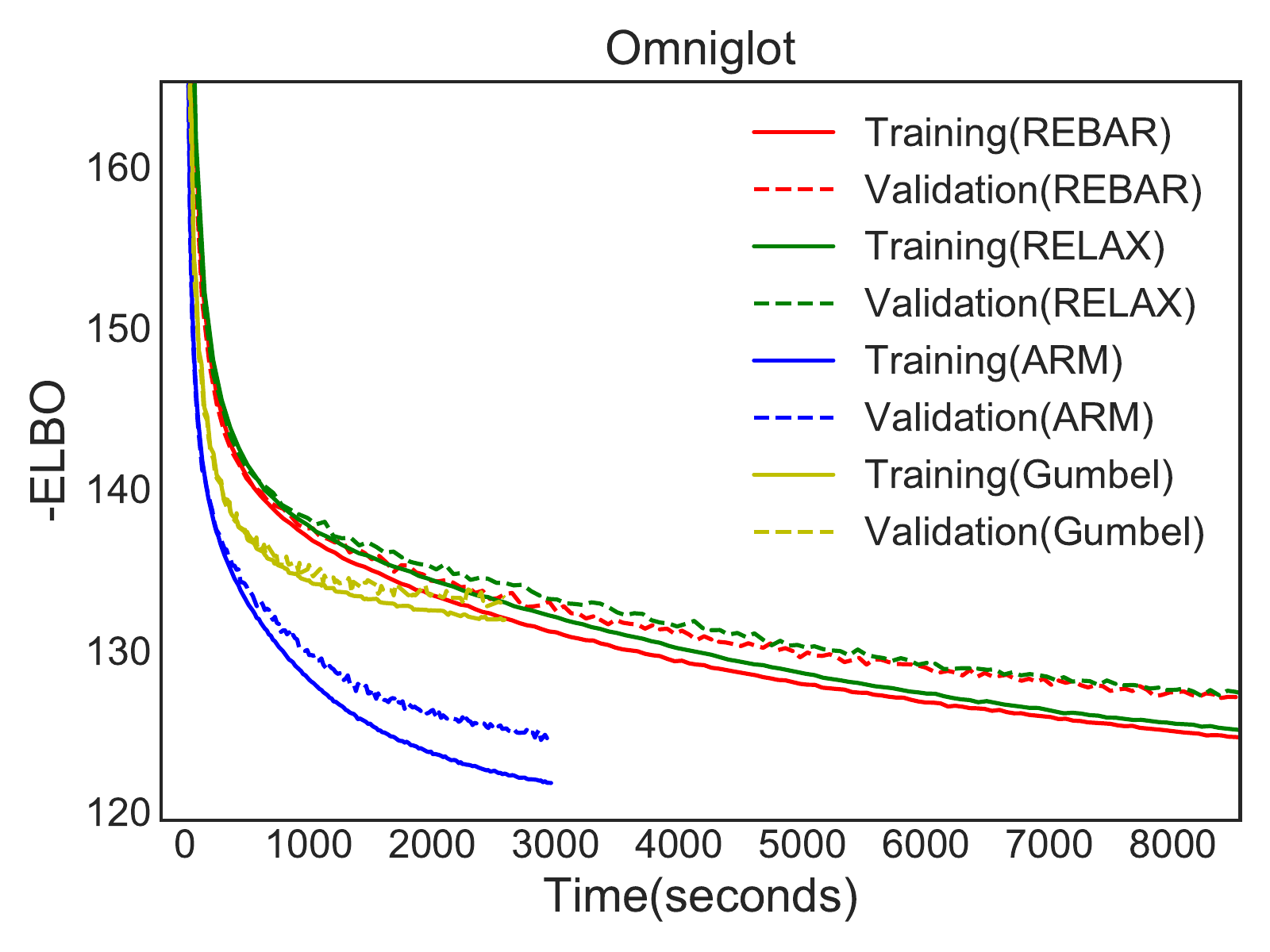}&
\hspace{-1.5em} \includegraphics[width=0.3\textwidth]{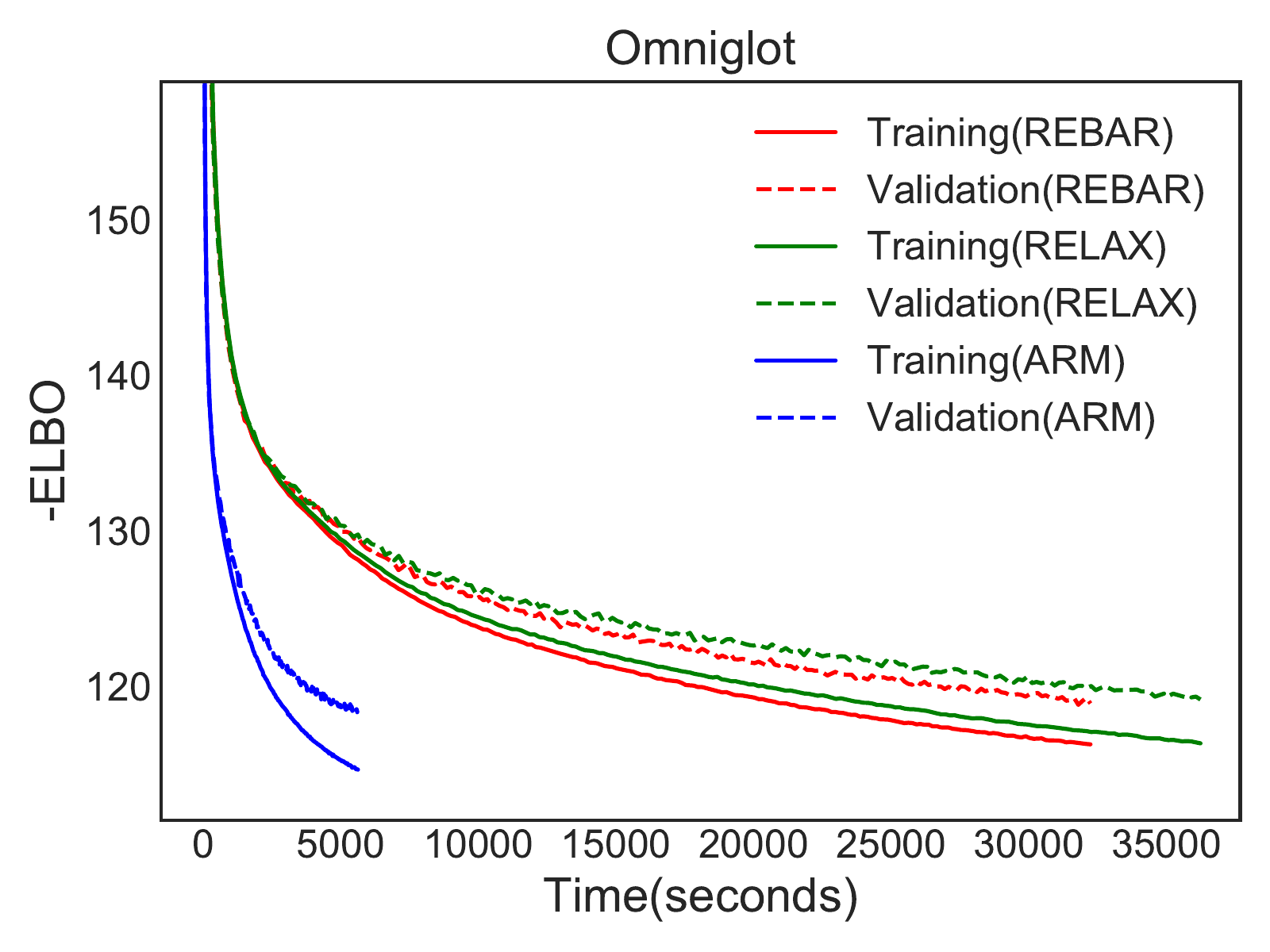}\\
\scriptsize (d) Nonlinear &\scriptsize(e) Linear &\scriptsize(f) Linear two layers \\ 
\end{tabular}
\vspace{-2mm}
\caption{\small
Training and validation negative ELBOs on OMNIGLOT with respect to the training iterations, shown in the top row, and with respect to the wall clock times on Tesla-K40 GPU, shown in the bottom row, for three differently structured Bernoulli VAEs. 
}
\label{fig:omni}
\end{figure}

\begin{figure}[t]
\centering
\begin{tabular}{cc}
\includegraphics[width=0.4\textwidth,height =5cm]{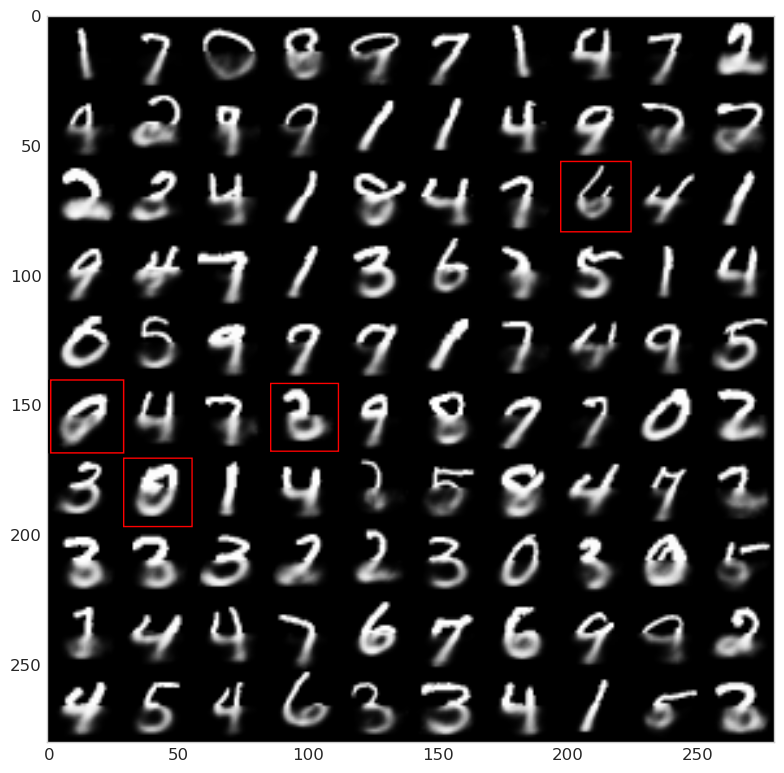}&
 \includegraphics[width=0.4\textwidth,height =5cm]{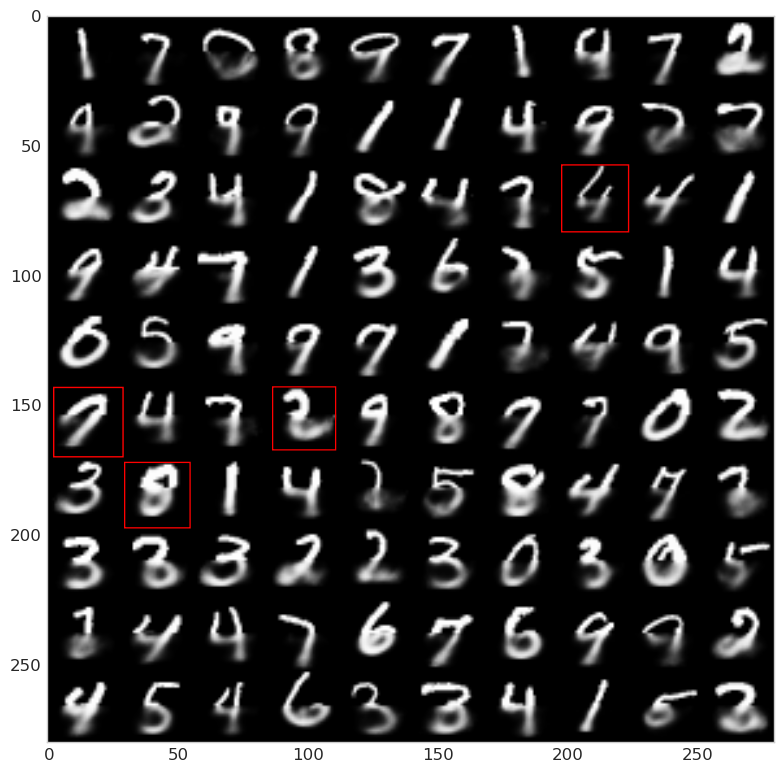}
\end{tabular}
\caption{Randomly selected example results of predicting the lower half of a MNIST digit given its upper half, using a binary stochastic network, which has two binary linear stochastic hidden layers and is trained by the ARM estimator based maximum likelihood estimation. Red squares highlight notable variations between two random draws.}
\label{fig:sbn}
\end{figure}

\end{document}